\documentclass[11pt,twoside]{article}

\usepackage{fullpage}

\input{command-template}

\begin{document}

	\begin{center}
		
		{\bf{\LARGE{Stochastic first-order methods for average-reward Markov decision processes
		}}}

		\vspace*{.2in}
		
		{\large{
				\begin{tabular}{ccc}
					Tianjiao Li$^\star$ & Feiyang Wu$^\dagger$ & Guanghui Lan$^\star$
				\end{tabular}
		}}
		\vspace*{.2in}
		
		\begin{tabular}{c}
			$^\star$School of Industrial and Systems Engineering \\
			$^\dagger$School of Computer Science \\
			Georgia Institute of Technology
		\end{tabular}
		
		\vspace*{.2in}

		
		\vspace*{.2in}
		
		\begin{abstract}
		We study average-reward Markov decision processes (AMDPs) and develop novel first-order methods with strong theoretical guarantees for both policy optimization and policy evaluation. Compared with intensive research efforts in finite sample analysis of policy gradient methods for discounted MDPs, existing studies on policy gradient methods for AMDPs mostly focus on regret bounds under restrictive assumptions, and they often lack guarantees on the overall sample complexities. Towards this end, we develop an average-reward stochastic policy mirror descent (SPMD) method for solving AMDPs with and without regularizers and provide convergence guarantees in terms of the long-term average reward. 
For policy evaluation, existing on-policy methods suffer from sub-optimal convergence rates as well as failure in handling insufficiently random policies due to the lack of exploration in the action space. To remedy these issues, we develop a variance-reduced temporal difference (VRTD) method with linear function approximation for randomized policies along with optimal convergence guarantees, and design an exploratory VRTD method that resolves the exploration issue and provides comparable convergence guarantees. 
By combining the policy evaluation and policy optimization parts, we establish sample complexity results for solving AMDPs under both generative and Markovian noise models.
\revision{}{It is worth noting that when linear function approximation is utilized, our algorithm only needs to update in the low-dimensional parameter space and thus can handle MDPs with large state and action spaces.}
		\end{abstract}
	\end{center}
	
\section{Introduction} \label{sec:intro}
Reinforcement learning (RL) problems are generally formulated as Markov decision processes (MDPs). At each time step, the agent observes the current state and subsequently takes an action, which leads to an instantaneous reward or cost as well as the transition to the next state according to an unknown transition kernel. The eventual goal of the agent is to learn a policy to optimize the reward accrued (or cost paid) over time. MDPs consist of two prominent classes, including the discounted MDPs (DMDPs), which introduce a discount factor to the measure of accumulated reward/cost, and the average-reward MDPs (AMDPs), which measure the success of the system in terms of its steady-state performance. Although DMDP has been studied extensively, it usually leads to poor long-term performance when a system operates for an extended time period. The average-reward criterion used in AMDPs can alleviate this issue, hence has been widely used in many different applications ranging from engineering to natural and social sciences; see, e.g., \citet{kober2013reinforcement,xu2014reinforcement} for surveys of applications.

Since the transition kernel is unknown, one important objective in RL studies is to understand its sample complexity,
which characterizes how many samples are required to obtain a nearly-optimal policy up to a given accuracy.
Finite sample analyses are often
carried out under different observation models.
The two most well-known observation models considered by the literature are the so-called \emph{generative model} (``i.i.d.'' model) and the 
\emph{Markovian model} (``single-trajectory" model).
\revision{}{
In the {\em generative} model, we have access to an oracle that returns the next state when queried at any state-action pair, typically seen in agent training with simulators.
On the other hand, the {\em Markovian} setup does not permit system restarts at arbitrary state-action pairs; once initiated, the trajectory must be followed sequentially. This latter situation often occurs following the deployment of an agent's policy. 
}
To facilitate the analysis, these observation models are also augmented with various assumptions on the behavior of the underlying MDPs (e.g., mixing and ergodicity). 
In comparison with significant progress recently made on
the finite sample analysis of DMDPs,
theoretical studies on the finite sample analysis of AMDPs, especially under the Markovian noise model, are rather limited. 

Under the Markovian noise model, we distinguish two types of RL methods, namely on-policy learning \citep{konda1999actor} and off-policy learning \citep{degris2012off}. \revision{}{In on-policy learning, the samples can be collected based on the currently generated policy, while for off-policy learning the samples are collected based on a behavior policy. Both settings have their areas of applications in practice. The off-policy learning encourages the agent to explore the action space by using a sufficiently random, but possibly highly non-optimal policy \citep{wan2021learning,zhang2021average}, while on-policy learning enables continuous policy improvement in-situ with instantaneous rewards generated by nearly-optimal policies.  
One significant challenge associated with on-policy learning exists
in the exploration for the action space}. \revision{}{In particular, as the policy converges towards the optimal policy (either a deterministic or randomized optimal policy), non-optimal actions are rarely explored given their diminishing probabilities in the policy at hand, thus making the policy evaluation task increasingly difficult.} 
\revision{}{Consequently, existing on-policy learning methods require restrictive assumptions that each policy generated by the algorithm is random enough, meaning that the policy will take each action with a probability bounded away from zero.}  \revision{}{This assumption is unrealistic since in general the optimal policy does not possess this structure, so the goal of policy optimization will hardly be achieved. Up to our knowledge,} in spite of intensive research effort of on-policy learning \citep{tsitsiklis1999average,yu2009convergence,zhang2021finite,mou2022optimal}, one seemly unresolved problem in RL is whether one can design sampling-efficient on-policy learning algorithms for insufficiently random policies \revision{and use them for policy optimization}{to remove the aforementioned problematic assumption}  (see Remark 1 of \citet{lan2021policy}). 

It is well-known that one can formulate AMDPs 
as a linear programming (LP) problem (see, e.g., \citet{puterman2014markov}).
As a result, one can utilize the rich theory in LP and specialize some advanced algorithms for solving AMDPs. In particular, to deal with the unknown transition kernels, current research has been focused on utilizing general stochastic primal-dual saddle point optimization methods (see Chapter 4 of \citet{lan2020first}), and some important progresses along this research direction have been made in \citet{wang2017primal, jin2020efficiently}.
In spite of the elegance of the stochastic LP approach, there exist a few significant limitations associated with this approach.
Firstly, due to the high dimensionality of the state space, the size of LP can be huge. One common practice is to use function approximation with linear kernel or neural networks for the value functions. However, the LP approach cannot deal with function approximations. 
Secondly, in MDP and RL, one usually needs to incorporate certain nonlinear objectives and/or constraints. One prominent example is a regularization given by Kullback–Leibler (KL) divergence, which can be used to model the difference between the desirable policy and a reference policy. The LP approach will fail to handle
these nonlinear components.
Thirdly, in the current stochastic LP approaches for RL, one needs to
have access to a generative model to simulate the probability of moving to the next state for any state-action pair.
As a consequence, this type of approach cannot be applied to the Markovian noise setting.


Some classic dynamic programming methods, especially value iteration, have been adapted for AMDPs with unknown transition kernels
\citep{abounadi2001learning,gosavi2004reinforcement,wan2021learning,zhang2021finite}. These methods are often called value-based methods or Q-learning in the RL literature. Even though value-based methods are extensively studied in the DMDP literature and optimal finite sample guarantees are established \citep{wainwright2019variance, li2020sample}, the only known finite sample analysis of Q-learning for AMDPs up to our knowledge is \citet{zhang2021finite}. However, \citet{zhang2021finite} only focused on the generative model, and the overall sample complexity is significantly worse than LP methods \citep{wang2017primal, jin2020efficiently}. \revision{}{Moreover, the value-based approach can fail in handling linear function approximation \citep{boyan1994generalization, baird1995residual, tsitsiklis1996feature} or policy regularization. In addition, Q-learning is also an off-policy method and cannot be applied to in-situ improvements.}
It is also noteworthy that a recent work of  \citet{jin2021towards} established a reduction method that solves an AMDP by solving an associated DMDP. Using this reduction technique, while the dependence on the problem parameters, e.g., mixing time, is improved, the dependence on the accuracy measure is sub-optimal, i.e., $\order(1/\epsilon^3)$. \revision{}{After we released the initial version of this work, a few more recent works \citep{wang2022near, zhang2023sharper, wang2023optimal, zurek2023span} further improved the reduction argument, and showed that the optimal sample complexities can be achieved in \citet{wang2023optimal, zurek2023span}. However, these optimal results require solving the associated DMDP with model-based approaches, thus failing to handle the policy regularization and function approximation.}

In this paper, we focus on the study of a new class of
first-order methods for solving AMDPs that involves
both \emph{policy evaluation} and \emph{policy improvement} steps.
The framework is also known as the \emph{actor-critic} method in the RL literature, which generalizes the classic policy iteration method. Specifically, the actor aims to perform the policy update while the critic is dedicated to evaluating the value associated with the currently generated policy, which constitutes first-order information of the nonlinear policy optimization problem. 
In comparison with the LP method and value-based method, these policy-based first-order methods offer remarkable flexibility for function approximation as well as policy regularization. Meanwhile, the model-free framework of policy-based methods naturally accommodates both generative and Markovian noise settings.  
Recently, there has been considerable interest in the development and analysis of first-order methods for DMDPs. In particular, sublinear convergence guarantees of policy gradient method and its variants are discussed in \citet{agarwal2021theory} and \citet{shani2020adaptive}. 
\citet{cen2021fast} showed that
natural policy gradient (NPG) converges linearly for entropy-regularized DMDPs.
\citet{lan2021policy} proposed the policy mirror descent (PMD) method which achieved linear convergence for DMDPs, and $\widetilde{\cal O}(1/\epsilon)$ (resp., $\widetilde{\cal O}(1/\epsilon^2)$) sample complexity for regularized (resp., unregularized) DMDPs. Further studies for these types of methods can be found in \citet{khodadadian2021linear,zhan2021policy,lan2022block,xiao2022convergence,li2022homotopic}.  
Nevertheless, theoretical understanding of policy-based methods for AMDPs remains very limited. It is noteworthy that \citet{abbasi2019politex, lazic2021improved} proposed a policy-based method named POLITEX for AMDPs, but their analysis requires restrictive assumptions on the transition kernel and they only established regret bounds without sample complexities (see early related work \citet{bartlett2012regal,auer2008near,ouyang2017learning}).



\subsection{Main contributions}
In this work, we make three distinct contributions.
\begin{itemize}
\item\textbf{Policy mirror descent for AMDPs (Actor)}: Motivated by the stochastic policy mirror descent (SPMD) algorithm for solving DMDPs \citep{lan2021policy}, we develop the average-reward SPMD method. \revision{In the convergence analysis, we handle the expected error and bias error from the critic step separately, leading to sharpened convergence results.}{We consider the linear function approximation in the SPMD updates and reveal an update rule in the low-dimensional space when equipping SPMD with KL-divergence.
Specifically, our algorithm only needs to update in the low-dimensional parameter space and thus can handle MDPs with large state and action spaces.
} 
\revision{For unregularized AMDPs, the SPMD method achieves  $\order(1/\sqrt{T})$ convergence rate. }{In the convergence analysis, we handle the expected error and bias error from the critic step separately, leading to sharpened convergence results. The SPMD method achieves  $\order(1/\sqrt{T})$ convergence rate for unregularized AMDPs, and an improved $\order(1/\omega T)$ convergence rate (where $\omega$ stands for the strongly convex modulus of the regularizer) for strongly convex regularized AMDPs.}
	\item\textbf{Policy evaluation for AMDPs (Critic)}: We first propose a simple multiple trajectory method for policy evaluation in the generative model, 
 \revision{
 which achieves $\order(\tmix \log(1/\epsilon))$ sample complexity for $\ell_\infty$-bound on the bias of the estimators, as well as $\order(\tmix^3/\epsilon)$ sample complexity for the expected squared $\ell_\infty$-error of the estimators.}{along with sample complexities for both bias and expected squared error of the estimators.}  For the on-policy evaluation under Markovian noise, we develop an average-reward variant of the variance-reduced temporal difference (VRTD) algorithm \citep{khamaru2020temporal,li2021accelerated} with linear function approximation, which achieves $\order(\tmix^3 \log(1/\epsilon))$ sample complexity for the bias of the estimators, as well as an instance-dependent sample complexity for expected error of the estimators. The latter complexity improved the one in \citet{zhang2021finite} by at least a factor of $\order(\tmix^2)$. Meanwhile, we develop a novel exploratory variance-reduced temporal difference (EVRTD) that \revision{remedies}{resolves} the exploration issue in the action space when the policy is not random enough. The idea of the EVRTD algorithm naturally extends to DMDPs. 
 \revision{We show that the sample complexity of EVRTD is not much worse than VRTD for both AMDP and DMDP.}{}
	\item\textbf{Sample complexity}:
	We establish the overall sample complexity under both the generative model and Markovian noise model. Under the mixing unichain setting with a generative model, by implementing the SPMD algorithm with multiple trajectory method as the critic, we achieve $\widetilde{\order}\big(\frac{\tmix^3 |\calS||\calA|}{\epsilon^2}\big)$ overall sample complexity.  As for the ergodic setting with the Markovian observation model, we implement SPMD with VRTD/EVRTD as the critic, which achieves
	$\widetilde{\order}(\epsilon^{-2})$ sample complexity. For regularized AMDPs, this bound is improved to $\widetilde{\order}(\tfrac{1}{\omega^2 \epsilon})$ for obtaining an $\epsilon$-optimal policy in terms of distance to the optimal solution.
	To the best of our knowledge, these sample complexities are new in the literature on solving average-reward RL problems with policy-based (first-order) methods.
\end{itemize}

\subsection{Notation} 
For a positive integer n, we define $[n] := \{1, 2, . . . , n\}$. We let $\mathbf{1}_n$ denote the all-ones vector with dimension $n$, and we will write $\ones$ when the dimension is clear in the context. We let $e_j$ denote the $j$-th standard basis vector in $\bbr^D$. Let $\mathbb{I}_S:X\rightarrow \{0,1\}$ denote the indicator function of the subset $S \subseteq X$. 
Given a vector $x\in \bbr^m$, denote its 
$i$-th entry by $x_{(i)}$. In situations in which there is no ambiguity, we also use $x_i$ to denote the $i$-th coordinate of a vector $x$. Let $\|x\|_1:=\tsum_{i=1}^m |x_{(i)}|$, $\|x\|_2:=\sqrt{\tsum_{t=1}^m x_{(i)}^2}$ and $\|x\|_\infty:=\max_{i\in [m]}|x_{(i)}|$ denote the $\ell_1$, $\ell_2$ and $\ell_\infty$-norms respectively. For a random variable $X\in\bbr$, let $\|X\|_{\psi_1}:=\inf \{t>0: \bbe [\exp(|X|/t)]\leq 2\}$ and $\|X\|_{\psi_2}:=\inf \{t>0: \bbe [\exp(X^2/t^2)]\leq 2\}$ denote the sub-exponential and sub-gaussian norm respectively. Given a matrix $A$, denote its $(i,j)$-th entry by $A_{i,j}$. Let $\|A\|_2$ denote the spectral norm of matrix $A$. Let $\|A\|_\infty:=\max_{\|x\|_\infty =1}\|Ax\|_\infty$ denote the $\ell_\infty$-operator norm of $A$. We let $\lambda_{\min} (A)$ denote the smallest eigenvalue of a square matrix $A$. 
For a symmetric positive (semi)definite matrix $A$, define  $\langle x, y \rangle_A := x^\top A y$ and the associated (semi)norm $\|x\|_A := \sqrt{x^\top A x}$. We refer to $\|x\|_A$ as the $\ell_A$-norm of $x$. \revision{}{We use $\spannorm{x}:= \max_{i} x_i - \min_{i} x_i$ to define the span semi-norm of a vector $x$.} We let $\Delta_{n}$ denote a simplex with dimension $n$. 

\section{Background and problem setting}
In this section, we introduce some preliminaries for (regularized) AMDPs and the concrete observation models that we study. 
\subsection{Average-reward Markov decision processes}
An AMDP is described by a tuple $\mathcal{M}:=(\calS,\calA,\mathsf{P},c)$, where $\mathcal{S}$ denotes a finite state space, $\calA$ denotes a finite action space, $\mathsf{P}$  is the transition kernel, and $c$ is the cost function. At each time step, the agent takes an action $a\in\calA$ at the current state $s\in\calS$, then the system moves to some state $s'\in \mathcal{S}$ with probability $\mathsf{P}(s' |s,a)$, while the agent pays the instantaneous cost $c(s, a)$ (or receives 
the $-c(s,a)$ instantaneous reward). The goal of the agent is to determine a policy which minimizes the long-term cost. A randomized stationary policy of an MDP is a mapping $\pi: \calS\rightarrow \Delta_{|\calA|}$ that maps a state $s\in\calS$ to a fixed distribution over actions. We denote $P_\pi \in \bbr^{|\calS|\times|\calS|}$ and $P^\pi \in \bbr^{(|\calS|\times|\calA|)\times(|\calS|\times|\calA|)}$ as the state transition matrix and state-action transition matrix induced by policy $\pi$, respectively, where $P_\pi(s,s') = \sum_{a\in \calA} \pi(a|s)\mathsf{P}(s'|s,a)$ and $P^\pi((s,a),(s',a')) = \mathsf{P}(s'|s,a)\pi(a'|s')$. Under a given policy $\pi$, the (regularized) long-run average cost/reward for state $s\in \calS$ is defined as  
\begin{align}\label{def_avg_rwd}
	\rho^\pi(s) : =\lim _{T \rightarrow \infty} \tfrac{1}{T}\bbe_\pi\big[\tsum_{t=0}^{T-1} (c(s_t,a_t)+h^\pi(s_t))\big|s_0=s\big].
\end{align}
Here, $h^\pi$ is a closed convex function with respect to the policy $\pi$, i.e., there exists $\omega\geq 0$ such that 
\begin{align}\label{convex_regularizer}
	h^\pi(s) - [h^{\pi'}(s) + \langle \nabla h^{\pi'} (s,\cdot), \pi(\cdot|s)-\pi'(\cdot|s)\rangle]\geq \omega D_{\pi'}^\pi(s),
\end{align}
where $\nabla h^{\pi'}$ denotes the subgradient of $h$ at $\pi'$ and $D_{\pi'}^\pi(s)$ is the Bregman distance between $\pi$ and $\pi'$. In this paper, we fix $D_{\pi'}^\pi(s)$ to be the Kullback-Leibler (KL) divergence, i.e., $D_{\pi'}^\pi(s)=\sum_{a\in\calA}\pi(a|s) \log\frac{\pi(a|s)}{\pi'(a|s)}$.
It should be noted that our algorithmic framework allows us to use other distance generating functions, e.g., $\|\cdot\|_p^2$ for some $p>1$. Clearly, if $h^\pi=0$, then Eq.~\eqref{def_avg_rwd} reduces to the classical unregularized average cost function, while if $h^\pi(s)= \omega D_{\pi_0}^\pi(s)$ for $\omega>0$ then Eq.~\eqref{def_avg_rwd} defines the average cost of the entropy-regularized MDP.

In this work, we consider the \emph{unichain} setting, where the induced Markov chain consists of a single recurrent class plus a possibly empty set of transient states for any deterministic stationary policy. We further restrict our attention to \emph{mixing} AMDPs, which satisfy the following assumption. 

\begin{assumption}\label{assump_mixing}
	An AMDP instance is mixing if for any feasible policy $\pi$, there exists a stationary distribution $\nu^\pi$ such that for any distribution $q \in \Delta_{|\mathcal{S}|}$, the induced Markov chain has mixing time bounded by $\tmix < \infty$, where $\tmix$ is defined as 
	\begin{align*}
		\tmix := \max_{\pi}\big[\arg \min_{t\geq 1} \big\{\max_{q\in \Delta_{|\mathcal{S}|}} ||((P_\pi)^t)^\top q - \nu^\pi||_1\leq \tfrac{1}{2}\big\}\big].
	\end{align*}
\end{assumption}
\noindent \textcolor{black}{This is a widely used regularity condition for AMDPs (see, e.g., \citet{wang2017primal,jin2020efficiently}), which can be ensured by adding the \emph{aperiodic} assumption to the unichain setting.} As a result, for any feasible policy $\pi$, the average reward function does not depend on the initial state  (see Section 8 of \cite{puterman2014markov}), and we can define a scalar which satisfies for all $s\in \calS$
\begin{align*}
	\rho^\pi(s) = \avgrwd := \langle \nu^\pi, c^\pi + h^\pi \rangle, 
\end{align*}
where $\nu^\pi$ is the stationary distribution of the states induced by the policy $\pi$, and $c^\pi$ is the expected cost function induced by policy $\pi$, i.e., $c^\pi(s) = \sum_{a\in \calA} c(s,a) \pi(a|s)$. Given that one can view $\rho^\pi$ as a function of $\pi$, we will use the notation $\rho^\pi$ and $\rho(\pi)$ interchangeably. Since the average-reward function only captures the ``steady-state'' behavior of the underlying policy, the literature utilizes the following differential functions to capture the ``transient'' behavior of the underlying policy. 
Specifically, the \emph{basic differential value function} (also called bias function in \cite{puterman2014markov})  is defined as
\begin{align*}
	\bar {V}^{\pi}(s):=\bbe_\pi\left[\tsum_{t=0}^{\infty}\left(c\left(s_{t}, a_{t}\right)+h^\pi(s_t)-\avgrwd\right) \big| s_{0}=s\right],
\end{align*}
and the \emph{basic differential action-value function (or basic differential Q-function)}  is defined as
\begin{align*}
	\bar {Q}^{\pi}(s, a):=\bbe_\pi\left[\tsum_{t=0}^{\infty}\left(c\left(s_{t}, a_{t}\right)+h^\pi(s_t)-\avgrwd\right) \big| s_{0}=s, a_{0}=a\right].
\end{align*}
\textcolor{black}{Moreover, we define the sets of \emph{differential value functions} and \emph{differential action-value functions (differential Q-functions)} as the solution sets of the following Bellman equations, respectively,
	\begin{align}
		V &= c^\pi + h^\pi - \rho^\pi \ones_{|\calS|} + P_\pi V\label{bellman_0_0}\\
		Q &= c + \widetilde h^\pi - \rho^\pi \ones_{|\calS|\times|\calA|} + P^\pi 	Q, \label{bellman_0}
	\end{align}
	where $\widetilde h^\pi (s,a) = h^\pi (s)$. Under Assumption~\ref{assump_mixing}, the solution of Eq.~\eqref{bellman_0_0} (resp., Eq. \eqref{bellman_0}) is unique up to an additive constant, which takes the form of $\{\bar V^\pi+b\ones_{|\calS|}|b\in\bbr\}$ (resp., $\{\bar Q^\pi+b\ones_{|\calS|\times |\calA|}|b\in\bbr\}$).}

Finally, our goal in AMDP is to find an optimal policy $\pi^*$ that minimizes the long-run average cost. Consequently,
the concerned \emph{policy optimization} problem can be formulated as:
\begin{equation}\label{def_problem}
	\begin{aligned}
		\min _{\pi} \quad \rho(\pi)\quad \text{s.t.} \quad \pi(\cdot|s) \in \Delta_{|\mathcal{A}|},~\forall s\in \mathcal{S}.
	\end{aligned}
\end{equation}

\subsection{Observation models}

It is typically assumed in RL that there is sample access to the stochastic transition kernel and cost function. As mentioned in Section~\ref{sec:intro}, this study differentiates between two fundamental observation models in RL: the generative model (also known as the ``i.i.d." model) and the Markovian noise model (also known as the ``single trajectory" model).

For the generative model, we utilize an oracle that, when queried with a state-action pair $(s,a)$, provides the subsequent state $s'$ and the immediate cost $c(s,a)$. If the cost function is noiseless, this scenario implicitly implies a complete understanding of the cost function.

In the Markovian noise model, we assume that all observed samples stem from a single trajectory of a Markov chain induced by the policy $\pi$ under evaluation. Specifically, we are presented with a sequence of state-action pairs $\{(s_0,a_0), (s_1,a_1),...\}$ that are generated according to the policy $\pi$ and the transition kernel $\mathsf{P}$. 
Since solving problem \eqref{def_problem} requires sample access to all the states in $\calS$, we introduce the following natural assumption for the Markovian noise model. Clearly, this assumption is stronger than Assumption~\ref{assump_mixing} since it indicates that the stationary distribution of any feasible policy is strictly positive.
\begin{assumption}\label{assump_rho_0}
	For any feasible policy $\pi$, the Markov chian induced by policy $\pi$ is irreducible and aperiodic.
\end{assumption}
\noindent 

Given the stochastic nature of the RL setting, a key problem for us is \emph{policy evaluation}, which involves estimating the long-run average reward/cost and the differential value function or differential action-value function for a specific policy under various observation models. By utilizing policy evaluation techniques, we are able to develop effective first-order algorithms to address the policy optimization problem~\eqref{def_problem}.

The rest of this paper is organized as follows. 
In Section~\ref{sec:policy_optimization}, we develop the stochastic policy mirror descent (SPMD) method for solving AMDPs.
In Section~\ref{sec:policy_evaluation}, we propose efficient methods to solve the policy evaluation problem under both types of observation models and establish the overall sample complexity of the SPMD method. In Section~\ref{sec:numerical}, we provide numerical experiments to corroborate our theoretical guarantees. 

\section{Actor: Stochastic policy mirror descent for AMDPs}\label{sec:policy_optimization}
Our goal in this section is to develop a computationally efficient first-order method to solve the AMDP problem~\eqref{def_problem} in the stochastic setting. To start with, we first explore the first-order information of the average reward/cost function and establish a variational inequality (VI) formulation of the problem.

The following simple lemma provides the gradient of the average-reward function \revision{}{and the proof follows from routine differentiation.}
\begin{lemma}[Gradient of average reward/cost]\label{lemma:gradient}
	For any $s\in\calS$ and $a \in \calA$,
	\begin{align*}
		\tfrac{\partial \rho(\pi)}{\partial \pi(a|s)} = \nu^\pi(s)\left( \bar {Q}^{\pi}(s,a) + \nabla h^{\pi}(s,a)\right).
	\end{align*}
	where $\nabla h^{\pi}(s,\cdot)$ denotes the gradient of $h^\pi(s)$ w.r.t. policy $\pi$.
\end{lemma}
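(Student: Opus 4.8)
The plan is to avoid differentiating the stationary distribution $\nu^\pi$ directly — the awkward object — and instead to differentiate the differential-value Bellman equation, then contract against $\nu^\pi$ and exploit its stationarity so that all of the unknown derivative-of-$\bar V^\pi$ terms telescope away. Concretely, I would first record the fixed-point identity $\bar V^\pi = c^\pi + h^\pi - \rho^\pi\ones + P_\pi\bar V^\pi$ componentwise, i.e. $\rho^\pi + \bar V^\pi(s') = c^\pi(s') + h^\pi(s') + (P_\pi\bar V^\pi)(s')$ for every $s'\in\calS$, and then take the partial derivative of both sides with respect to the single coordinate $\pi(a|s)$, treating the entries of $\pi$ as free variables. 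Only the $s'=s$ terms pick up explicit contributions: $\partial c^\pi(s')/\partial\pi(a|s)=\ind{s'=s}\,c(s,a)$, $\partial h^\pi(s')/\partial\pi(a|s)=\ind{s'=s}\,\nabla h^\pi(s,a)$ (since each per-state regularizer depends only on $\pi(\cdot|s')$), and from $P_\pi$ a term $\ind{s'=s}\sum_{s''}\mathsf{P}(s''|s,a)\bar V^\pi(s'')$; the remaining piece is $\sum_{s''}P_\pi(s',s'')\,\partial\bar V^\pi(s'')/\partial\pi(a|s)$.

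The key maneuver is to multiply the differentiated identity by $\nu^\pi(s')$ and sum over $s'$. Using $\sum_{s'}\nu^\pi(s')=1$ on the $\partial\rho^\pi$ term and the stationarity identity $(\nu^\pi)^\top P_\pi=(\nu^\pi)^\top$ on the $\partial\bar V^\pi$ term, the $\nu^\pi$-weighted sum of $\partial\bar V^\pi(s')/\partial\pi(a|s)$ appears identically on both sides and cancels. What survives is exactly $\partial\rho^\pi/\partial\pi(a|s)=\nu^\pi(s)\bigl[c(s,a)+\nabla h^\pi(s,a)+\sum_{s''}\mathsf{P}(s''|s,a)\bar V^\pi(s'')\bigr]$. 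Finally I would substitute the differential $Q$-Bellman relation $c(s,a)+\sum_{s''}\mathsf{P}(s''|s,a)\bar V^\pi(s'')=\bar Q^\pi(s,a)+\rho^\pi-h^\pi(s)$ to rewrite this as $\nu^\pi(s)\bigl(\bar Q^\pi(s,a)+\nabla h^\pi(s,a)\bigr)$ plus the term $\nu^\pi(s)\bigl(\rho^\pi-h^\pi(s)\bigr)$, which is constant in $a$ for each fixed $s$; since this baseline is annihilated whenever the gradient is paired against a difference of feasible policies (as it always is in the ensuing variational inequality / mirror-descent development on the simplex $\sum_a\pi(a|s)=1$), it is dropped, yielding the stated expression.

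The main obstacle I anticipate is not the algebra but the two justifications underpinning it. First, that $\rho^\pi$ and $\bar V^\pi$ are genuinely differentiable in $\pi$, so that the partial derivatives above exist and the cancellation is legitimate — this I would obtain from the unichain (plus aperiodicity) assumption, under which $(\bar V^\pi,\rho^\pi)$ solves a linear system whose coefficient matrix is invertible on the relevant subspace and depends smoothly on $\pi$, so the implicit function theorem applies. Second, the careful bookkeeping of the baseline $\nu^\pi(s)(\rho^\pi-h^\pi(s))$: I would argue that reporting the gradient modulo an action-independent shift is precisely what the downstream analysis requires, so the clean representative in the statement is the correct one to use. A secondary point worth emphasizing is that the whole argument is tidy exactly because contracting with the stationary $\nu^\pi$ lets me sidestep ever computing $\partial\nu^\pi/\partial\pi(a|s)$, which would be the genuinely painful route were one to differentiate $\rho^\pi=\langle\nu^\pi,\,c^\pi+h^\pi\rangle$ head-on.
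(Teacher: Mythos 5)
Your proof is sound and rests on the same key maneuver as the paper's — contracting the differentiated identity against $\nu^\pi$ so that stationarity, $(\nu^\pi)^\top P_\pi=(\nu^\pi)^\top$, cancels every $\partial \bar V^\pi/\partial\pi(a|s)$ term — but you differentiate a different identity, and this changes what comes out. The paper differentiates the policy-weighted decomposition $\bar V^\pi(s_0)=\sum_{a'}\pi(a'|s_0)\bar Q^\pi(s_0,a')$ combined with the $Q$-Bellman recursion; the product rule then delivers the full term $\ind{s_0=s}\,\bar Q^\pi(s,a)$, including its $h^\pi(s)-\rho^\pi$ part, and after the $\nu^\pi$-contraction the stated formula emerges exactly. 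You instead differentiate the already policy-summed $V$-Bellman equation, in which $h^\pi(s')-\rho^\pi$ carries coefficient $1$ rather than $\sum_{a'}\pi(a'|s')$; the two identities coincide on the simplex but are different functions of the free coordinates $\pi(a|s)$, so their partial derivatives differ by a component normal to the simplex (constant in $a$) — precisely your surplus baseline $\nu^\pi(s)\bigl(\rho^\pi-h^\pi(s)\bigr)$. Your resolution — that per-state constant shifts are annihilated against any difference of feasible policies, which is the only way the lemma is used downstream (the variational inequality \eqref{VI_formulation} and the SPMD analysis) — is legitimate, and your implicit-function-theorem remark addresses a differentiability point the paper silently assumes. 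What the paper's bookkeeping buys is the stated identity on the nose, with no appeal to equivalence classes of gradients; what yours buys is a more mechanical computation that never touches $\bar Q^\pi$ until the final substitution, at the price of establishing the formula only modulo an action-independent shift, which you should state explicitly rather than leave as a discussion point.
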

In view of lemma~\ref{lemma:gradient}, the optimality condition of \eqref{def_problem} suggests us to solve the following VI 
\begin{align}\label{VI_formulation}
	\bbe_{s\in \nu^*} \big[ \langle \bar  Q^{\pi^*}, \pi(\cdot|s) - \pi^*(\cdot|s) \rangle + h^{\pi}(s) - h^{\pi^*}(s) \big]\geq 0.
\end{align}
This VI can be solved efficiently since it satisfies certain generalized monotonicity condition \citep{FacPang03,dang2015convergence,kotsalis2020simple1}, thanks to the following performance difference lemma (generalized from \cite{kakade2002approximately, zhang2021policy}). 
\begin{lemma}[Performance difference lemma]\label{lemma:perf-diff}
	Given any two policies $\pi$ and $\pi'$, we have
	\begin{equation}\label{perf-diff}
		\rho(\pi') - \rho(\pi) 
		= \mathbb{E}_{s\sim \nu^{\pi'}} \big[\langle \bar {Q}^{\pi}(s,\cdot), \pi'(\cdot|s) - \pi(\cdot|s)\rangle + h^{\pi'}(s)-h^{\pi}(s)\big].
	\end{equation}
\end{lemma}
As a consequence of Lemma~\ref{lemma:perf-diff}, we arrive at
\begin{align*}
	\rho(\pi^{*}) - \rho(\pi) = \mathbb{E}_{s\sim \nu^*} \big[\left\langle \bar {Q}^{\pi}\left(s, \cdot\right), \pi^{*}\left(\cdot | s \right)-\pi\left(\cdot | s\right)\right\rangle + h^{\pi^*}(s)-h^{\pi}(s)\big].
\end{align*}
Since $\rho(\pi)- \rho(\pi^*)\geq 0$ for any feasible policy $\pi$, we conclude that
\begin{equation}\label{eq_strong_montone}
	\mathbb{E}_{s\sim \nu^*} \big[\langle \bar {Q}^{\pi}\left(s, \cdot\right), \pi\left(\cdot | s\right)-\pi^*\left(\cdot | s\right)\rangle + h^{\pi}(s)-h^{\pi^*}(s)\big]
 = \rho(\pi) - \rho(\pi^*) \geq 0,
\end{equation}
which implies the VI in Ineq. \eqref{VI_formulation} satisfies the generalized monotonicity condition. 
We will design efficient stochastic optimization algorithms and provide theoretical guarantees for solving this VI in the following several subsections.

\subsection{Stochastic policy mirror descent for AMDPs}
In order to reduce the optimality gap
$\rho(\pi)-\rho(\pi^*)$, 
we intend to minimize the left-hand side of the identity in \eqnok{eq_strong_montone}, which inspires us to introdue the following iterative prox-mapping step:
\begin{equation}
	\pi_{k+1}(\cdot | s)=\underset{p(\cdot | s) \in \Delta_{|\mathcal{A}|}}{\arg \min }\left\{\lambda_{k}\left[\left\langle \bar {Q}^{\pi_{k}}(s, \cdot), p(\cdot | s)\right\rangle+h^{p}(s)\right]+D_{\pi_{k}}^{p}(s)\right\}, \forall s \in \mathcal{S},
\end{equation}
Here, the proximal term $D_{\pi_{k}}^{p}$ is added
so that $\pi_{k+1}$ does not move too far away from $\pi_k$.
Note however that in the RL setting the exact basic differential Q-function $\bar  Q^{\pi_k}$ is unavailable, thus we replace it with its stochastic estimator calculated with samples $\zeta_k$, denoted as $\stochQ{k}$, which leads to the following actor-critic type stochastic policy mirror descent (SPMD) algorithm.
\begin{algorithm}[h]\caption{Stochastic policy mirror descent (SPMD) for AMDPs}\label{alg:SPMD}
	\begin{algorithmic}[1]
		\STATE{\textbf{Input}: initial points $\pi_{0}(a|s)=1/|\calA|, ~\forall a \in \calA, s\in \calS$ and stepsize parameters $\lambda_k$; }
		\FOR{$k=0,1,\cdots, K$}
		\STATE{\textbf{Critic step}: Implement a policy evaluation algorithm to evaluate $\bar Q^{\pi_k}$ \revision{}{with samples $\zeta_k$},
			\begin{equation}\label{critic_step}
				\stochQ{k} = \text{critic}(\pi_k, \zeta_k). 
		\end{equation}}
		\STATE{
			\textbf{Actor step}: \revision{Implement}{Perform} the stochastic policy mirror descent update,
			\begin{equation}\label{SPMD_update}
				\pi_{k+1}(\cdot | s)=\underset{p(\cdot | s) \in \Delta_{|\mathcal{A}|}}{\arg \min }\left\{\lambda_{k}\left[\langle \stochQ{k}(s, \cdot), p(\cdot | s)\rangle+h^{p}(s)\right]+D_{\pi_{k}}^{p}(s)\right\}, \forall s \in \mathcal{S}.
			\end{equation}
		}
		\ENDFOR
	\end{algorithmic}
\end{algorithm}

\revision{}{To ensure the convergence of the SPMD method, we make the following assumptions on the evaluation error. In Section~\ref{sec:policy_evaluation}, we will propose concrete policy evaluation approaches with the desired accuracy guarantees.
\begin{assumption}\label{assump_critic_error}
Define $\widetilde{\mathcal{Q}}^{\pi_{k}}:=\mathbb{E}_{\zeta_{k}}\left[\stochQ{k}\right]$. There exist some $\varsigma, \sigma, \kappa \geq 0$ such that for any $k \geq 0$
\begin{align}
\spannorm{\widetilde{\mathcal{Q}}^{\pi_{k}}-\bar Q^{\pi_{k}}}  &\leq \varsigma,\label{eq-stochastic-assumption-1}\\
	\label{eq-stochastic-assumption-2}
\mathbb{E}_{\zeta_{k}}\big[\spannorm{ \stochQ{k} -\bar Q^{\pi_{k}}}^2\big]  &\leq \sigma^{2}, \\\qquad\mathbb{E}_{\zeta_{k}}\big[\spannorm{ \stochQ{k} }^2\big] &\leq \kappa^2.
\end{align}
\end{assumption}
}

Note that we \revision{utilize these error measures}{define the error measures in the span norm} because the update rule of Algorithm~\ref{alg:SPMD} is invariant with \revision{}{additive all-one vector} $\ones$ on $\stochQ{k}$. 
\revision{As shown in Section~\ref{sec:policy_evaluation}}{We will show in Section~\ref{sec:policy_evaluation} that} the bias term $\varsigma$ can be reduced much faster than the expected error term $\sigma^2$, which enables us to sharpen the analysis to obtain more efficient convergence guarantees \revision{}{ than one would obtain without distinguishing the bias and variance terms}. For convenience, we denote the $\sigma$-field $\calF_s:=\sigma(\zeta_0,...,\zeta_s)$ and
$
	\delta_{k}:=\stochQ{k}-\bar {Q}^{\pi_{k}}.
$

\subsection{SPMD with linear function approximation} \label{subsec_linear_function_appox}
\revision{}{
In many reinforcement learning applications, the state and action spaces (especially the state space) can be very large, making the original SPMD method, as well as the policy evaluation algorithms, computationally inefficient and memory-intensive. 
To address this issue, we discuss the incorporation of a linear function approximation scheme for the basic differential Q-functions $\bar Q^{\pi_k}$ within the SPMD method in this subsection.

In particular, one can choose $$\mathbb{S}:=\text{span}\{\psi_1, ..., \psi_d\},$$ for $d$ linearly independent basis vectors $\psi_1, ..., \psi_d \in \bbr^{|\calS| |\calA|}$. For each state action pair $s\in \calS,~ a \in \calA$, we let $\psi(s,a): = [\psi_1(s,a),\psi_2(s,a),...,\psi_d(s,a)]^\top$ denote its feature vector. For simplicity, we define $\Psi := [\psi_1, ..., \psi_d] \in \bbr^{(|\calS||\calA|)\times d}$. It should be noted that we fix the feature matrix $\Psi$ for all SPMD iterations, and use $\theta_k \in \bbr^d$ to denote the low-dimensional estimator for evaluating policy $\pi_k$, i.e., $\stochQ{k} = \Psi \theta_k$. 

    Due to the linear function approximation, we will introduce an approximation error when $\bar Q^{\pi_k}\notin\mathbb{S}$. Consequently, the bias term $\varsigma$ defined in \eqref{eq-stochastic-assumption-1} cannot be controlled towards $0$. However, in the following sections, we will show that the bias will not be accumulated while running the SPMD method; instead, it can converge to the optimal solution up to the upper bound  $\varsigma$ for the bias.
    By contrast, some alternative algorithms, such as the stochastic Q-learning methods, could diverge with simple function approximation due to the absence of contraction properties for the Bellman optimality equation with function approximation, see \cite{baird1995residual,tsitsiklis1996feature}.

Moreover, we do not need to enumerate all state-action pairs for the policy updating
step \eqref{SPMD_update} in the SPMD method.
By utilizing the optimality condition of the prox-mapping updates \eqref{SPMD_update} of SPMD under entropy regularization, we obtain the following SPMD update rule for the parameters $\widetilde \theta_k$ in the lower-dimensional space.

\begin{lemma}
Assume $h^\pi(s) := \omega\cdot \tsum_{a \in \calA} \pi(a|s) \log \pi(a|s)$ for some $\omega \geq 0$, and $\pi_0(a|s) = \tfrac{1}{|\calA|}$ for any $s \in \calA, a \in \calA$. Consider the linear parametrization $\stochQ{k} = \Psi \theta_k$. Then  step~\eqref{SPMD_update} in Algorithm~\ref{alg:SPMD} can be equivalently written in $\mathbb{R}^d$ as 
\begin{align}
&\widetilde \theta_0 = \mathbf{0},\nn\\
&\widetilde \theta_{k+1} = -\tfrac{\lambda_k}{1+\lambda_k \omega} \theta_k + \tfrac{1}{1+\lambda_k \omega} \widetilde \theta_k, \quad k \geq 0. \label{d_SPMD}
\end{align}
\end{lemma}
\proof{Proof.}
By the definition of $D_{\pi}^{\pi'}$ and $h^\pi$, we can equivalently written Eq.~\eqref{SPMD_update} as
\begin{align*}
				\pi_{k+1}(\cdot | s)&=\underset{p(\cdot | s) \in \Delta_{|\mathcal{A}|}}{\arg \min }\left\{\lambda_{k}\langle \stochQ{k}(s, \cdot) - \omega \log\pi_{k}(\cdot|s), p(\cdot | s)\rangle+ (1+\lambda_k \omega )\langle \log p(\cdot|s), p(\cdot|s)\rangle\right\}, \forall s \in \mathcal{S}\\
    &=\underset{p(\cdot | s) \in \Delta_{|\mathcal{A}|}}{\arg \min }\left\{\lambda_{k}\langle (\Psi \theta_k)(s,\cdot) - \omega \log\pi_{k}(\cdot|s), p(\cdot | s)\rangle+ (1+\lambda_k\omega )\langle \log p(\cdot|s), p(\cdot|s)\rangle\right\}, \forall s \in \mathcal{S}.
			\end{align*}
   Then by checking the Karush-Kuhn-Tucker conditions, we obtain that 
   \begin{align*}
   \pi_{k+1}(\cdot | s) \propto \exp \left( -\tfrac{\lambda_k}{1+\lambda_k \omega}(\Psi \theta_k)(s, \cdot) + \tfrac{1}{1+\lambda_k \omega}\log \pi_k(\cdot|s)\right).
   \end{align*}
   Then by using the above equation recursively and noticing that $\pi_0(a|s) = \tfrac{1}{|\calA|}$, it is easy to see that 
   $$
   \pi_{k+1}(\cdot |s) \propto \exp\big((\Psi\widetilde \theta_{k+1})(s,\cdot) \big),
   $$
   where $\widetilde \theta_{k+1}$ is defined in Eqs.~\eqref{d_SPMD}.
\endproof


From Eq.~\eqref{d_SPMD}, the SPMD updates can be fully implemented in the low-dimensional subspace $\bbr^d$ if the linear parametrization of the differential-Q-function $\bar Q^{\pi_k}$, namely $\theta_k$, is given in each SPMD iteration. 

To implement the SPMD method, we need to combine the above update rule with a policy evaluation subroutine that can handle linear function approximation. Note that in order to evaluate the policy $\pi_k$ under the on-policy setting, we need to generate the Markov trajectory following policy $\pi_k$. Nevertheless, there is no need to calculate the whole policy $\pi_k \in \bbr^{|\calS||\calA|}$ and save it in memory. Instead, we only need to calculate the policy $\pi(\cdot|s)$ following 
\begin{align}
\pi_{k+1}(\cdot |s) \propto \exp\big((\Psi\widetilde \theta_{k+1})(s,\cdot) \big), \label{d_SPMD_2}
\end{align}
once reaching a new state $s$, which only requires a computational complexity of $\order(|\calA|\times d)$. This structure makes the algorithm implementable when the state space is super large or even infinite. In Section~\ref{sec_VRTD}, We will propose some efficient on-policy evaluation approaches with linear function approximation.
}

\subsection{SPMD for unregularized AMDPs}\label{subsec_unreg} \revision{}{In this subsection, we start proving the convergence guarantees for SPMD.} 
\revision{}{The following lemma characterizes the convergence behavior for each iteration of Algorithm~\ref{alg:SPMD}. Recall the definition of the convexity modulus $\omega\geq 0$ in Ineq.~\eqref{convex_regularizer}.
}
\revision{}{\begin{lemma}\label{lemma_SPMD_convergence} Assume Assumption~\ref{assump_critic_error} holds. If $h^\pi(s)$ is $L$-Lipschitz, we have for any $k\geq 0$,
\begin{align}\label{SPMD_convergence}
\bbe\left[\lambda_k\big( \rho(\pi_k) - \rho(\pi^*) + (1+\lambda_k \omega) D(\pi_{k+1}, \pi^*) \big) \right] \leq \bbe[D(\pi_k, \pi^*)] + \lambda_k\varsigma + \lambda_k^2(\tfrac{\kappa^2}{4} + L^2),
\end{align}
where $D({\pi},{\pi'}):=\bbe_{s\sim\nu^*}[D_{\pi}^{\pi'}(s)]$, and $\kappa$, $\varsigma$ are defined in Assumption~\ref{assump_critic_error}.
\end{lemma}
}
\revision{}{
\proof{Proof.}
First, by the optimality condition of \eqref{SPMD_update}, we have the following inequality (three-point lemma, see, e.g., Lemma 3.5 of \cite{lan2020first}). For any $p(\cdot|s) \in \Delta_{|\calA|}$,
\begin{align}\label{three_point_ineq}
		&\lambda_{k}\left[\langle \stochQ{k}(s, \cdot), \pi_{k+1}(\cdot | s)-p(\cdot | s)\rangle+h^{\pi_{k+1}}(s)-h^{p}(s)\right]+D_{\pi_{k}}^{\pi_{k+1}}(s) \nn\\
		&\leq D_{\pi_{k}}^{p}(s)-\left(1+\lambda_{k} \omega\right) D_{\pi_{k+1}}^{p}(s).
	\end{align}
 Set $p = \pi^*$. Notice that we have
 \begin{align*}
&\lambda_{k}\left(\langle \stochQ{k}(s, \cdot), \pi_{k+1}(\cdot | s)-\pi^*(\cdot | s)\rangle+h^{\pi_{k+1}}(s)-h^{\pi^*}(s)\right)+D_{\pi_{k}}^{\pi_{k+1}}(s)\\
& = \lambda_{k}\left(\langle \stochQ{k}(s, \cdot), \pi_{k}(\cdot | s)-\pi^*(\cdot | s)\rangle+h^{\pi_{k}}(s)-h^{\pi^*}(s)\right)\\
&\quad + \left[\lambda_{k}\left(\langle \stochQ{k}(s, \cdot), \pi_{k+1}(\cdot | s)-\pi_{k}(\cdot | s)\rangle+h^{\pi_{k+1}}(s)-h^{\pi_k}(s)\right) +D_{\pi_{k}}^{\pi_{k+1}}(s)\right]\\
& \overset{(i)}\geq \lambda_{k}\left(\langle \stochQ{k}(s, \cdot), \pi_{k}(\cdot | s)-\pi^*(\cdot | s)\rangle+h^{\pi_{k}}(s)-h^{\pi^*}(s)\right)\\
&\quad +\left[-\tfrac{\lambda_k}{2} \spannorm{\stochQ{k}(s, \cdot)}\|\pi_{k+1}(\cdot|s) - \pi_k(\cdot|s)\|_1 - \lambda_k L \|\pi_{k+1}(\cdot|s) - \pi_k(\cdot|s)\|_1 + \tfrac{1}{2}\|\pi_{k+1}(\cdot|s) - \pi_k(\cdot|s)\|_1^2\right]\\
& \overset{(ii)}\geq \lambda_{k}\left(\langle \stochQ{k}(s, \cdot), \pi_{k}(\cdot | s)-\pi^*(\cdot | s)\rangle+h^{\pi_{k}}(s)-h^{\pi^*}(s)\right) - \tfrac{\lambda_k^2}{4} \spannorm{\stochQ{k}(s, \cdot)}^2 - \lambda_k^2 L^2,
 \end{align*}
 where step (i) follows from $\langle \ones, \pi_{k+1}(\cdot|s)-\pi_{k}(\cdot|s) \rangle=0$, Cauchy-Schwarz inequality, the $L$-Lipschitz of $h^\pi$ and $D_{\pi_{k}}^{\pi_{k+1}}(s)\geq \tfrac{1}{2}\|\pi_{k+1}(\cdot|s)-\pi_{k}(\cdot|s)\|_1^2$, and step (ii) follows from Young's inequality. Combining the previous two inequalities with $p = \pi^*$, we obtain
\begin{align*}
&\lambda_{k}\left(\langle \stochQ{k}(s, \cdot), \pi_{k}(\cdot | s)-\pi^*(\cdot | s)\rangle+h^{\pi_{k}}(s)-h^{\pi^*}(s)\right)\\
&\leq D_{\pi_k}^{\pi^*}(s) - \left(1+\lambda_{k} \omega\right) D_{\pi_{k+1}}^{\pi^*}(s) + \tfrac{\lambda_k^2}{4} \spannorm{\stochQ{k}(s, \cdot)}^2 + \lambda_k^2 L^2,
\end{align*}
 By taking expectation with respect to $\calF_{k-1}$ and using Assumption~\ref{assump_critic_error}, we obtain that
 \begin{align*}
&\lambda_{k}\left(\langle  \bar  Q^{\pi_k}(s, \cdot), \pi_{k}(\cdot | s)-\pi^*(\cdot | s)\rangle+h^{\pi_{k}}(s)-h^{\pi^*}(s)\right)\\
&\leq D_{\pi_k}^{\pi^*}(s) - \left(1+\lambda_{k} \omega\right) D_{\pi_{k+1}}^{\pi^*}(s) + \tfrac{\lambda_k}{2}\spannorm{\widetilde{\mathcal{Q}}^{\pi_{k}}-\bar Q^{\pi_{k}}}\|\pi_k(\cdot|s) - \pi^*(\cdot|s)\|_1 + \tfrac{\lambda_k^2\bbe[\spannorm{\stochQ{k}(s, \cdot)}^2|\calF_{k-1}] }{4} + \lambda_k^2 L^2\\
&\leq D_{\pi_k}^{\pi^*}(s) - \left(1+\lambda_{k} \omega\right) D_{\pi_{k+1}}^{\pi^*}(s) + \lambda_k\varsigma + \tfrac{\lambda_k^2 \kappa^2}{4}+ \lambda_k^2 L^2.
\end{align*}
By taking expectation with respect to $\nu^*$ and full expectation with respect to $\calF_{-1}$, and using the performance difference lemma (Lemma~\ref{lemma:perf-diff}), we obtain
\begin{align*}
\bbe\left[\lambda_k\big( \rho(\pi_k) - \rho(\pi^*) + (1+\lambda_k \omega) D(\pi_{k+1}, \pi^*) \big) \right] \leq \bbe[D(\pi_k, \pi^*)] + \lambda_k\varsigma + \lambda_k^2(\tfrac{\kappa^2}{4} + L^2),
\end{align*}
which completes the proof.
\endproof
}

\revision{}{
The following theorem establishes the convergence guarantees of the SPMD algorithm for unregularized AMDPs, i.e., $\omega = 0$ and $L = 0$.
\begin{theorem}\label{thm_unreg_1}
    Suppose that $\omega = 0$, $L = 0$, and Assumption~\ref{assump_critic_error} holds. Set $\lambda_k = \lambda=\frac{2\sqrt{\log|\calA|}}{\kappa\sqrt{K}}$ in Algorithm~\ref{alg:SPMD}. Then we have
    \begin{align}\label{theorem_unreg_1}
		\bbe\big[\tsum_{k=0}^{K-1} \big(\rho(\pi_k) - \rho(\pi^*)\big)\big] \leq\kappa\sqrt{K\log|\calA|} + K\varsigma.
	\end{align}
	Assume that $r$ is uniformly chosen from $\{0,1,...,K-1\}$, we have
	\begin{align}\label{theorem_unreg_2}
		\bbe[\rho(\pi_r) -\rho(\pi^*)] \leq \tfrac{\kappa\sqrt{\log|\calA|}}{\sqrt{K}} + \varsigma.
	\end{align}
\end{theorem}
}
\revision{}{
\proof{Proof.}
By taking the telescope sum of $k=0,.., K-1$ for Ineq.~\eqref{SPMD_convergence}, we have
\begin{align*}
&\bbe\big[\tsum_{k=0}^{K-1} \big(\rho(\pi_k) - \rho(\pi^*)\big)\big] + \tfrac{1}{\lambda}\bbe[D(\pi_K, \pi^*)] \leq \tfrac{1}{\lambda}D(\pi_0, \pi^*) + K  \varsigma + \tfrac{K \lambda \kappa^2}{4}.
\end{align*}
Invoking that $r$ is uniformly chosen from $\{0,..,K-1\}$ gives us
	\begin{align*}
		\bbe\left[ \rho(\pi_r) - \rho(\pi^*)\right] \leq \tfrac{D(\pi_0, \pi^*)}{\lambda K} + \tfrac{\lambda \kappa^2}{4} + \varsigma.
	\end{align*}
By applying the upper bound $D(\pi_0, \pi^*)\leq \log|\calA|$ and the choice of $\lambda$, we complete the proof of Ineqs. \eqref{theorem_unreg_1} and \eqref{theorem_unreg_2}.	
\endproof
Clearly, the SPMD method achieves the $\mathcal{O}(1/\sqrt{K})$ rate of convergence for unregularized AMDPs. In the next subsection, we will show that this rate can be improved to $\mathcal{O}(1/K)$ for strongly-convex regularized AMDPs.
}

\subsection{SPMD for strongly-convex regularized AMDPs}\label{subsec_reg} \revision{}{In this subsection, we consider using the SPMD method to solve strongly convex regularized MDPs, i.e., $\omega > 0$. 
We first consider the case when $h^\pi$ is $L$-Lipschitz and propose the following theorem.
\begin{theorem}\label{thm_reg_1}
Suppose $\omega >0$, $h^\pi$ is $L$-Lipschitz, and Assumption~\ref{assump_critic_error} holds. Set $\lambda_k = \lambda = \tfrac{2}{\omega (k+1)}$, we have
\begin{align*}
\tfrac{1}{\sum_{k=0}^{K-1}(k+2)} \cdot \tsum_{k=0}^{K-1}(k+2)\bbe[\rho(\pi_k) - \rho(\pi^*)] + \omega \bbe[D(\pi_K, \pi^*)] \leq \tfrac{2\omega}{K(K+3)}D(\pi_0, \pi^*) + \varsigma + \tfrac{2\kappa^2 + 8 L^2}{\omega (K+3)}.
\end{align*}
Consequently, assume that $r$ is chosen from $\{0,1,...,K-1\}$ with distribution $\mathbb{P}(r=i) = \tfrac{2(i+2)}{K(K+3)}$, we have
	\begin{align}\label{theorem_reg_2}
		\bbe[\rho(\pi_r) -\rho(\pi^*)] \leq \tfrac{2\omega}{K(K+3)}D(\pi_0, \pi^*) + \varsigma + \tfrac{2\kappa^2 + 8 L^2}{\omega (K+3)},
	\end{align}
 where $\kappa$ and $\varsigma$ are defined in Assumption~\ref{assump_critic_error}.
\end{theorem}}
\revision{}{
\proof{Proof.}
Multiplying $(k+1)(k+2)\omega/2$ on both sides of Ineq.~\eqref{SPMD_convergence} in Lemma~\ref{lemma_SPMD_convergence}, and utilizing the choice of $\lambda$, we have
\begin{align*}
\bbe\left[k+2\big( \rho(\pi_k) - \rho(\pi^*) + \tfrac{(k+2)(k+3){\omega}}{2} D(\pi_{k+1}, \pi^*) \big) \right] \leq \tfrac{(k+1)(k+2){\omega}}{2}\bbe[D(\pi_k, \pi^*)] + (k+2)\varsigma + \tfrac{2(k+2)}{\omega(k+1)}(\tfrac{\kappa^2}{4} + L^2).
\end{align*}
By taking telescope sum of $k=0, ..., K-1$ of the above inequality, we obtain
\begin{align*}
    \bbe\left[\tsum_{k=0}^{K-1} (k+2)(\rho(\pi_k) - \rho(\pi^*)) + \tfrac{K(K+1)\omega }{2}D(\pi_K, \pi^*)\right] \leq \omega D(\pi_0, \pi^*) + \tfrac{K(K+3)\varsigma}{2} + \tfrac{4 K}{\omega}(\tfrac{\kappa^2}{4} + L^2),
\end{align*}
which completes the proof.
\endproof
}

\revision{}{Next, we remove the assumption that $h^\pi$ is Lipschitz continuous, but require the ergodicity assumption (Assumption~\ref{assump_rho_0}). As a consequence, there exists a $\Gamma \in (0,1)$, such that for all feasible policy $\pi$,
\begin{align*}
	\min_{s\in\calS} \nu^\pi(s) \geq 1- \Gamma.
\end{align*}
Under this assumption, the convergence guarantees of the SPMD method for solving AMDPs are mostly similar to the SPMD method for DMDPs presented in \cite{lan2021policy}. To demonstrate this claim, we first establish two supporting lemmas whose analogs are Lemma 12 and 13 in  \cite{lan2021policy}. }
\begin{lemma} \label{lemma:convexity}
	For any $s\in \mathcal{S}$, we have
	\begin{align*}
		\tfrac{1}{1-\Gamma}\big(\rho(\pi_{k+1}) - \rho(\pi_k)\big) \leq \langle \stochQ{k}(s, \cdot), &\pi_{k+1}(\cdot \mid s)-\pi_{k}(\cdot \mid s)\rangle+ h^{\pi_{k+1}}(s)-h^{\pi_{k}}(s)\\
		&+\tfrac{1}{\lambda_{k}} D_{\pi_{k}}^{\pi_{k+1}}(s)
		+\tfrac{\lambda_{k}\spannorm{\delta_{k}}^2}{8(1-\Gamma)}
	\end{align*}
\end{lemma}
\proof{Proof.}
	Using Lemma~\ref{lemma:perf-diff}, we obtain
	\begin{align}\label{conv_step_1}
		\rho(\pi_{k+1}) - \rho(\pi_{k}) 
		&= \mathbb{E}_{s'\sim \nu^{\pi_{k+1}}}\big[\langle \stochQ{k}(s', \cdot), \pi_{k+1}(\cdot | s')-\pi_{k}(\cdot | s')\rangle + h^{\pi_{k+1}}(s')-h^{\pi_{k}}(s')\nn\\
		&\quad-\langle\delta_{k}(s',\cdot), \pi_{k+1}(\cdot \mid s^{\prime})-\pi_{k}(\cdot \mid s^{\prime})\rangle\big]\nn\\
		&\overset{(i)}\leq \mathbb{E}_{s'\sim \nu^{\pi_{k+1}}}\big[\langle \stochQ{k}(s', \cdot), \pi_{k+1}(\cdot | s')-\pi_{k}(\cdot | s')\rangle + h^{\pi_{k+1}}(s')-h^{\pi_{k}}(s')\nn\\
		&\quad+\tfrac{1}{2 \lambda_{k}}\left\|\pi_{k+1}(\cdot \mid s^{\prime})-\pi_{k}(\cdot \mid s^{\prime})\right\|_{1}^{2}+\tfrac{\lambda_{k}\spannorm{\delta_{k}}^2}{8}]\nn\\ 
		&\overset{(ii)}\leq \mathbb{E}_{s'\sim \nu^{\pi_{k+1}}}\big[\langle \stochQ{k}(s', \cdot), \pi_{k+1}(\cdot | s')-\pi_{k}(\cdot | s')\rangle + h^{\pi_{k+1}}(s')-h^{\pi_{k}}(s').\nn\\
		&\quad+\tfrac{1}{\lambda_{k}} D_{\pi_{k}}^{\pi_{k+1}}(s^{\prime})+\tfrac{\lambda_{k}\spannorm{\delta_{k}}^2}{8}\big], 
	\end{align}
	where step (i) follows from Young's inequality and the fact that $\langle \ones, \pi_{k+1}(\cdot|s')-\pi_{k}(\cdot|s') \rangle=0$, step (ii) follows from the strong convexity of $D_{\pi_{k}}^{\pi_{k+1}}$ with respect to $\ell_1$-norm. Recalling the three-point lemma (Ineq.~\eqref{three_point_ineq}) and taking $p = \pi_k$, we obtain that for all $s'\in\calS$,
	\begin{align*}
		&\langle \stochQ{k}(s^{\prime}, \cdot), \pi_{k+1}(\cdot \mid s^{\prime})-\pi_{k}(\cdot \mid s^{\prime})\rangle+h^{\pi_{k+1}}(s^{\prime})-h^{\pi_{k}}(s^{\prime})+\tfrac{1}{\lambda_{k}} D_{\pi_{k}}^{\pi_{k+1}}(s^{\prime}) \\
		&\leq-\tfrac{1}{\lambda_{k}}\big[(1+\lambda_{k} \omega) D_{\pi_{k+1}}^{\pi_{k}}(s^{\prime})\big] \leq 0,
	\end{align*}
	which implies that 
	\begin{align*}
		&\mathbb{E}_{s^{\prime} \sim \nu^{\pi_{k+1}}}\big[\langle \stochQ{k}(s^{\prime}, \cdot), \pi_{k+1}(\cdot \mid s^{\prime})-\pi_{k}(\cdot \mid s^{\prime})\rangle+h^{\pi_{k+1}}(s^{\prime})-h^{\pi_{k}}(s^{\prime})+\tfrac{1}{\lambda_{k}} D_{\pi_{k+1}}^{\pi_{k}}(s^{\prime})\big] \\
		&{\leq \nu^{\pi_{k+1}}(s)\big[\langle\stochQ{k}(s, \cdot), \pi_{k+1}(\cdot \mid s)-\pi_{k}(\cdot \mid s)\rangle+h^{\pi_{k+1}}(s)-h^{\pi_{k}}(s)+\tfrac{1}{\lambda_{k}} D_{\pi_{k}}^{\pi_{k+1}}(s)\big]} \\
		&{\leq (1-\Gamma) \big[\langle\stochQ{k}(s, \cdot), \pi_{k+1}(\cdot \mid s)-\pi_{k}(\cdot \mid s)\rangle+h^{\pi_{k+1}}(s)-h^{\pi_{k}}(s)+\tfrac{1}{\lambda_{k}} D_{\pi_{k}}^{\pi_{k+1}}(s)\big],}
	\end{align*}
	and the desired result follows from combining the above inequality with Ineq. \eqref{conv_step_1}.
\endproof

\begin{lemma}\label{lemma:iterative}
	For any $k\geq 0$, we have
	\begin{align*}
		&\mathbb{E}\big[\tfrac{1}{1-\Gamma}\big(\rho(\pi_{k+1})-\rho(\pi^*)\big) +  \big(\tfrac{1}{\lambda_{k}}+\omega\big) D({\pi_{k+1}},{\pi^{*}}) \big] \\
		&\leq \mathbb{E}\big[ \tfrac{\Gamma}{1-\Gamma}\big(\rho(\pi_k) - \rho(\pi^*)\big) +  \tfrac{1}{\lambda_{k}} D({\pi_{k}},{\pi^{*}}) \big]+ \varsigma+\tfrac{\lambda_{k} \sigma^{2}}{8(1-\Gamma)},
	\end{align*}
	where the expectation is taken with respect to $\mathcal{F}_{-1}$, $D({\pi},{\pi'}):=\bbe_{s\sim\nu^*}[D_{\pi}^{\pi'}(s)]$, and $\sigma$, $\varsigma$ are defined in Assumption~\ref{assump_critic_error}.
\end{lemma}
\proof{Proof.}
	By applying Ineq.~\eqref{three_point_ineq} with $p=\pi^*$ we obtain
	\begin{align*}
		&\lambda_{k}\big[\langle \stochQ{k}(s, \cdot), \pi_{k+1}(\cdot \mid s)-\pi^{*}(\cdot \mid s)\rangle+h^{\pi_{k+1}}(s)-h^{\pi^{*}}(s)\big]+D_{\pi_{k}}^{\pi_{k+1}}(s) \\
		&\leq D_{\pi_{k}}^{\pi^{*}}(s)-\left(1+\lambda_{k} \omega\right) D_{\pi_{k+1}}^{\pi^{*}}(s),
	\end{align*}
	Combining the above inequality with Lemma~\ref{lemma:convexity} implies that 
	\begin{align*}
		&\langle \stochQ{k}(s, \cdot), \pi_{k}(\cdot \mid s)-\pi^{*}(\cdot \mid s)\rangle+h^{\pi_{k}}(s)-h^{\pi^{*}}(s)+\tfrac{1}{1-\Gamma}(\rho(\pi_{k+1})-\rho(\pi_{k})) \\
		&\leq \tfrac{1}{\lambda_{k}} D_{\pi_{k}}^{\pi^{*}}(s)-\left(\tfrac{1}{\lambda_{k}}+\omega\right) D_{\pi_{k+1}}^{\pi^{*}}(s)+\tfrac{\lambda_{k}\spannorm{\delta_{k}}^2}{8(1-\Gamma)}.
	\end{align*}
	Taking full expectation with $\calF_{-1}$ and $\nu^*$, and using Lemma~\ref{lemma:perf-diff}, we arrive at
	\begin{align*}
		\mathbb{E}\big[(\rho(\pi_{k})-\rho(\pi^*)) + \tfrac{1}{1-\Gamma}(\rho(\pi_{k+1})-\rho(\pi_{k})) \big] \leq \mathbb{E}\big[\tfrac{1}{\lambda_{k}} D_{\pi_{k}}^{\pi^{*}}(s)-\big(\tfrac{1}{\lambda_{k}}+\omega\big) D_{\pi_{k+1}}^{\pi^{*}}(s)\big]+ \varsigma+\tfrac{\lambda_{k} \sigma^{2}}{8(1-\Gamma)}.
	\end{align*}
	Rearranging terms yields the desired result.
\endproof

\vgap

	In view of Lemma~\ref{lemma:iterative} and its analog (Lemma 13 in \cite{lan2021policy}), it can be easily proved that Theorem 5 and 6 in \cite{lan2021policy} still holds by replacing $f(\cdot)$ with $\rho(\cdot)/(1-\Gamma)$ and $\gamma$ with $\Gamma$. Similarly, the convergence guarantees for deterministic PMD in \cite{lan2021policy} can also be extended to AMDPs.
Nevertheless, all the convergence guarantees in \cite{lan2021policy} failed to decompose the convergence rate into a deterministic error part with linear convergence rate and a stochastic error part with $\mathcal{O}(1/K)$ convergence rate, which motivates us to derive the following theorem.
\begin{theorem}\label{thm_regularized_SPMD}
	Suppose $\omega >0$, and Assumption~\ref{assump_rho_0} and Assumption~\ref{assump_critic_error} hold. If $K$ is fixed and for any $0\leq k\leq K$, set 
	\begin{align}\label{eta_cond}
		\lambda_k = \lambda = \min\{\tfrac{1-\Gamma}{\omega\Gamma}, \tfrac{q \log K}{\omega K}\}, \quad \text{with} \quad q = 2\big(1+ \log(\tfrac{8\Gamma(1-\Gamma)\omega^2D(\pi_0,\pi^*)+ 8\Gamma(1-\Gamma)\omega (\rho(\pi_0)-\rho(\pi^*))}{\sigma^2}) \big),
	\end{align}
	then
	\begin{align*}
		\bbe\left[ D({\pi_{K+1}},{\pi^{*}}) \right] \leq\tfrac{\Gamma^K}{\omega}\big(\rho(\pi_0) - \rho(\pi^*)\big) +  \Gamma^K D({\pi_{0}},{\pi^{*}}) +\tfrac{\sigma^2}{4\omega^2\Gamma(1-\Gamma)K}(1+q\log K) +  \tfrac{2\varsigma}{\omega\Gamma}.
	\end{align*}
\end{theorem}
\proof{Proof.}
	By Lemma~\ref{lemma:iterative} and $\lambda_k=\lambda$, we have that with $\Theta_k>0$
	\begin{align*}
		&\Theta_k\mathbb{E}\big[\tfrac{1}{1-\Gamma}\left(\rho(\pi_{k+1})-\rho(\pi^*)\right) +  \left(\tfrac{1}{\lambda}+\omega\right) D({\pi_{k+1}},{\pi^{*}}) \big] \\
		&\leq \Theta_k\mathbb{E}\big[ \tfrac{\Gamma}{1-\Gamma}\big(\rho(\pi_k) - \rho(\pi^*)\big) +  \tfrac{1}{\lambda} D({\pi_{k}},{\pi^{*}}) \big]+\Theta_k \varsigma+\tfrac{\Theta_k\lambda \sigma^{2}}{8(1-\Gamma)}.
	\end{align*}
	Set $\Theta_k = \Theta^k$ and $\Theta \leq \min\{\tfrac{1}{\Gamma}, 1+\omega\lambda\}$, then by taking telescope sum from $k=0$ to $K$, we obtain
	\begin{align*}
		&\Theta^K \cdot\mathbb{E}\big[\tfrac{1}{1-\Gamma}\left(\rho(\pi_{K+1})-\rho(\pi^*)\right) +  \left(\tfrac{1}{\lambda}+\omega\right) D({\pi_{K+1}},{\pi^{*}}) \big] \\
		&\leq  \tfrac{\Gamma}{1-\Gamma}\big(\rho(\pi_0) - \rho(\pi^*)\big) +  \tfrac{1}{\lambda} D({\pi_{0}},{\pi^{*}}) +\tfrac{ \Theta^K\varsigma}{1-\Theta^{-1}}+\tfrac{\Theta^K\lambda \sigma^{2}}{8(1-\Gamma)(1-\Theta^{-1})}.
	\end{align*}
	From $\Theta \leq \min\{\tfrac{1}{\Gamma}, 1+\omega\lambda\}$, we have $(1-\Theta^{-1})^{-1}\geq \max\{\tfrac{1}{1-\Gamma}, \tfrac{1+\omega\lambda}{\omega\lambda}\}$. Using this relationship and mutiplying $\Theta^{-K}\lambda$ on both sides of the above inequality, we obtain
	\begin{align}\label{unify_step_1}
		&\lambda\mathbb{E}\big[\tfrac{1}{1-\Gamma}\left(\rho(\pi_{K+1})-\rho(\pi^*)\right)\big] +  \left(1+\omega \lambda\right)\bbe\left[ D({\pi_{K+1}},{\pi^{*}}) \right] \nn\\
		&\leq \Theta^{-K} \tfrac{ \lambda\Gamma}{1-\Gamma}\big(\rho(\pi_0) - \rho(\pi^*)\big) +  \Theta^{-K} D({\pi_{0}},{\pi^{*}})+\tfrac{ \lambda\varsigma}{1-\Gamma} +\tfrac{(1+\omega\lambda)\varsigma}{\omega}+ \tfrac{\lambda^2\sigma^2}{8(1-\Gamma)^2} + \tfrac{(1+\omega\lambda)\lambda\sigma^2}{8(1-\Gamma)\omega}.
	\end{align}
	Take $\lambda = \min\{\tfrac{1-\Gamma}{\omega\Gamma},\tfrac{q\log K}{\omega K}\}$ and $\Theta = \min \{\tfrac{1}{\Gamma}, 1+\tfrac{q\log K }{K}\}$, then we have
	\begin{align}\label{unify_step_2}
		\Theta^{-K} \leq \Gamma^K + (1+\tfrac{q\log K }{K})^{-K}\leq \Gamma^K + \exp(-\tfrac{1}{2}q\log K) \leq \Gamma^K + \tfrac{1}{K^{q/2}}.
	\end{align}
	Invoking the definition that 
	$
		q = 2\big(1+ \log(\tfrac{8\Gamma(1-\Gamma)\omega^2D(\pi_0,\pi^*)+ 8\Gamma(1-\Gamma)\omega (\rho(\pi_0)-\rho(\pi^*))}{\sigma^2}) \big), 
$
	we have
	\begin{align}\label{unify_step_3}
		\tfrac{\lambda \Gamma}{(1-\Gamma)K^{q/2}} \leq \tfrac{\sigma^2}{8\Gamma(1-\Gamma)\omega^2 K}\quad\text{and}\quad \tfrac{D(\pi_0,\pi^*)}{K^{q/2}} \leq \tfrac{\sigma^2}{8\Gamma(1-\Gamma)\omega^2 K}.
	\end{align}
	Combining Ineqs.~\eqref{unify_step_1}, \eqref{unify_step_2} and \eqref{unify_step_3} yields
	\begin{align*}
		&\lambda\mathbb{E}\big[\tfrac{1}{1-\Gamma}\left(\rho(\pi_{K+1})-\rho(\pi^*)\right)\big] +  \bbe\big[ D({\pi_{K+1}},{\pi^{*}}) \big] \nn\\
		&\leq \tfrac{\Gamma^K}{\omega}\big(\rho(\pi_0) - \rho(\pi^*)\big) +  \Gamma^K D({\pi_{0}},{\pi^{*}}) + \tfrac{\lambda \Gamma}{(1-\Gamma)K^{q/2}}+\tfrac{D(\pi_0,\pi^*)}{K^{q/2}} +\tfrac{2 \varsigma}{\omega\Gamma}+ \tfrac{\lambda\sigma^2}{4\omega\Gamma(1-\Gamma)}\\
		&\leq\tfrac{\Gamma^K}{\omega}\big(\rho(\pi_0) - \rho(\pi^*)\big) +  \Gamma^K D({\pi_{0}},{\pi^{*}}) +\tfrac{\sigma^2}{4\omega^2\Gamma(1-\Gamma)K}(1+q\log K) +  \tfrac{2 \varsigma}{\omega\Gamma},
	\end{align*}
	which completes the proof.
\endproof
Theorem~\ref{thm_regularized_SPMD} establishes the convergence guarantee for the distance between the output policy and the optimal policy. Meanwhile, we are allowed to output the policy generated by the last iteration instead of randomly outputting a policy as in Theorem~\ref{thm_unreg_1} and Theorem~\ref{thm_reg_1}.

\revision{}{
In the next section, we will propose some novel policy evaluation subroutines under both generative and Markovian noise settings. Then by combining the ``actor'' and ``critic'' parts, we establish sample complexities for different observation models under different assumptions.
}

\section{Critic: Policy evaluation for differential Q-functions}\label{sec:policy_evaluation}
\revision{To serve as a fundamental building block of policy optimization, we focus on estimating the differential Q-function since it is closely related to the gradient information of the average reward/cost function. Moreover, the results and analyses extend to the estimation of the differential value function naturally. 
It should be noted that once the policy is fixed, $h^\pi$ is a fixed function which together with $c(s,a)$ can be viewed as the overall cost function of the policy of interest. Therefore, for sack of simplicity,  in this section we use $c(s,a)$ to represent $c(s,a)+h^\pi(s)$. 
And we let $|c(s,a)|\leq \bar  c$, for all $s\in\calS$ and $a\in\calA$.}
{In this section, we discuss the estimation of the differential Q-function of AMDPs. It should be noted that once the policy is fixed, $h^\pi$ is a fixed function which together with $c(s,a)$ can be viewed as the overall cost function of the policy of interest. Therefore, for the sake of simplicity,  in this section, we use $c(s,a)$ to represent $c(s,a)+h^\pi(s)$. 
We also assume $|c(s,a)|\leq \bar  c$ for all $s\in\calS$ and $a\in\calA$. Moreover, the results and analyses naturally extend to the simpler task of estimating the differential value function $\bar V^\pi$. }

\revision{}{
In Section~\ref{sec:generative}, we propose a direct multiple trajectory method (Algorithm~\ref{alg:Mult}) and establish its convergence guarantees under the simpler generative model. In Section~\ref{sec_VRTD}, we consider the more challenging Markovian noise setting. Here, we would like to highlight that the main difficulty of estimating the differential Q-function in the Markovian setting stems from the possible lack of exploration. To be more precise, when $\pi(a|s)$ for some state-action pairs are very close to $0$, estimating the corresponding $\bar Q^\pi(s,a)$ is very challenging, due to the lack of corresponding samples. Therefore, in Section~\ref{sec_random_policy}, we start with policy evaluation of sufficiently random policies and propose the VRTD method (Algorithm~\ref{alg:VRTD}) along with its convergence guarantees. Then, in Section~\ref{sec:insufficient_random}, we design an EVRTD method on top of VRTD to remedy the issue of exploration for insufficiently random policies.}


\subsection{Multiple trajectory method for generative model}\label{sec:generative}

Under a generative model, we assume that the AMDP is mixing (Assumption~\ref{assump_mixing}). We propose the following multiple trajectory method (Algorithm~\ref{alg:Mult}), which estimates both the average reward/cost $\rho^\pi$ and the basic differential Q-function $\bar  Q^\pi(s,a)$ for all the state-action pairs $(s,a)\in \mathcal{S} \times \mathcal{A}$. 
\begin{algorithm}[h] \caption{Multiple-trajectory method for generative model}  
	\label{alg:Mult}   
	\begin{algorithmic}[1]
		\STATE{\textbf{Input}: $T, T', N, N' \in \mathbb{Z}_+$ and a feasible policy $\pi$.}
		\FOR{$ i = 1, \ldots, N$}
		\STATE{ Generate a trajectory with length $T+1$ following policy $\pi$ starting from an arbitrary state $s'\in \calS$, denoted by
			$
			\{(s_0 = s', a_0),(s_1, a_1),...,(s_T,a_T)\}
			$. Let $\rho^i = c(s_T, a_T)$.}
		\ENDFOR
		\STATE{Calculate $\widehat \rho^{\pi} = \frac{1}{N}\sum_{i=1}^N \rho^i$.}
		\FOR{$ j = 1, \ldots, N'$}
		\FOR{$(s,a)\in \calS\times \calA$}
		\STATE{Generate a trajectory with length $T'+1$ following policy $\pi$ starting from $(s,a)$, denoted by
			$
			\{(s_0^j = s, a_0^j = a),(s_1^j, a_1^j),...,(s_{T'}^j,a_{T'}^j)\}. 
			$ Let $\bar  Q^j(s,a) = \sum_{t=0}^{T'} \big(c(s_t^j,a_t^j)- \widehat \rho^{\pi}\big)$}
		\ENDFOR
		\ENDFOR
		\STATE{
			\textbf{Output}:
			$\widehat \rho^{\pi}$ and $\widehat  Q^{\pi} (s,a) = \frac{1}{N'} \sum_{j=1}^{N'}\bar  Q^j(s,a) $.
		}
	\end{algorithmic}
\end{algorithm}

\revision{In view of Algorithm~\ref{alg:Mult}, we generate trajectories starting from all state-action pairs to estimate the corresponding basic differential Q-function, which is allowed in the generative model. One should also note that we generate independent trajectories to estimate the average reward/cost and differential Q-function to avoid additional bias.}{}
In Proposition~\ref{lemma_bias_bound_generative}, we provide $\ell_\infty$-convergence guarantees on the bias of the estimators $\widehat \rho^{\pi}$ and $\widehat  Q^{\pi}$ generated by the multiple-trajectory method, i.e., $|\bbe[\widehat \rho^{\pi}] - \rho^\pi|$ and $\|\bbe[\widehat  Q^{\pi}] - \bar  Q^\pi\|_\infty$, where the expectation is taken with respect to all the samples used in Algorithm~\ref{alg:Mult}. \revision{}{The proof of Proposition~\ref{lemma_bias_bound_generative} is postponed to Appendix \ref{proof_lemma_bias_bound_generative}.}


\begin{proposition}\label{lemma_bias_bound_generative}
	Fix a feasible policy $\pi$. Assume that the epoch length parameters $T, T' > \tmix +1$. Then the estimators $\widehat\rho^{\pi}$ and $\widehat  Q^{\pi}$ generated by Algorithm~\ref{alg:Mult} satisfy
	\begin{align}\label{bias_bound_generative_0}
		|\bbe[\widehat\rho^{\pi}] - \rho^\pi|\leq \bar  c\cdot (\tfrac{1}{2})^{\lfloor T/\tmix \rfloor},
	\end{align}
	and
	\begin{align}
		\|\bbe[\widehat  Q^{\pi}] - \bar  Q^\pi\|_\infty\leq \bar  c(T'+1) \cdot (\tfrac{1}{2})^{\lfloor T/\tmix \rfloor} + 2\bar  c\cdot\tmix \cdot (\tfrac{1}{2})^{\lfloor T'/\tmix \rfloor},\label{bias_bound_generative_1}
	\end{align}
 where $\bar c := \max_{s\in \calS, a \in \calA} |c(s,a)|.$
\end{proposition}    

In view of Proposition~\ref{lemma_bias_bound_generative}, the number of samples (state-action pairs) required by the multiple trajectory method to find a solution $\widehat Q^\pi \in \bbr^{|\mathcal{S}\times \mathcal{A}|}$ such that $\|\bbe[\widehat Q^\pi] - \bar  Q^\pi \|_\infty\leq \epsilon$ is bounded on the order
$\mathcal{O}\{|\calS||\calA|\tmix\cdot \log({\tmix}/{\epsilon})\}$. In Proposition~\ref{lemma_variance_bound_generative}, we establish the convergence guarantee of Algorithm~\ref{alg:Mult} in terms of the expected squared $\ell_\infty$-error\tli{, and the proof is postponed to Appendix~\ref{proof_lemma_variance_bound_generative}.}
\begin{proposition}\label{lemma_variance_bound_generative}
	Fix a feasible policy $\pi$. Assume that the parameters $T, T' > \tmix +1$ and $N\geq N'$. Then the estimators $\widehat  \rho^{\pi}$ and $\widehat  Q^{\pi}$ generated by Algorithm~\ref{alg:Mult} satisfy that
	\begin{align*}
		\bbe|\widehat \rho^{\pi} - \rho^\pi|^2\leq \tfrac{4\bar c^2}{N} + \bar c^2\cdot (\tfrac{1}{2})^{2\lfloor T/\tmix \rfloor},
	\end{align*}
	and 
	\begin{align}\label{bias_bound_generative}
		\bbe[\|\widehat  Q^{\pi} - \bar  Q^\pi\|_\infty^2]\leq \tfrac{\upsilon\cdot   \bar c^2 (T'+1)^2}{N'}(\log(|\calS||\calA|)+1) + \bar c^2\big(3(T'+1)^2+12\tmix^2)\cdot (\tfrac{1}{2})^{2\lfloor T'/\tmix \rfloor},
	\end{align}
	where $\upsilon>0$ denotes a universal constant. 
\end{proposition}

Clearly, for a fixed policy $\pi$, the sample complexity to find an $\epsilon$-accurate estimator $\widehat Q^\pi \in \bbr^{|\mathcal{S}|\times |\mathcal{A}|}$ such that $\bbe[\|\widehat Q^\pi - \bar  Q^\pi \|_\infty^2]\leq \epsilon$ requires the parameters $N$ and $N'$ to be chosen in the order of $\widetilde{\order}(\tmix^2/\epsilon)$. The total number of samples used is $|\calS||\calA|N'T' + NT$, which is of the order \revision{$\widetilde{\order}(|\calS||\calA|\tmix^2/\epsilon)$}{$\widetilde{\order}(|\calS||\calA|\tmix^3/\epsilon)$}.

\subsubsection{Sample complexity of SPMD under generative model}
\revision{}{Below we discuss the sample complexity of using SPMD (Algorithm \ref{alg:SPMD}) and Multiple-trajectory method (Algorithm~\ref{alg:Mult}) to find an $\epsilon$-optimal solution, i.e., a solution $\widehat \pi$, such that $\bbe\left[ \rho(\widehat\pi) - \rho(\pi^*)\right] \leq \epsilon$, in the tabular setting under the generative model. 

We start with the unregularized AMDP,  see  Theorem~\ref{thm_unreg_1}.
Let $T_k, T_k', N_k, N_k'$ denote the parameters $T, T', N, N'$ in Algorithm~\ref{alg:Mult} for estimating $\stochQ{k}$ respectively.
\begin{corollary}[Sample complexity of SPMD for unregularized AMDPs]\label{coro_comp_generative}
	Assume Assumption~\ref{assump_mixing} holds and we are given a generative model. Consider using SPMD (Algorithm \ref{alg:SPMD}) and the Multiple-trajectory method (Algorithm~\ref{alg:Mult}) to solve the unregularized AMDP. 
 Consider the SPMD stepsizes in Theorem~\ref{thm_unreg_1}.
 If $T_k'\geq T' = \tmix\cdot \log_2\big(\tfrac{4\tmix}{\epsilon}\big)$, $T_k \geq T = \tmix\cdot \log_2\left(\frac{2(T'+1)}{\epsilon}\right)+1$ and $N_k=N=N_k'=N'=1$, then an $\epsilon$-optimal solution can be found in at most $\order(\tmix^2 \log|\calA|/\epsilon^2)$ SPMD iterations and the total number of state-action transition samples can be bounded by
	$$
	\widetilde\order\left( \tfrac{|\calS||\calA| \tmix^3}{\epsilon^2} \right).
	$$
\end{corollary}
\proof{Proof.}
	Notice that we can upper bound $\kappa$ by $\kappa\leq \bar c \cdot T' = \widetilde{\order}(\tmix)$, where $c(s,a)\leq \bar c$. Then the total number of iterations $K$ is bounded by $\widetilde{\order}(\tmix^2/\epsilon^2)$. Moreover, for each SPMD iteration, we require $N_kT_k+|\calS||\calA|N'_kT'_k = \widetilde \order(|\calS||\calA|\tmix)$ samples for policy evaluation. Therefore, the total number of samples can be bounded by
	$
	\widetilde\order\big( \tfrac{|\calS||\calA| \tmix^3}{\epsilon^2} \big),
	$
	as desired.
\endproof
To the best of our knowledge, the result in Corollary~\ref{coro_comp_generative} is the first sample complexity result for using policy gradient/actor-critic methods to solve AMDPs. This complexity result matches the lower bound $\Omega(\tfrac{\tmix|\calS||\calA|}{\epsilon^2})$ proved in \cite{jin2021towards} in terms of the dependence on $\epsilon$ as well as on $|\calS||\calA|$ up to logarithmic factors. 
The dependence on $\tmix$ appears to be worse than the state-of-the-art reduction-based approach \citep{wang2023optimal} where \tli{a nearly-optimal sample complexity $\widetilde{\order}(\tfrac{\tmix|\calS||\calA|}{\epsilon^2})$ is achieved. In \citet{wang2023optimal}, the AMDP problem is reduced to solving a DMDP with a discount factor $\gamma = 1 - \tfrac{\epsilon}{9\tmix}$ up to $\tfrac{\epsilon}{3(1-\gamma)}$-accuracy, and this DMDP is solved by a ``model-based'' approach with complexity $\widetilde{\mathcal{O}}(\tfrac{\tmix |\calS||\calA|}{(1-\gamma)^2\epsilon^2})$.} \tli{However, this performance gap between ``model-based'' approach and ``model-free'' approach also appears in the DMDP literature. Up to our knowledge, the state-of-the-art model-free methods for DMDPs only achieve a sample complexity of $\widetilde{\mathcal{O}}(\tfrac{|\calS||\calA|}{(1-\gamma)^5\epsilon^2})$; see, e.g., \cite{lan2021policy}.\footnote{\textcolor{black}{Notice that value-based methods like \cite{wainwright2019variance} achieve $\widetilde{\mathcal{O}}(\tfrac{|\calS||\calA|}{(1-\gamma)^3\epsilon^2})$ complexity in approximating the optimal $Q$ function. However, to further generate a greedy policy with an $\epsilon$-optimal value function, an additional price of $(1-\gamma)^{-2}$ needs to be paid and the final complexity is $\widetilde{\mathcal{O}}(\tfrac{|\calS||\calA|}{(1-\gamma)^5\epsilon^2})$.}} Then by combining the same reduction argument with the model-free DMDP solver, one can only obtain a sample complexity of $\mathcal{O}(\tfrac{\tmix^3|\calS||\calA|}{\epsilon^5})$ for solving AMDPs, which is strictly worse than the sample complexity proposed in Corollary~\ref{coro_comp_generative} achieved by our SPMD method. 
}
In spite of slightly worse sample complexity in the tabular setting, major advantages of policy gradient-type methods over model-based approaches 
exist in their flexibility in accommodating function approximation and on-policy Markovian noise model, which will be the main focus of the next section.


Next, we establish the sample complexity for AMDPs with strongly convex and Lipschitz continuous regularizers. 
\begin{corollary}[Sample complexity of SPMD for strongly convex regularized AMDPs]\label{coro_comp_generative_reg}
	Suppose Assumption~\ref{assump_mixing} holds and we are given a generative model. Consider using SPMD (Algorithm \ref{alg:SPMD}) with stepsizes in Theorem~\ref{thm_reg_1} and Multiple-trajectory method (Algorithm~\ref{alg:Mult}) to solve the mixing AMDP with a regularizer $h^\pi$ is $L$-Lipschitz continuous and $\omega$-strongly convex.
 If $T_k'\geq T' = \tmix\cdot \log_2\big(\tfrac{4\tmix}{\epsilon}\big)$, $T_k \geq T = \tmix\cdot \log_2\left(\frac{2(T'+1)}{\epsilon}\right)+1$ and $N_k=N=N_k'=N'=1$, then an $\epsilon$-optimal solution can be found in at most $\order(\sqrt{\tfrac{\omega\log|\calA|}{\epsilon}}+\tfrac{\tmix^2 + L^2 }{\omega \epsilon})$ SPMD iterations and the total number of state-action transition samples can be bounded by
	$$
	\widetilde\order\big(\tfrac{|\calS||\calA| \tmix \sqrt{\omega}}{\sqrt{\epsilon}}+\tfrac{(\tmix^3 + \tmix L^2)|\calS||\calA| }{\omega \epsilon} \big).
	$$
\end{corollary}
The proof of the above corollary follows from a similar argument to the proof of Corollary~\ref{coro_comp_generative}. 
}

\subsection{Policy evaluation in Markovian noise model}\label{sec_VRTD}
In the Markovian noise model, we assume that Assumption~\ref{assump_rho_0} (ergodicity assumption) holds to ensure sample access to all the states.
As a consequence, we have the following lemma which characterizes the mixing time property under Assumption~\ref{assump_rho_0} (See \cite{bertsekas1995neuro}). 
\begin{lemma}\label{lemma_assump_rho}
	There exist constants $\skipcon>0$ and $\mixrate\in(0,1)$ such that for any policy $\pi$, we have $\nu^\pi > 0$ and ~
	$
		\max_{s\in S} \|(P_\pi)^t_{(s,\cdot)}-\nu^\pi\|_{1}\leq \skipcon \cdot \mixrate^t, ~ \text{ for all } \; t\in \mathbb{Z}_+.
	$
\end{lemma}

Assumption~\ref{assump_rho_0} and Lemma~\ref{lemma_assump_rho} guarantee that the mixing time defined in Assumption~\ref{assump_mixing} is bounded as $\tmix \leq \order\left(\tfrac{\log(\skipcon)}{\log(1/\mixrate)}\right).$
Under this assumption, we aim to solve the following Bellman equation for a fixed policy $\pi$:
\begin{align}\label{bellman_eq}
	Q = P^\pi Q + c - \rho^\pi \mathbf{1}, \quad Q \in \bbr^{|\calS||\calA|}.
\end{align}

\revision{}{
It is worth noting \emph{the issue of lacking exploration in the action space} when solving the above Bellman equation. We need access to samples of all state-action pairs in $\mathcal{S} \times \mathcal{A}$ to compute an accurate solution of \eqref{bellman_eq}. Clearly, under the ergodicity assumption (Assumption~\ref{assump_rho_0}), we have enough explorations over the state space for all feasible policies. However, when the policy $\pi$ is not a sufficiently random policy (common in cases where the optimal policy is a greedy policy), then the action space may not be sufficiently explored. More precisely, when there exists $s\in \calS, a \in \calA$ such that $\pi(a|s)$ is very close or equal to $0$, we do not have enough samples for this state-action pair, posing challenges in learning the corresponding differential Q-function. 
}

\revision{}{To address the aforementioned issue, we will first focus on the simple case where the policy is sufficiently random (Section~\ref{sec_random_policy}). Then in Section~\ref{sec:insufficient_random}, we propose a novel exploration framework to remedy the lack of exploration issue encountered when the policy is not sufficiently random. Notice that our new exploration framework guarantees the same optimal sample complexity in $\epsilon$ for both sufficiently random and insufficiently random policies, which is achieved for the first time in both AMDP and DMDP literature.}

\subsubsection{Evaluation of sufficiently random policy}\label{sec_random_policy}
\revision{
When analyzing the sampling scheme of the Markovian noise model, we define a natural weighting diagonal matrix
$$
		D^\pi := \diag\big([D^\pi_1; ...; D^\pi_{|\calS|}]\big)\in \bbr^{(|\calS|\times|\calA|)\times(|\calS|\times|\calA|)}, 
	 $$
	where $D^\pi_s := \nu^\pi(s) \cdot \diag(\pi(\cdot|s)) \in \bbr^{|\calA|\times|\calA|}$.
Clearly, the diagonal elements of $D^\pi$ are the steady-state distribution of all state-action pairs induced by the policy $\pi$. For algorithmic design and analysis, \revision{we desire the matrix $D^\pi$ to be positive definite}{a desired $D^\pi$ should be positive definite}. However, the matrix $D^\pi$ can be ill-conditioned or even not positive definite when there exists $s\in\calS, a\in\calA$ such that $\pi(a|s)$ is close or equal to $0$. Towards this end, we first restrict our attention to sufficiently random policy $\pi$, and defer the discussion of insufficiently random policy $\pi$ to Section~\ref{sec:insufficient_random}. 
Specifically, we assume there exists some $\underline \pi>0$ such that $\pi(s|a) \geq \underline \pi$ for all $s\in \calS, a \in \calA$, thus $D^\pi\succ 0$.
For brevity of notation, we let $P := P^\pi, D:=D^\pi, \nu := \nu^\pi$ and $\rho^* := \rho^\pi$. 
For large state and action spaces, it is common to seek a low-dimensional approximation to the differential Q-function by implementing linear function approximation. In particular, one chooses $\mathbb{S}:=\text{span}\{\psi_1, ..., \psi_d\}$ for $d$ linearly independent basis vectors $\psi_1, ..., \psi_d \in \bbr^{|\calS|\times |\calA|}$. For each state action pair $s\in \calS, a \in \calA$, we let $\psi(s,a): = [\psi_1(s,a),\psi_2(s,a),...,\psi_d(s,a)]^\top$ denote its feature vector. Additionally, we assume that $\|\psi(s,a)\|_2\leq 1$ for each state action pair, which can be ensured through feature normalization. Letting $\Pi_{\mathbb{S}, D}$ denote the projection onto the subspace $\mathbb{S}$ with respect to the $\|\cdot\|_D$-norm, then one can instead solve the following projected Bellman equation:
\begin{align*}
	\bar  Q = \Pi_{\calS, D} (P \bar  Q +c -\rho^* \mathbf{1} ).
\end{align*}
}{
In this subsection, we consider the simple case such that
\begin{align}\label{assump_random_policy}
There~ exists~ a ~\underline \pi>0~ such ~that~ \pi(s|a) \geq \underline \pi ~for ~all~ s\in \calS, a \in \calA.
\end{align}
When analyzing the sampling scheme of the Markovian noise model, we define a natural weighting diagonal matrix
$$
		D^\pi := \diag\big([D^\pi_1; ...; D^\pi_{|\calS|}]\big)\in \bbr^{(|\calS||\calA|)\times(|\calS||\calA|)}, 
	 $$
	where $D^\pi_s := \nu^\pi(s) \cdot \diag(\pi(\cdot|s)) \in \bbr^{|\calA|\times|\calA|}$.
 Clearly, the diagonal elements of $D^\pi$ are the steady-state distribution of all state-action pairs in the Markov chain induced by the policy $\pi$, and we have $D^\pi \succ 0$ due to Assumption~\ref{assump_rho_0} and the condition $\pi(s|a) \geq \underline \pi$. For brevity of notation, in the remaining part of Section~\ref{sec_random_policy}, we let $P := P^\pi, D:=D^\pi, \nu := \nu^\pi$ and $\rho^* := \rho^\pi$. 
 
 To deal with large state and action spaces, we recall the linear function approximation introduced in Section~\ref{subsec_linear_function_appox}. Specifically, we choose $\mathbb{S}:=\text{span}\{\psi_1, ..., \psi_d\}$ for $d$ linearly independent basis vectors $\psi_1, ..., \psi_d \in \bbr^{|\calS|\times |\calA|}$. For each state action pair $s\in \calS, a \in \calA$, we let $\psi(s,a): = [\psi_1(s,a),\psi_2(s,a),...,\psi_d(s,a)]^\top$ denote its feature vector. Additionally, we assume that $\|\psi(s,a)\|_2\leq 1$ for each state-action pair, which can be ensured through feature normalization. Let $\Pi_{\mathbb{S}, D}$ denote the projection onto the subspace $\mathbb{S}$ with respect to the $\|\cdot\|_D$-norm. Then instead of solving Eq.~\eqref{bellman_eq}, we consider solving the following projected Bellman equation:
\begin{align*}
	\bar  Q = \Pi_{\calS, D} (P \bar  Q +c -\rho^* \mathbf{1} ).
\end{align*}}
This is equivalent to  find $\theta^* \in \mathbb{R}^d$ such that
\begin{align}\label{projected_bellman}
	\Psi^\top D \Psi \theta^* = \Psi^\top D P \Psi \theta^* + \Psi^\top D (c - \rho^* \mathbf{1}),
\end{align}
where $\Psi := [\psi_1, ..., \psi_d]$.
Next, we define a set of vectors in $\bbr^{|\calS|\times |\calA|}$ that have identical value on the support of $\pi$, i.e.,
\begin{align*}
	\mathcal{I} := \{x\in \bbr^{|\calS|\times |\calA|}|x(s,a) = 1~\text{if}~\pi(a|s)>0\}.
\end{align*}
For the sake of simplicity, we assume that $\mathcal{I} \cap\mathbb{S} = \phi$, which is equivalent to $\ones \notin \mathbb{S}$ in this subsection since $\pi$ is strictly positive. This condition can be easily verified by choosing the features appropriately. Then by Theorem 1 of \cite{tsitsiklis1999average},  Eq. \eqref{projected_bellman} has a unique solution. Furthermore, we define
\begin{align}\label{def_Qsol}
	\Qsol = \bar  Q^\pi+\widehat b\ones\quad\text{where}\quad\widehat b:=\arg\min_{b\in\bbr}\{\|\bar  Q^\pi+b\ones - \Psi \theta^*\|_D\}.
\end{align}
\textcolor{black}{In words, $\Qsol$ is the projection of $\Psi \theta^*$ onto the set of the differential Q-functions defined in Eq.~\eqref{bellman_0}.}

It is convenient in the analysis to have access to an orthonormal basis spanning the projected space $\mathbb{S}$. Define the matrix $B\in \R^{d\times d}$ as $B_{i,j}:= \langle \psi_i, \psi_j\rangle_D $ for each $i,j\index{[d]}$. \textcolor{black}{Note that $B\succ 0$ due to the linear independence of $\psi_i$'s and $D \succ 0$.}
Let
$\Phi:= [\phi_1 ,  \phi_2  , ..., \phi_d ] = \Psi B^{-\frac{1}{2}}.$
By construction, the vectors $\phi_1, \ldots, \phi_d$ satisfy $\langle \phi_i, \phi_j \rangle_{D} = \mathbb{I}(i = j)$. We further denote
%
\begin{align}\label{def_mu_M}
	\mu:=\lambda_{\min}(B)\quad \text{and} \quad	\tli{M:=\Phi^\top D P \Phi}.
\end{align}
Notice that $M$ can be viewed as a $d$-dimensional matrix that describes the action of $P$ on the projected space $\mathbb{S}$. 

To solve the projected Bellman equation~\eqref{projected_bellman_mod}, we define the deterministic operator and the corresponding stochastic operator calculated from sample $\xi=\{(s,a), (s',a'), c(s,a)\}$ as
\begin{align}
	&g(\theta, \rho) = \Psi^\top  D(\Psi \theta - P \Psi \theta - c + \rho \bfone),\label{deterministic_opt}\\
	&\widetilde g(\theta, \rho, \xi)= \left(\langle \psi(s,a)-\psi(s',a'), \theta \rangle - c(s,a) + \rho \right)\psi(s,a). \label{stochastic_opt}
\end{align}
The following lemma characterizes the strong monotonicity property of the operator $g(\theta, \rho)$.
\begin{lemma}\label{monotone_constant}
	Under Assumption~\ref{assump_rho_0}, assume $\pi(a|s)>0$ for all $s\in \calS, a \in \calA$,  then
	\begin{align}\label{def_monotonecon}
		\monotonecon := \inf_{x \in \mathbb{S}, \|x\|_D=1} x^\top D (I-P)x > 0,
	\end{align}
	As a consequence, for any $\theta, \theta' \in \bbr^d$ and $\rho\in \bbr$,
	\begin{align}
		\langle g(\theta, \rho) - g(\theta', \rho), \theta - \theta' \rangle \geq (\monotonecon)\|\Psi \theta - \Psi \theta'\|_D^2.
	\end{align}
\end{lemma}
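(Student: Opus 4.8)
The plan is to reduce the monotonicity consequence to the strict positivity of the infimum $\monotonecon$, and then establish that positivity through a nonexpansiveness argument for the $\|\cdot\|_D$-norm combined with compactness. For the consequence, I would first observe that the cost term $c$ and the term $\rho\bfone$ in the definition \eqref{deterministic_opt} of $g(\theta,\rho)$ do not depend on $\theta$, so they cancel in the difference, giving $g(\theta,\rho)-g(\theta',\rho) = \Psi^\top D(I-P)\Psi(\theta-\theta')$. Setting $x:=\Psi(\theta-\theta')\in\mathbb{S}$, the inner product $\langle g(\theta,\rho)-g(\theta',\rho),\theta-\theta'\rangle$ equals exactly $x^\top D(I-P)x$. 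By homogeneity, the defining infimum immediately yields $x^\top D(I-P)x \geq \monotonecon\,\|x\|_D^2$ for every $x\in\mathbb{S}$, which is precisely the claimed bound once we recall $\|x\|_D = \|\Psi\theta-\Psi\theta'\|_D$.

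For the strict positivity of $\monotonecon$, I would first show that $P=P^\pi$ is a nonexpansion in the $\|\cdot\|_D$-norm, i.e.\ $\|Px\|_D\leq\|x\|_D$. The two ingredients are that the state-action stationary distribution $d_{sa}=\nu^\pi(s)\pi(a|s)$ (the diagonal of $D$) is invariant under $P$, i.e.\ $\bfone^\top D P = \bfone^\top D$, and that each row of $P$ is a probability distribution. Applying Jensen's inequality row-wise and then summing against the weights $d_{sa}$, together with invariance, yields $\|Px\|_D^2 \leq \|x\|_D^2$. Cauchy--Schwarz in the $\langle\cdot,\cdot\rangle_D$ inner product then gives
\[
x^\top D(I-P)x = \|x\|_D^2 - \langle x, Px\rangle_D \geq \|x\|_D^2 - \|x\|_D\|Px\|_D \geq 0 .
\]
The key point is the equality case: if $x^\top D(I-P)x=0$ for a nonzero $x$, tracing the chain forces $\|Px\|_D=\|x\|_D$, which is equality in the row-wise Jensen step, and hence forces $x$ to be constant on the support of each transition row; under the ergodicity Assumption~\ref{assump_rho_0} (irreducibility, aperiodicity, and $\nu^\pi>0$) this propagates to make $x$ a global multiple of $\bfone$.

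To conclude, I would invoke the standing assumption $\mathcal{I}\cap\mathbb{S}=\phi$, which for a strictly positive policy $\pi$ is exactly the statement $\bfone\notin\mathbb{S}$. Hence the only multiple of $\bfone$ lying in $\mathbb{S}$ is the zero vector, so every nonzero $x\in\mathbb{S}$ fails to be constant and therefore satisfies $x^\top D(I-P)x>0$ strictly. Since $\{x\in\mathbb{S}:\|x\|_D=1\}$ is a compact subset of the finite-dimensional space $\mathbb{S}$ and the map $x\mapsto x^\top D(I-P)x$ is continuous, the infimum is attained at some point of unit $D$-norm, where the value is strictly positive; this gives $\monotonecon>0$.

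The main obstacle I anticipate is the equality-case analysis in the nonexpansion step: carefully arguing that equality $\|Px\|_D=\|x\|_D$, combined with irreducibility and aperiodicity, forces $x$ to be \emph{globally} constant rather than merely constant on individual transition supports. Everything else --- the cancellation in $g$, the Cauchy--Schwarz bound, and the compactness argument turning ``pointwise positive'' into ``uniformly bounded below'' --- is routine once this characterization is in hand.
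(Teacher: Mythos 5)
Your overall route (reduce the monotonicity consequence to the quadratic form by cancellation and homogeneity, prove $P$ is a $\|\cdot\|_D$-nonexpansion via stationarity and row-wise Jensen, handle the equality case, and finish by compactness) is viable, and the first and last steps are exactly right. But the equality-case analysis --- which you yourself flag as the main obstacle --- has a genuine gap as stated: the claim that ``equality in the row-wise Jensen step forces $x$ to be constant on the support of each transition row'' and that this ``propagates under irreducibility and aperiodicity to make $x$ globally constant'' is false without an extra ingredient. Constancy on each row support only says that all \emph{successors} of a fixed $(s,a)$ share a common value; it does not tie that value to $x(s,a)$ itself. Concretely, take three states with transitions $1\to 2$, $2\to 3$, and $3\to\{1,2\}$ with probabilities $\tfrac12,\tfrac12$: this chain is irreducible and aperiodic, yet $x=(a,a,b)$ with $b\neq a$ is constant on every row support. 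The missing idea is harmonicity. From $x^\top D(I-P)x=0$ you get $\langle x,Px\rangle_D=\|x\|_D^2\leq \|x\|_D\|Px\|_D\leq\|x\|_D^2$, so both inequalities are tight; equality in Cauchy--Schwarz gives $Px=cx$, and then $c=1$, i.e.\ $Px=x$. Now $x(s,a)=(Px)(s,a)$ equals the common value of the successors of $(s,a)$, and \emph{that} identity does propagate along paths, so irreducibility of the state-action chain (which follows from irreducibility of the state chain plus $\pi>0$) forces $x$ to be a multiple of $\ones$, which $\ones\notin\mathbb{S}$ and $\|x\|_D=1$ rule out. With this repair your argument closes; without it, the propagation step fails.

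For comparison, the paper sidesteps the equality-case delicacy entirely with a polarization identity: writing the form as an expectation over one stationary transition and using that $(s,a)$ and $(s',a')$ have the same marginal law (the same invariance fact you use), it gets $x^\top D(I-P)x=\tfrac12\,\bbe\big[\big(x(s,a)-x(s',a')\big)^2\big]$. This is manifestly nonnegative, and vanishing forces $x(s,a)=x(s',a')$ along every positive-probability transition --- note this directly links source to target, which is precisely what your Jensen step alone does not give --- so irreducibility immediately yields constancy, and $\ones\notin\mathbb{S}$ gives the contradiction. Both proofs then conclude identically via the extreme value theorem on the compact unit $D$-sphere of $\mathbb{S}$. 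Your nonexpansion machinery is more general, but for this lemma the paper's one-line identity is shorter and avoids exactly the step where your outline is incomplete.
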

\proof{Proof.}
	For any $x\in \mathbb{S}, \|x\|_D=1$, let the expectation be taken on $s \sim \nu, a \sim \pi(\cdot|s), s' \sim \mathsf{P}(\cdot|s,a), a' \sim\pi(\cdot|s')$, then
	\begin{align}\label{proof_monotone_step_1}
		x^\top D (I-P)x = \bbe[x(s,a)^2] - \bbe[x(s,a)x(s',a')]\overset{(i)}= \tfrac{1}{2} \bbe[\big(x(s,a) - x(s',a')\big)^2]\overset{(ii)}>0,
	\end{align}
	where step (i) follows from the fact that $(s,a)$ and $(s',a')$ have the same marginal distribution, and step (ii) follows from the fact that $\mathbf{1}\notin \mathbb{S}$ and the Markov chain is irreducible. Meanwhile, given that this feasible region is compact and the function of interest is continuous, by the extreme value theorem, we obtain that $\monotonecon>0$, which completes the proof.
\endproof

\vgap

\revision{}{
 It should be noted that the term $({\monotonecon})^{-1}$, which replaces the role of the effective horizon $(1-\gamma)^{-1}$ in the discounted MDPs, is strongly related to the mixing time of the induced Markov chain. It can be shown that if the linear subspace $\mathbb{S}$ is properly chosen, i.e., $\langle \phi_i, \ones \rangle_D$ is not extremely close to 1 for all $i\in[d]$, then $({\monotonecon})^{-1} = \order(\tmix)$ (see Corollary 1.14 of \cite{montenegro2006mathematical}).   
}

\begin{algorithm}[h] \caption{Variance-reduced Temporal Difference (VRTD) Method for AMDPs}  
	\label{alg:VRTD}   
	\begin{algorithmic}[1]
		\STATE{\textbf{Input}: $  \widehat \theta_0 \in \R^d$, $\eta > 0$, $\tau, \tau' \in \mathbb{Z}_+$,  $\{N_k\}_{k=1}^{K}\subset \mathbb{Z}_+$ and $\{N_k'\}_{k=1}^{K} \subset \mathbb{Z}_+$.}
		\FOR{$ k = 1, \ldots, K$}
		\STATE{Set $\theta_1 = \widetilde \theta  = \widehat \theta_{k-1}$.}
            \STATE{Collect $N_k'$ state-action samples $\{(\widetilde s_i,\widetilde a_i)\}_{i=1}^{N_k'}$ from the single Markov trajectory induced by \quad\quad\qquad policy $\pi$, where each sample is the last one of $\tau'+1$ successive state-action samples.\\ Calculate $\widetilde \rho = \frac{1}{N_k'}\sum_{i=1}^{N_k'} c(\widetilde s_i, \widetilde a_i)$.}
		\STATE{Collect $N_k$ samples $\xi_i^k(\tau')=\{(s_i,a_i), (s_i',a_i'), c(s_i,a_i)\}$ from the single Markov trajectory  induced by policy $\pi$, where each sample is the last one of $\tau'+1$ successive samples.\\ Calculate $\widehat g(\widetilde\theta, \widetilde \rho) = \tfrac{1}{N_k} \tsum_{i=1}^{N_k} \widetilde g(\widetilde \theta, \widetilde \rho, \xi_i^k(\tau')).$}
		\FOR{$t=1, \ldots, T$}
		\STATE{ Collect a sample $\xi_{t}(\tau)=\{(s_t,a_t), (s_t',a_t'), c(s_t,a_t)\}$ from the single Markov trajectory, which is the last one of $\tau+1$ successive samples.}
		\STATE{
			\beq \label{TD_iid_step}
			\theta_{t+1} = \theta_t - \eta \left(\widetilde g(\theta_t,\widetilde \rho, \xi_t(\tau))-\widetilde g(\widetilde \theta, \widetilde \rho,\xi_t(\tau)) + \widehat g(\widetilde \theta,\widetilde \rho)\right).
			\eeq}
		\ENDFOR
		\STATE{Output of the epoch: \beq \label{eq:VRTD-ave-output}
			\widehat \theta_k = \tfrac{\sum_{t=1}^T \theta_t}{T},\quad\text{and}\quad \widehat \rho_k = \widetilde \rho.\eeq}
		\ENDFOR
	\end{algorithmic}
\end{algorithm}

We are now prepared to introduce the basic ideas of the Variance-reduced Temporal Difference (VRTD) Method   (Algorithm~\ref{alg:VRTD}) for solving \eqref{projected_bellman_mod}. First, it incorporates the idea of the variance-reduced TD learning for solving DMDPs from \cite{li2021accelerated}. This algorithm runs in epochs and utilizes the current best estimator $\widetilde \theta$ to recenter the updates inside each epoch, allowing us to capture the ``correct'' stochastic error of the problem. 
Second, the data generated by the Markovian noise model is highly correlated, which induces challenges in the algorithmic analysis. To overcome the difficulties, we incorporate the idea of sample skipping from \cite{kotsalis2020simple2, li2021accelerated} to reduce the bias induced by correlated data. The spirit of this sample skipping idea can be traced back to the classical work by \cite{yu1994rates}.
Third, in contrast to the DMDP environment, policy evaluation within AMDPs entails two objectives—$\rho^*$ for average reward/cost and $\theta^*$ for the parameterized differential Q-function. It is worth noting that estimating $\rho^*$ demands significantly fewer samples compared to estimating $\theta^*$. This distinction arises from the nature of the tasks. Estimating $\rho^*$ involves assessing a policy's steady-state behavior, whereas estimating $\theta^*$ necessitates acquiring a deeper understanding of the transition kernel.
\revision{}{In view of this observation, unlike the existing TD approaches that updates both objectives simultaneously \citep{tsitsiklis1999average, zhang2021finite}, our VRTD algorithm first collects samples to estimate $\rho^*$ at the beginning of each epoch and then performs (variance-reduced) stochastic approximation to update $\theta$. Consequently, we are able to significantly improve the sample complexity of the TD algorithms introduced by \cite{tsitsiklis1999average} and analyzed by \cite{zhang2021finite}. 
}

The following theorem characterizes the convergence of VRTD in terms of expected squared $\ell_D$-norm error, which is a natural convergence criterion (see, e.g., \cite{bhandari2018finite,mou2020optimal,li2021accelerated} for DMDPs) for policy evaluation in the RL literature. The proof of this result can be found in  Appendix~\ref{proof_thm_VRTD}.

\begin{theorem}\label{thm_VRTD}
	Fix the total number of epochs $K$ and positive integers $N$ and $N'$. Assume for each epoch $k\in [K]$, the parameters $\eta$, $T$, $N_k$ and $N'_k$ satisfy 
	\beq\label{stepsize_1}
	\eta\leq \tfrac{\monotonecon}{845},~~T\geq \tfrac{64}{\mu(\monotonecon)\eta},~~N_k\geq\max\big\{\tfrac{1160}{\mu(\monotonecon)^2}, (\tfrac{3}{4})^{K-k} N\big\},~~\text{and}~~N_k'\geq(\tfrac{3}{4})^{K-k} N'.
	\eeq
	Set the output of epoch $k$ to be $\widehat \theta_k:=\frac{\sum_{t=1}^T \theta_t}{T  }$. If $\tau$ and $\tau'$ are chosen to satisfy 
	\begin{align}\label{tau_tau_pp}
		\mixrate^{\tau}\leq \tfrac{\sqrt{\mu}\eta}{\mixcon}\quad \text{and} \quad \mixrate^{\tau'} \leq \min\{\tfrac{\min_{s\in \calS} \nu(s)}{\skipcon}, \tfrac{1}{4 \Cmax+1}\},
	\end{align}
	then we have 
	$
	\bbe[(\widehat \rho_K - \rhosol)^2] \leq \tfrac{5\cbound^2}{N'},
	$
	and
	\begin{align}\label{conclusion_thm_4}
		\bbe\|\Psi \widehat \theta_K - \Psi \thetasol\|_D^2 \leq \tfrac{1}{2^K}\|\Psi \theta^0 - \Psi \thetasol\|^2_D  + \tfrac{3W_1}{N}+\tfrac{108\cbound^2}{N'(\monotonecon)^2\mu}+\tfrac{3\cbound^2}{4N'},
	\end{align}
where $W_1:= 22\trace\{(I_d-M)^{-1}\bar  \Sigma(I_d-M)^{-\top}\} +  \tfrac{4}{\mu(\monotonecon)^2}\bbe[\|(\Qsol - \Psi\theta^*) \|_D^2]$ and 
	$\bar \Sigma := \cov\big[B^{-\frac{1}{2}}
	\big(\langle\psi(s,a) -  \psi(s',a'),\theta^*\rangle - c(s, a)\big) \psi(s,a)\big],$
	for $s\sim \nu, a\sim \pi(\cdot|s), s'\sim \mathsf{P}(\cdot|s,a), a'\sim\pi(\cdot|s').$
\end{theorem}



Noting that the error caused by inexact feature approximation is unavoidable, we set the stopping criterion of VRTD as finding a solution $\widehat \theta\in \bbr^d$ to achieve the accuracy $\bbe\|\Psi \widehat \theta - \Psi \theta^*\|_D^2 \leq \order(1)\|\Psi \theta^*- \Qsol\|_D^2+\epsilon$. From the statement of Theorem~\ref{thm_VRTD}, we conclude that the number of required epochs is bounded by $\order\{\log(\|\Psi \theta^0 - \Psi \thetasol\|^2_D/\epsilon)\}$ and the total number of samples is $\sum_{k=1}^K \big(\tau T + \tau'(N_k+N_k')\big)$, which is of the order 
\begin{align*}
	\order\big\{ \tfrac{\tmix}{(\monotonecon)^2\mu}\log(\tfrac{1}{\epsilon})  + \tfrac{\tmix\cbound^2}{(\monotonecon)^2\mu\epsilon} + \underbrace{\tfrac{\tmix\cdot\trace\{(I_d-M)^{-1}\bar{\Sigma}(I_d-M)^{-\top}\} }{\epsilon}}_{\text{dominant stochastic error}} \big\}.
\end{align*}

We would like to provide some comments on the dominant term in the complexity discussed above. This term represents an instance-dependent characterization of the stochastic error associated with solving the projected Bellman equation \eqref{projected_bellman} while having access to samples from the transition kernel. A direct extension of Proposition 1 of \cite{li2021accelerated} demonstrates that this term matches the asymptotic instance-dependent lower bound on the stochastic error for solving Eq. \eqref{projected_bellman} with i.i.d. samples up to an additional mixing time factor. For fair comparison to the existing literature, we apply a direct upper bound $\order\big\{\frac{\|\thetasol\|_2^2}{(\monotonecon)^2\mu}\big\}$
on $\trace\{(I_d-M)^{-1}\bar{\Sigma}(I_d-M)^{-\top}\}$, as well as the upper bound $\order(\tmix)$ on $(\monotonecon)^{-1}$, thus arriving at the sample complexity
$
	\order\big\{ \tfrac{\tmix^3}{\mu}\log(\tfrac{1}{\epsilon})  + \tfrac{\tmix^3(\cbound^2+\|\theta^*\|_2^2)}{\mu\epsilon} \big\}.
$

To the best of our knowledge, the only work that had established finite sample complexity for solving the AMDP policy evaluation problem under Markovian noise is \citet{zhang2021finite}. They analyzed the TD($\lambda$) method proposed in \cite{tsitsiklis1999average}, where the algorithm simultaneously updates estimators of both objectives, i.e., average reward/cost $\rho^\pi$ and differential value function $\bar V^\pi$. Compared to their guarantees, our sample complexity outperforms by a factor of $\order((\monotonecon)^{-2})$ or equivalently $\order(\tmix^2)$. In addition, their  convergence analysis was conducted in $\ell_2$-norm, which is not a natural metric for the underlying problem, thus leading to worse dependences on other problem parameters, e.g., dimension of the transition kernel. It is noteworthy that recently \citet{mou2022optimal} proposed a variance-reduced stochastic approximation approach that solves the AMDP policy evaluation problem in span semi-norm under the generative model. However, their results do not directly extend to the Markovian noise setting. Meanwhile, it is hard to obtain convergence guarantees in terms of the ``bias'' by using their metric and analysis techniques. 

\vgap

Next, we establish a bias bound in Theorem~\ref{thm_VRTD_bias},
whose proof is deferred to Appendix~\ref{proof_thm_VRTD_bias}.
\begin{theorem}\label{thm_VRTD_bias}
	Fix the total number of epochs $K$ and positive integers $N$ and $N'$. Assume for each epoch $k\in [K]$, the parameters $\eta$, $N$, $N'$ and $T$ satisfy conditions~\eqref{stepsize_1} and 
	\beq\label{stepsize_thm_2}
	N_k= N\geq\tfrac{1160}{\mu(\monotonecon)^2},~~N_k'= N'\geq \tfrac{1}{\mu(\monotonecon)^2} .
	\eeq
	Set the output of epoch $k$ to be $\widehat \theta_k:=\frac{\sum_{t=1}^T \theta_t}{T}$. Then we have
	\begin{align}\label{result_VRTD_bias}
		\|\Psi \bbe[\widehat \theta_K] - \Psi \theta^*\|_D^2 \leq \tfrac{1}{2^K}\|\Psi \theta^0 - \Psi \thetasol\|_D^2 +\big( b_1 \Cmax^2 \mixrate^{2\tau'} + \tfrac{b_2\Cmax\mixrate^\tau}{(\monotonecon)\sqrt{\mu}}\big) \big(\|\Psi \theta^0 - \Psi \thetasol\|_D^2 +  \tfrac{W_1}{N} + \cbound^2\big),
	\end{align}
	where $b_1$ and $b_2$ are universal constants.
\end{theorem}

According to Theorem~\ref{thm_VRTD_bias}, in order to find a solution $\widehat \theta\in \bbr^d$ such that $\|\Psi  \bbe[\widehat\theta] - \Psi \theta^*\|_D^2 \leq \order(1)\|\Psi \theta^*- \Qsol\|_D^2+\epsilon$, we need to set the parameters $\tau$ and $\tau'$ to $\tau, \tau' \geq \order\{\tmix\log(1/\epsilon)\}$, and as a result, the total sample complexity can be bounded by
\begin{align}\label{complexity_bias_VRD+TD}
	\order\big\{ \tfrac{\tmix}{(\monotonecon)^2\mu}\log^2(\tfrac{1}{\epsilon})\big\}, \quad \text{or equivalently}\quad\order\big\{ \tfrac{\tmix^3}{\mu}\log^2(\tfrac{1}{\epsilon})\big\}.
\end{align}
It should be noted that this result naturally extends to policy evaluation of DMDPs under Markovian noise. 
To the best of our knowledge, this geometric convergence rate on the bias of policy evaluation is new to both AMDP and DMDP literature. \revision{}{We will utilize this new complexity bound in Section~\ref{sec_sample_complexity_Mark} to establish sample complexity bounds for SPMD in the Markovian model.}

\subsubsection{Evaluation of insufficiently random policies}\label{sec:insufficient_random}
In this section, we remove the assumption on sufficiently random policies, i.e., \eqref{assump_random_policy}. Precisely, we aim to evaluate a policy $\pi$ where $\pi(a|s)$ can be $0$ or arbitrarily small for any state-action pair. 

As mentioned before, a possible lack of exploration in this setting poses significant challenges for on-policy evaluation. A classical method to remedy this issue is through off-policy learning, where the agent chooses actions according to a behavior policy that encourages exploration. However, in this manner, the agent is not able to collect rewards according to the policy of interest, thus losing the advantages of on-policy reinforcement learning. On the other hand, one may also argue that we can directly perturb the policy to encourage more exploration. However, this is still an off-policy method, and the differential Q-function of the perturbed policy can be very different from that of the original policy.

In this section, we seek an on-policy method to remedy this exploration issue. More specifically, we propose an exploratory variance-reduced temporal difference (EVRTD) algorithm that alternates between the policy of interest $\pi$ and a perturbed policy, denoted as $\widetilde \pi$. 
The EVRTD algorithm for insufficiently random policy $\pi$ has a similar structure to Algorithm~\ref{alg:VRTD}, with differences only lying in the sample collection steps. In particular, the EVRTD algorithm replaces steps 5 and 7 in Algorithm~\ref{alg:VRTD} with the following ones. Here again, we set $P:=P^\pi$, $\nu:=\nu^\pi$ and $\rho^* := \rho^\pi$ for brevity of notation.

\vspace{0.1in}

\emph{Step $5'$}: Run the Markov trajectory induced by policy $\pi$ for $\tau'$ steps and obtain the current state $s$. Then generate an action $a\sim \widetilde \pi(\cdot|s)$, the next state $s'\sim \mathsf{P}(\cdot|s,a)$, the next action $a' \sim \pi(\cdot|s')$, and collect the tuple $\{(s,a), (s',a'), c(s,a)\}$. Repeat for $N_k$ times, and index the samples as  $\xi_i^k(\tau')=\{(s_i,a_i), (s_i',a_i'), c(s_i,a_i)\}$. 
Calculate $\widehat g(\widetilde\theta, \widetilde \rho) = \tfrac{1}{N_k} \tsum_{i=1}^{N_k} \widetilde g(\widetilde \theta, \widetilde \rho, \xi_i^k(\tau')).$

\emph{Step $7'$}:  Run the Markov trajectory induced by policy $\pi$ for $\tau$ steps and obtain the current state $s_t$. Then generate an action $a_t\sim \widetilde \pi(\cdot|s_t)$, the next state $s_t'\sim \mathsf{P}(\cdot|s_t,a_t)$, the next action $a_t' \sim \pi(\cdot|s_t')$, and collect the tuple $\{(s_t,a_t), (s_t',a_t'), c(s_t,a_t)\}$.

\vspace{0.1in}

Clearly, the EVRTD algorithm is still an on-policy algorithm since, in most time steps, the agent collects rewards/costs following the policy of interest $\pi$. We now explain some intrinsic ideas of the EVRTD algorithm by focusing on one sample tuple  $\{(s,a), (s',a'), c(s,a)\}$. By choosing a large enough $\tau$ or $\tau'$, the distribution of the state $s$ converges to the stationary distribution $\nu$ of the original Markov chain induced by policy $\pi$. Invoking that $a\sim \widetilde \pi(\cdot|s)$, $s'\sim \mathsf{P}(\cdot|s,a)$ and $a'\sim \pi(\cdot|s')$, the projected fixed point equation we are trying to solve is
\begin{align}\label{projected_bellman_mod}
	\Psi^\top \widetilde D \Psi \theta^* = \Psi^\top \widetilde D P \Psi \theta^* + \Psi^\top \widetilde D (c - \rho^* \mathbf{1}),
\end{align}
where $\widetilde D$ is defined as
$
	\widetilde D := \diag\big([\widetilde D_1; ...; \widetilde D_{|\calS|}]\big)\in \bbr^{(|\calS|\times|\calA|)\times(|\calS|\times|\calA|)}, 
$
and $\widetilde D_s := \nu^\pi(s) \cdot \diag(\widetilde \pi(\cdot|s))$. Then Eq.~\eqref{projected_bellman_mod} can be equivalently written as 
\begin{align*}
	\bar  Q = \Pi_{\mathbb{S}, \widetilde D} (P \bar  Q +c -\rho^* \mathbf{1} ).
\end{align*}
It is easy to see that instead of solving the Bellman equation induced by a perturbed policy, we are still solving a projected version of the average-reward Bellman equation~\eqref{bellman_eq} induced by the original policy $\pi$. \tli{This projected Bellman equation completely reduces to \eqref{bellman_eq} when $\exists b \in \bbr,~ \text{s.t.}~ \bar Q^\pi + b\ones \in \mathbb{S}$.} Moreover, we have $\widetilde D \succ 0$ when the underlying Markov chain is ergodic and the policy $\widetilde\pi(a|s) >0$ for all $(s, a)$ pairs. With a slight abuse of notation, in this subsection we define the matrix $B\in \R^{d\times d}$ by letting $B_{i,j}:= \langle \psi_i, \psi_j\rangle_{\widetilde D} $ for each $i,j\index{[d]}$, and let
\begin{align}\label{def_M_EVRTD}
\Phi:= [\phi_1 ,  \phi_2  , ..., \phi_d ] = \Psi B^{-\frac{1}{2}},\quad\mu:=\lambda_{\min}(B), \quad \widetilde M:=\Phi^\top \widetilde D P \Phi,
\end{align}
and 
$\widetilde \Sigma := \cov\big[B^{-\frac{1}{2}}
\big(\langle\psi(s,a) -  \psi(s',a'),\theta^*\rangle - c(s, a)\big) \psi(s,a)\big],$
for $s\sim \nu, a\sim \widetilde \pi(\cdot|s), s'\sim \mathsf{P}(\cdot|s,a) \text{ and } a'\sim\pi(\cdot|s')$.
We modify the definition of the deterministic operator correspondingly
\begin{align}\label{deterministic_opt_perturb}
	g(\theta, \rho) = \Psi^\top \widetilde D(\Psi \theta - P \Psi \theta - c + \rho \bfone).
\end{align}

Now switch our attention to the construction of the perturbed policy $\widetilde \pi$. We first fix a constant $0<\underline \pi<1 $. let $\calA_s:=\{a\in \calA| \pi(a|s) > {\underline \pi/2}\}$ \revision{}{denote the set of actions that does not lack exploration}, and the policy $\widetilde \pi$ is constructed as follows,
	\begin{align}\label{construct_tilde_pi}
		\forall s \in \calS, 
		~~\widetilde \pi(a|s) := \begin{cases} \underline \pi, & \text{if}~a \notin \calA_s\\\pi(a|s)\cdot \tfrac{1- (|\calA|-|\calA_s|)\underline \pi}{\sum_{a'\in \calA_s}\pi(a'|s)}, & \text{if}~a \in \calA_s
		\end{cases}.
	\end{align}
	We have that $\tfrac{1- (|\calA|-|\calA_s|)\underline \pi}{\sum_{a'\in \calA_s}\pi(a'|s)}=\tfrac{1- (|\calA|-|\calA_s|)\underline \pi}{1-\sum_{a'\notin \calA_s}\pi(a'|s)}\leq \tfrac{1- (|\calA|-|\calA_s|)\underline \pi}{1- (|\calA|-|\calA_s|)\underline \pi/2}\leq 1$. On the other hand, when $\underline \pi \leq \tfrac{1}{|\calA|- \min_{s\in\calS}|\calA_s|},$ we have $\tfrac{1- (|\calA|-|\calA_s|)\underline \pi}{\sum_{a'\in \calA_s}\pi(a'|s)}\geq 0$. Therefore, we ensure that $\widetilde \pi(\cdot|s)\in\Delta_{|\calA|}$ for all $s\in \calS$. It should be noted that the EVRTD method reduces to the VRTD method (Algorithm~\ref{alg:VRTD}) when $\calA_s=\calA$ for all $s\in\calS$.
\begin{lemma}\label{monotone_constant_perturb}
	Assume Assumption~\ref{assump_rho_0} holds. Recalling the definition of $\monotonecon$ in \eqref{def_monotonecon},  we have
	\begin{align}\label{lemma_monotone_before_perturb}
		\monotonecon > 0. 
	\end{align}
	Assume $|\calS|\geq 2$. When $\min_{s\in\calS}|\calA_s|\neq |\calA|$, if we take 
	\begin{align}\label{underline_pi}
		\underline \pi \leq  \tfrac{(\monotonecon)\min_{s\in\calS}\nu(s)}{(|\calA|- \min_{s\in\calS}|\calA_s|)(8+(\monotonecon)\min_{s\in\calS}\nu(s))},  
	\end{align}
	and construct the perturbed policy $\widetilde \pi$ as \eqref{construct_tilde_pi}, then
	\begin{align}\label{lemma_monotone_perturb}
		\inf_{x \in \mathbb{S}, \|x\|_{\widetilde D}=1} x^\top \widetilde D (I-P)x \geq \min\big\{\tfrac{27\alpha(\underline \pi)(\monotonecon)}{32}, \tfrac{1-\alpha(\underline \pi)(\monotonecon)}{8}\big\},
	\end{align}
	where $\alpha(\underline \pi) := 1- (|\calA| - \min_{s\in \calS}|\calA_s|)\underline \pi\geq \tfrac{8}{9}$. As a consequence, Eq.~\eqref{projected_bellman_mod} has a unique solution. Moreover, for any $\theta, \theta' \in \bbr^d$ and $\rho\in \bbr$,
	\begin{align}
		\langle g(\theta, \rho) - g(\theta', \rho), \theta - \theta' \rangle \geq \min\big\{\tfrac{27\alpha(\underline \pi)(\monotonecon)}{32}, \tfrac{1-\alpha(\underline \pi)(\monotonecon)}{8}\big\}\cdot\|\Psi \theta - \Psi \theta'\|_{\widetilde D}^2.
	\end{align}
\end{lemma}
\noindent See Appendix~\ref{proof_of_lemma_perturb} for the detailed proof of this lemma.

By Lemma~\ref{monotone_constant_perturb} above,  we conclude that Eq.~\eqref{projected_bellman_mod} has a unique solution, and that the strong monotonicity of the deterministic operator $g$ holds. Consequently, we obtain the convergence guarantees of EVRTD by proving the analogs of Theorem~\ref{thm_VRTD} and \ref{thm_VRTD_bias}. 
\begin{proposition}\label{prop_EVRTD}
 After replacing $\monotonecon$ with $\min\big\{\tfrac{27\alpha(\underline \pi)(\monotonecon)}{32}, \tfrac{1-\alpha(\underline \pi)(\monotonecon)}{8}\big\}$, $\ell_D$-norm with $\ell_{\widetilde D}$-norm, $M$ with $\widetilde M$, $\bar \Sigma$ with $\widetilde \Sigma$, and slightly modifying some universal constants, the conclusions of Theorem~\ref{thm_VRTD} and \ref{thm_VRTD_bias} also hold for the EVRTD algorithm.
\end{proposition}
In conclusion, we develop the EVRTD algorithm, where the agent follows the original policy $\pi$ in most of the time and follows the perturbed policy $\widetilde \pi$ only when collecting samples for stochastic approximation updates. 
The EVRTD algorithm recovers the convergence guarantees of VRTD in a weighted $\ell_2$-metric induced by the positive definite matrix $\widetilde D\succ \tfrac{\alpha(\underline \pi)\cdot\underline \pi}{2} I$. Notice that when our desired guarantee is in $\ell_\infty$-norm or $\ell_2$-norm, we need to pay for additional dependence on $1/\lambda_{\min}(\widetilde D)$.

\begin{remark}
	The design idea of the EVRTD algorithm naturally extends to the DMDP setting, by replacing the $\monotonecon$ with $1-\gamma$, where $\gamma$ is the discount factor. We provide proof of the analog of Lemma~\ref{monotone_constant_perturb} for DMDPs in Appendix~\ref{extension_to_DMDP}.
\end{remark}

\subsubsection{Sample complexities for SPMD in Markovian noise}\label{sec_sample_complexity_Mark} \revision{}{In this subsection, we provide the sample complexities for using the SPMD method with EVRTD as a critic to solve AMDPs in Markovian noise. We consider both unregularized AMDPs and regularized AMDPs.}


\revision{For unregularized AMDPs, notice that Theorem~\ref{thm_unichain_SPMD} still holds given that Assumption~\ref{assump_rho_0} is stronger than Assumption~\ref{assump_mixing}. We first implement the policy mirror descent in Theorem~\ref{thm_unichain_SPMD} with  EVRTD critic solver and establish the following sample complexity.
\begin{corollary}[Sample complexity of unregularized AMDPs]\label{coro_comp_Mark_1}
	Consider the mixing AMDPs which satisfy Assumption~\ref{assump_rho_0}. Suppose the number of iterations $K$ of Algorithm~\ref{alg:SPMD} is given a prior and $\lambda = \tfrac{\sqrt{2\log|\calA|}}{\kappa\sqrt{K}}$. Suppose that each policy $\pi_k$ is evaluated using the EVRTD algorithm with parameter selection in Theorem~\ref{thm_VRTD_bias} for $\log (1/\epsilon)$ epochs. Then a solution $\widehat \pi$, which satisfies that $\bbe[\rho(\widehat \pi)-\rho(\pi^*)]\leq \order(1)\eapprox+ \epsilon$, can be found in at most $\order(\kappa^2 \log|\calA|/\epsilon^2)$ SPMD iterations and the total number of state-action transition samples can be bounded by
	$
	\widetilde\order\big( \tfrac{\tmix^3\kappa^2}{\bar\mu\epsilon^2} \big).
	$
	Moreover, we have $\kappa^2\leq2\max_\pi\|\Psi \theta_\pi^*\|_\infty^2 + \order\big(\max_{\pi} \tfrac{\|\theta_\pi^*\|_2^2}{\lambda_{\min} (\widetilde D^{\pi})}\big)$.
\end{corollary}
\proof{Proof.}
	For each SPMD iteration, to ensure policy evaluation bias to satisfy $\varsigma_k\leq \epsilon+\order(1)\eapprox$, we require $\widetilde{\order}(\frac{\tmix^3}{\bar\mu})$ samples for policy evaluation by Theorem~\ref{thm_VRTD_bias}. Therefore, the total number of samples can be bounded by
	$
	\widetilde\order\big( \tfrac{\tmix^3\kappa^2}{\bar\mu\epsilon^2} \big).
	$
	Next, we proof an upper bound for $\kappa^2$. 
	For $k=0,...,K-1$, by Theorem~\ref{thm_VRTD} we can upper bound $\kappa_k^2$ by
	\begin{align*}
		\kappa_k^2 &\leq 2\|\Psi \theta_{\pi_k}^*\|_\infty^2+ 2\bbe\|\stochQ{k} - \Psi \theta_{\pi_k}^*\|_\infty^2 \leq 2\|\Psi \theta_{\pi_k}^*\|_\infty^2 + 2\bbe[\|\stochQ{k} - \Psi \theta_{\pi_k}^*\|_\infty^2/\lambda_{\min}(\widetilde D^{\pi_k})]\\
		&\leq 2\|\Psi \theta_{\pi_k}^*\|_\infty^2 + \order\big(\tfrac{\|\theta_{\pi_k}^*\|_2^2}{\lambda_{\min} (\widetilde D^{\pi_k})}\big)\leq 2\max_\pi\|\Psi \theta_\pi^*\|_\infty^2 + \order\big(\max_{\pi} \tfrac{\|\theta_\pi^*\|_2^2}{\lambda_{\min} (\widetilde D^{\pi})}\big),
	\end{align*}
	as desired.
\endproof}
{To establish the convergence guarantees, we require some additional notation. For a fixed policy $\pi$, we let $\bar\theta_\pi$ denote the solution of the projected Bellman equation~\eqref{projected_bellman_mod} induced by policy $\pi$.
Since the approximation error introduced by linear function approximation cannot be eliminated, we can only establish the convergence to a neighborhood of the optimal solution whose diameter is the approximation error defined as $\eapprox:=\max_{\pi}\spannorm{\Psi \bar\theta_\pi - \bar Q^{\pi}}$. Moreover, we let $\bar\mu$ be a lower bound of $\mu$ defined in \eqref{def_mu_M} across all feasible policies.

For unregularized AMDPs, we recall the convergence guarantee of SPMD in Theorem~\ref{thm_unreg_1} and establish the following sample complexity.


\begin{corollary}[Sample complexity of unregularized AMDPs]\label{coro_comp_Mark_1}
	Assume the Markovian noise setting and Assumption~\ref{assump_rho_0} holds. Consider the ergodic AMDPs with no regularizer. Suppose that the SPMD stepsize parameters are set according to Theorem~\ref{thm_unreg_1}, and that each policy $\pi_k$ is evaluated by the EVRTD algorithm with parameter selection in Proposition~\ref{prop_EVRTD} for $\mathcal{O}(\log (1/\epsilon))$ epochs. Then a solution $\widehat \pi$ satisfying $\bbe[\rho(\widehat \pi)-\rho(\pi^*)]\leq \order(1)\eapprox+ \epsilon$ can be found in at most $\order(\kappa^2 \log|\calA|/\epsilon^2)$ SPMD iterations and the total number of state-action transition samples can be bounded by
	$$
	\widetilde\order\big( \tfrac{\tmix^3\kappa^2}{\bar\mu\epsilon^2} \big).
	$$
	Moreover, we have \tli{$\kappa^2\leq\order\big( \tmix^2 + \eapprox^2 +  \tfrac{\bar \mu^{-1}\tmix^2 +(\bar \mu^{-1} + 1) \eapprox^2}{\min_{\pi}\{\lambda_{\min} (\widetilde D^{\pi})\}}\big)$.}
\end{corollary}
\proof{Proof.}
	For each SPMD iteration, to ensure policy evaluation bias to satisfy $\varsigma_k\leq \epsilon+\order(1)\eapprox$, we require $\widetilde{\order}(\frac{\tmix^3}{\bar\mu})$ samples for policy evaluation by \tli{Proposition~\ref{prop_EVRTD} and} Theorem~\ref{thm_VRTD_bias}. \tli{Here, due to the linear rate in the complexity \eqref{complexity_bias_VRD+TD}, the price we paid for transferring from $\|\cdot\|_{\widetilde D^{\pi_k}}$ to $\|\cdot\|_{\text{sp}}$ will only be a logarithmic factor of problem parameters. Then, together with Theorem~\ref{thm_unreg_1},} the total number of samples can be bounded by
	$
	\widetilde\order\big( \tfrac{\tmix^3\kappa^2}{\bar\mu\epsilon^2} \big).
	$
	Next, we prove an upper bound for $\kappa^2$. 
	For $k=0,...,K-1$, by \tli{Proposition~\ref{prop_EVRTD} and Ineq.~\eqref{conclusion_thm_4} of Theorem~\ref{thm_VRTD}} we can upper bound $\kappa_k^2$ by
 \tli{
	\begin{align}\label{derive_kappa_bound}
		\kappa_k^2 &\leq 2\spannorm{\Psi \theta_{\pi_k}^*}^2+ 2\bbe\spannorm{\stochQ{k} - \Psi \theta_{\pi_k}^*}^2\nn\\
  &\overset{(i)}\leq 4\spannorm{\bar Q^{\pi_k}}^2 + 4 \spannorm{\bar Q^{\pi_k} - \Psi \theta_{\pi_k}^*}^2 + 2\bbe[\|\stochQ{k} - \Psi \theta_{\pi_k}^*\|_{\widetilde D^{\pi_k}}^2/\lambda_{\min}(\widetilde D^{\pi_k})]\nn\\
		&\overset{(ii)}\leq 4\spannorm{\bar Q^{\pi_k}}^2 + 4 \spannorm{\bar Q^{\pi_k} - \Psi \theta_{\pi_k}^*}^2 + \mathcal{O}\left( \tfrac{\mu (1-\beta)^2 W_1}{\lambda_{\min} (\widetilde D^{\pi_k})}  \right) \nn\\
  &\overset{(iii)}\leq 4\spannorm{\bar Q^{\pi_k}}^2 + 4 \spannorm{\bar Q^{\pi_k} - \Psi \theta_{\pi_k}^*}^2 + \order\big(\tfrac{\|\theta_{\pi_k}^*\|_2^2+\|\bar Q^{\pi_k} - \Psi \theta_{\pi_k}^* \|^2_{\widetilde D^{\pi_k}}} {\lambda_{\min} (\widetilde D^{\pi_k})}\big)\nn\\
  &\overset{(iv)}\leq  \order\big( \tmix^2 + \eapprox^2 +  \tfrac{\bar \mu^{-1}\tmix^2 +(\bar \mu^{-1} + 1) \eapprox^2}{\min_{\pi}\{\lambda_{\min} (\widetilde D^{\pi})\}}\big), 
	\end{align}
	where step (i) follows from triangle inequality and the norm equivalence, step (ii) follows from Ineq.~\eqref{conclusion_thm_4} of Theorem~\ref{thm_VRTD} and the choice $N, N' \geq \tfrac{1}{\mu (1-\beta)^2}, K \geq \log_2(1/\epsilon)$, step (iii) follows from the definition of $W_1$, and step (iv) follows from the fact that $\|\theta\|_2^2 \leq \mu^{-1}\|\Psi \theta\|^2_{\widetilde D_{\pi_k}}\leq \mathcal{O}(\mu^{-1}(\tmix^2+\eapprox^2))$ and taking the maximum over the last step.  }
\endproof}

\revision{The next corollary provides the sample complexity for obtaining a convergence guarantee on the output policy. 
\begin{corollary}[Sample complexity for regularized AMDPs]\label{coro_comp_Mark_2}
	Assume the Markovian noise setting and Assumption~\ref{assump_rho_0} holds. Consider the ergodic AMDPs with an $\omega$-strongly convex regularizer. Suppose the SPMD stepsize parameters are set according to Theorem~\ref{thm_regularized_SPMD}. 
 For each policy $\pi_k$, implement the EVRTD algorithm with parameter selection in Theorem~\ref{thm_VRTD_bias} for $\log (1/\epsilon)$ epochs, then a solution $\widehat \pi$, which satisfies that $\bbe[D(\widehat \pi, \pi^*)]\leq \order(1)\frac{\eapprox}{\omega\Gamma}+ \epsilon$, can be found in at most $\widetilde\order(\frac{\sigma^2}{\omega^2(1-\Gamma)\epsilon})$ SPMD iterations and the total number of state-action transition samples can be bounded by
	$
	\widetilde\order\big( \tfrac{\sigma^2\tmix^3}{\omega^2(1-\Gamma)\bar\mu\epsilon} \big).
	$
	Moreover, we have $\sigma^2 \leq 2\eapprox^2 + \order\big(\max_{\pi} \tfrac{\|\theta_\pi^*\|_2^2}{\lambda_{\min} (\widetilde D^{\pi})}\big)$.
\end{corollary}
\proof{Proof.}
	For each SPMD iteration, to ensure policy evaluation bias to satisfy $\varsigma_k\leq \epsilon+\order(1)\eapprox/(\omega\Gamma)$, we require $\widetilde{\order}(\frac{\tmix^3}{\bar\mu})$ samples for policy evaluation by Theorem~\ref{thm_VRTD_bias}.  Therefore, the total number of samples can be bounded by
	$
	\widetilde\order\big( \frac{\sigma^2\tmix^3}{\omega^2(1-\Gamma)\bar\mu\epsilon} \big).
	$
	Next, we proof an upper bound for $\sigma^2$. 
	For $k=0,...,K-1$, by Theorem~\ref{thm_VRTD} we can upper bound $\sigma_k^2$ by
	\begin{align*}
		\sigma_k^2 &\leq 2\eapprox^2+ 2\bbe\|\stochQ{k}- \Psi \theta_{\pi_k}^*\|_\infty^2 \leq 2\eapprox^2 + 2\bbe[\|\stochQ{k} - \Psi \theta_{\pi_k}^*\|_\infty^2/\lambda_{\min}(\widetilde D^{\pi_k})]\\
		&\leq 2\eapprox^2 + \order\big(\tfrac{\|\theta_{\pi_k}^*\|_2^2}{\lambda_{\min} (\widetilde D^{\pi_k})}\big)\leq 2\eapprox^2 + \order\big(\max_{\pi} \tfrac{\|\theta_\pi^*\|_2^2}{\lambda_{\min} (\widetilde D^{\pi})}\big),
	\end{align*}
	as desired.
\endproof
To the best of our knowledge, both sample complexities in Corollary~\ref{coro_comp_Mark_1} and \ref{coro_comp_Mark_2} appear to be new in the AMDP literature. The former result extends the result of the generative model to the Markovian noise model, while the latter result further exploits the ergodicity structure as well as the impact of the strongly convex regularizer, thus providing an improved $\widetilde{\order}(\epsilon^{-1})$ convergence guarantee in terms of the KL divergence between the output solution and the optimal solution. }
{The next corollary provides the sample complexity for using SPMD and EVRTD to solve the regularized AMDPs under Assumption~\ref{assump_rho_0}. 
\begin{corollary}[Sample complexity for regularized AMDPs]\label{coro_comp_Mark_2}
	Assume the Markovian noise setting and Assumption~\ref{assump_rho_0} holds. Consider the ergodic AMDPs with an $\omega$-strongly convex regularizer. Suppose that the SPMD stepsize parameters are set according to Theorem~\ref{thm_regularized_SPMD}, that each policy $\pi_k$ is evaluated by using the EVRTD algorithm with parameter selection in Theorem~\ref{thm_VRTD_bias} for $\log (1/\epsilon)$ epochs, then a solution $\widehat \pi$ satisfying $\bbe[D(\widehat \pi, \pi^*)]\leq \order(1)\frac{\eapprox}{\omega\Gamma}+ \epsilon$ can be found in at most $\widetilde\order(\frac{\sigma^2}{\omega^2(1-\Gamma)\epsilon})$ SPMD iterations and the total number of state-action transition samples can be bounded by
	$
	\widetilde\order\big( \tfrac{\sigma^2\tmix^3}{\omega^2(1-\Gamma)\bar\mu\epsilon} \big).
	$
	Moreover, \tli{we have $\sigma^2 \leq \order\big(   \tfrac{\bar \mu^{-1}\tmix^2 +(\bar \mu^{-1} + 1) \eapprox^2}{\min_{\pi}\{\lambda_{\min} (\widetilde D^{\pi})\}}\big)$.}
\end{corollary}
\proof{Proof.}
	For each SPMD iteration, to ensure policy evaluation bias to satisfy $\varsigma_k\leq \epsilon+\order(1)\cdot \eapprox/(\omega\Gamma)$, we require $\widetilde{\order}(\frac{\tmix^3}{\bar\mu})$ samples for policy evaluation by \tli{Proposition~\ref{prop_EVRTD} and} Theorem~\ref{thm_VRTD_bias}. Therefore, the total number of samples can be bounded by
	$
	\widetilde\order\big( \frac{\sigma^2\tmix^3}{\omega^2(1-\Gamma)\bar\mu\epsilon} \big)
	$ \tli{according to Theorem~\ref{thm_regularized_SPMD}.}
	Next, we prove an upper bound for $\sigma^2$. 
	For $k=0,...,K-1$, by \tli{Proposition~\ref{prop_EVRTD} and Ineq.~\eqref{conclusion_thm_4} of Theorem~\ref{thm_VRTD}} we can upper bound $\sigma_k^2$ by
 \tli{
	\begin{align*}
		\sigma_k^2 &\leq 2\eapprox^2+ 2\bbe\|\stochQ{k}- \Psi \theta_{\pi_k}^*\|_\infty^2 \overset{(i)}\leq 2\eapprox^2 +\order\big(   \tfrac{\bar \mu^{-1}\tmix^2 +(\bar \mu^{-1} + 1) \eapprox^2}{\min_{\pi}\{\lambda_{\min} (\widetilde D^{\pi})\}}\big) = \order\big(   \tfrac{\bar \mu^{-1}\tmix^2 +(\bar \mu^{-1} + 1) \eapprox^2}{\min_{\pi}\{\lambda_{\min} (\widetilde D^{\pi})\}}\big),
	\end{align*}}
 \tli{where step (i) follows a similar argument to Ineq.~\eqref{derive_kappa_bound}.}
\endproof
To the best of our knowledge, both sample complexities in Corollary~\ref{coro_comp_Mark_1} and \ref{coro_comp_Mark_2} appear to be new in the AMDP literature. The former result extends the result of the generative model to the Markovian noise model, while the latter result further exploits the ergodicity structure as well as the impact of the strongly convex regularizer, thus providing an improved $\widetilde{\order}(\epsilon^{-1})$ convergence guarantee in terms of the KL divergence between the output solution and the optimal solution. }

\section{Numerical experiments}\label{sec:numerical} In this section, we report numerical experiments for our proposed algorithms, i.e., VRTD, EVRTD and SPMD methods. To generate a comprehensive performance profile, we conduct experiments on the popular evaluation benchmark -- OpenAI Gym \citep{brockman2016openai}.

    \revision{}{
	\subsection{Taxi Driving}\label{subsec:numerical_markov_chain}
    We consider Taxi, a grid-world alike MDP with state space $|\calS|=500$ and action space $|\calA|=6$.
    In the $5$ by $5$ grid world, there are four designated pick-up and drop-off locations where the passenger spawns in one of them and wants to go to one of the rest locations. The agent can move up, down, left, or right on the map, and pick up and drop off the passenger. The agent starts at a random position while the passenger starts at a random designated location. The goal of the agent is to pick up the passenger and drop him off at the desired destination. A negative cost ($-20$) is given when the agent picks up the passenger and drops him off at the designated location. The agent gets a cost $(+15)$ whenever illegal actions are performed (e.g., pick up at empty locations). Additionally, every step triggers a $(+1)$ cost to promote shorter routes. 
	
	Using the Taxi instance, we test the performance of the SPMD method with EVRTD as the critic (namely SPMD-EVRTD), and compare its performance against the SPMD method with VRTD as the critic (namely SMPD-VRTD). Figure~\ref{fig-numerical-study-mc} plots the average cost function of the current policies against the number of iterations of the SPMD method. 
	
	\begin{figure}[!htb]
		\minipage{0.32\textwidth}
			  \includegraphics[width=5.5cm]{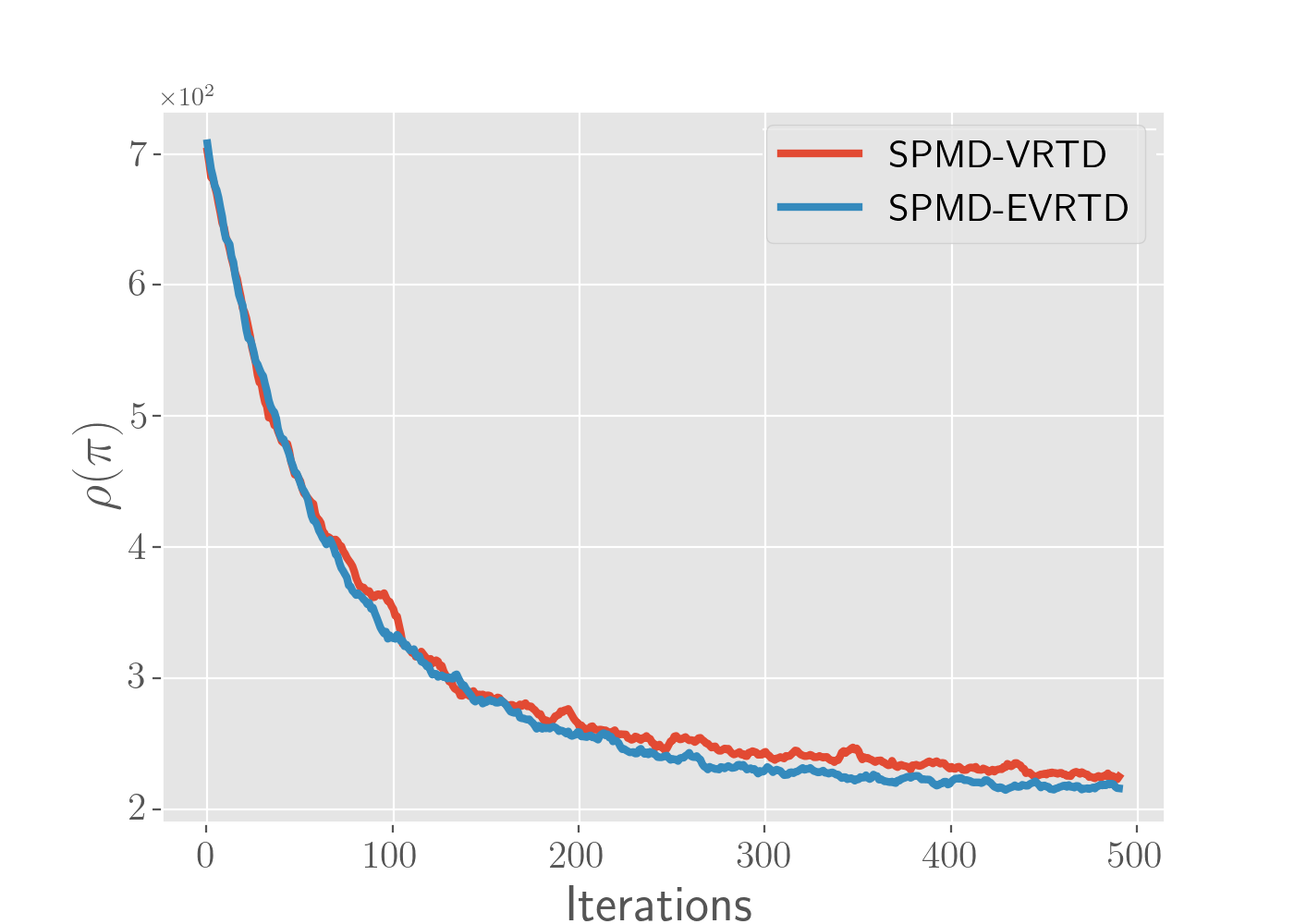}
		\endminipage\hfill
		\minipage{0.32\textwidth}
		  \includegraphics[width=5.5cm]{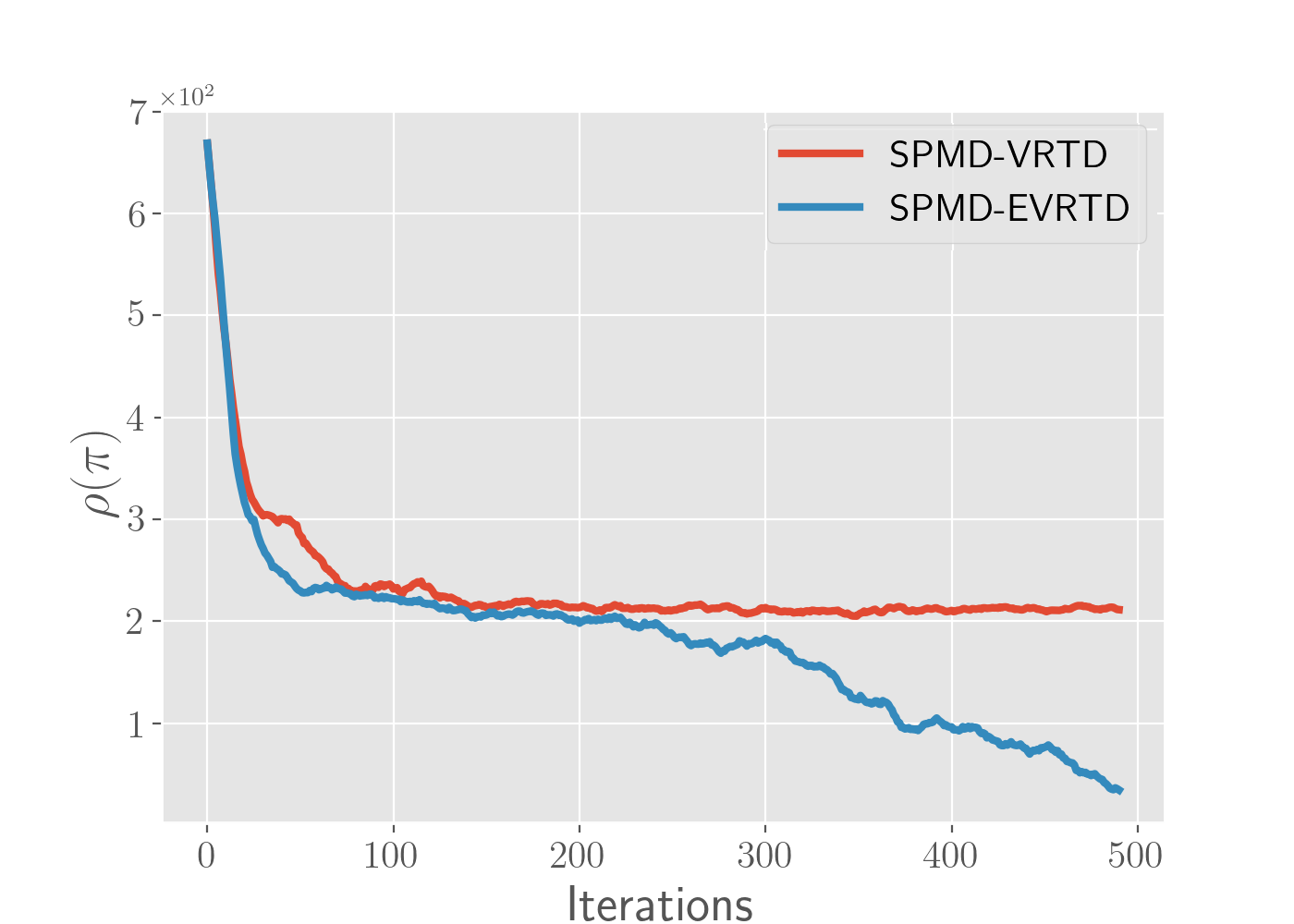}
		\endminipage\hfill
		\minipage{0.32\textwidth}%
		  \includegraphics[width=5.5cm]{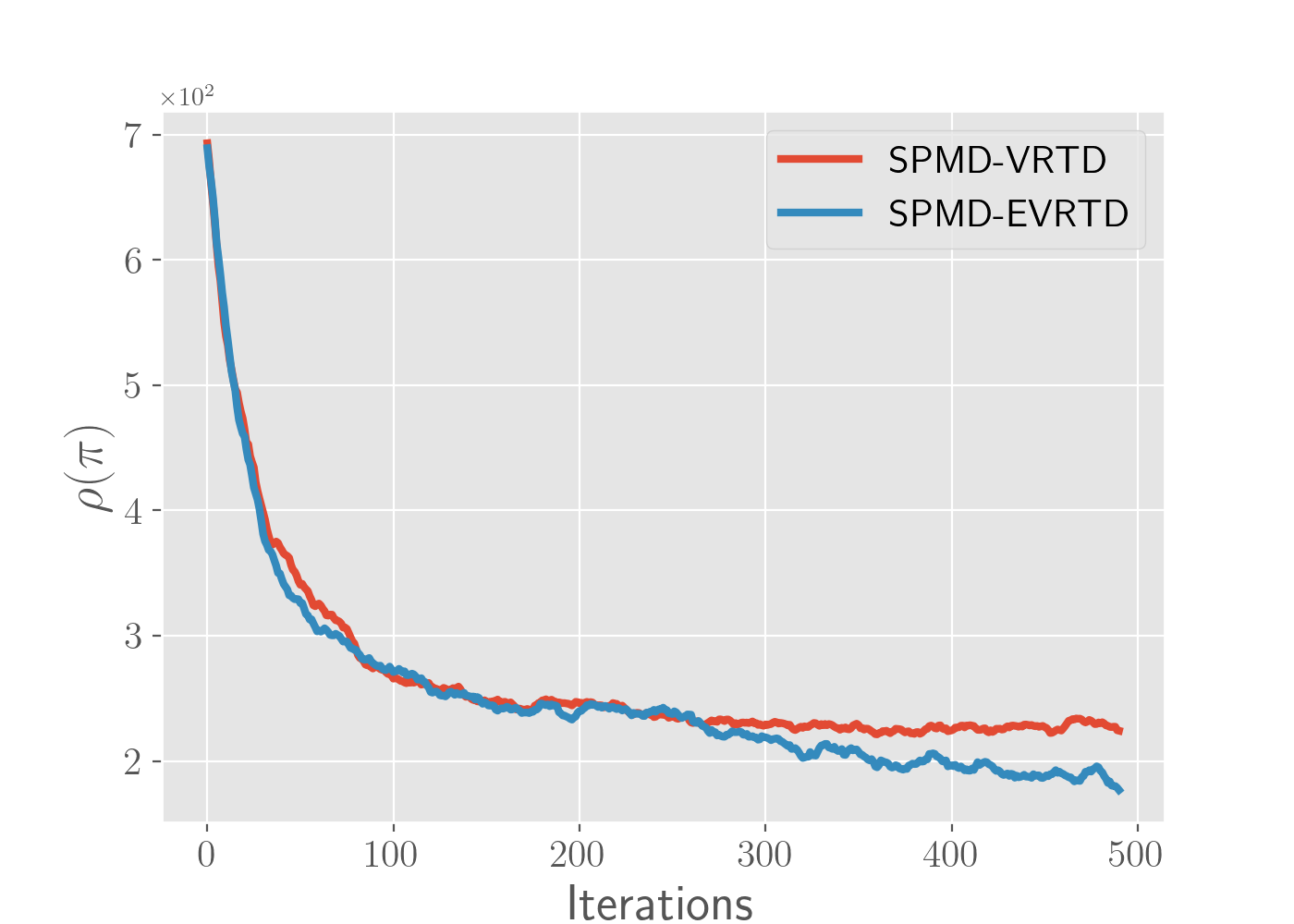}
		\endminipage
		\caption{Comparing the performance of the SPMD method with different critic solvers. From left to right: $h=0$, $h^\pi(s)= \tsum_{a\in{\cal A}} \pi(a|s)\log \pi(a|s)$, and $h^\pi(s)= 3\cdot \tsum_{a\in{\cal A}} \pi(a|s)\log \pi(a|s)$. For the red curves, SPMD takes the VRTD method as the critic solver. For the blue curves, SPMD takes EVRTD as the critic solver. In the iterations marked by crosses, EVRTD implements the perturbation rule \eqref{construct_tilde_pi}; in the other iterations, EVRTD reduces to VRTD. 
		}\label{fig-numerical-study-mc}
		\end{figure}
	
	In the presence of a regularizer, we can see that for the blue curves, the average cost function in the later iterations decreases continuously. In comparison, it is hard for the SPMD method without the EVRTD solver to make progress in the later iterations. When no regularizer is added, VRTD and EVRTD perform similarly with EVRTD achieves slightly better results. 
    As a result, this set of experiments showcases that the VRTD method fails to efficiently evaluate the insufficiently random policies due to lack of exploration, and the EVRTD method can remedy this issue to enable fast and robust convergence of the SPMD method.
	
		
	}
 \subsection{Robotics Control}\label{subsec:numerical_gym}
    For the rest of the experiments conducted on OpenAI Gym, we choose two environments (LunarLander and MountainCar), and compare the performance of SPMD against TRPO \citep{schulman2015trust}, PPO \citep{schulman2017proximal}, and Q-learning. Specifically, TRPO and PPO utilize neural networks for value function approximation as well as policy parametrization, and the networks are updated using stochastic gradient descent. We use state-of-the-art implementations for both TRPO and PPO in \texttt{stable\_baseline3} \citep{stable-baselines3}. Additionally, we incorporate undiscounted Q-learning with linear function approximation, which utilizes least-square updates for linear function estimators described in \cite{bertsekas1996temporal}.
    
    For linear function approximation in SPMD and Q-learning, we implement radial basis function (RBF) kernels to construct the feature space. All the RBF centers are computed prior to learning, using randomly sampled state-action pairs from the environment. The policy-based algorithms (SPMD, PPO, TRPO) perform policy updates after each episode with a specified length, and evaluate its performance by sampling multiple trajectories. 
    Note that even though TRPO and PPO are trained using discounted objectives, their performance is evaluated without discounting, thus leading to fair comparisons with our proposed SPMD method.
    \begin{figure}[!htb]
	\centering
	\minipage{0.4\textwidth}
  	\includegraphics[width=5cm]{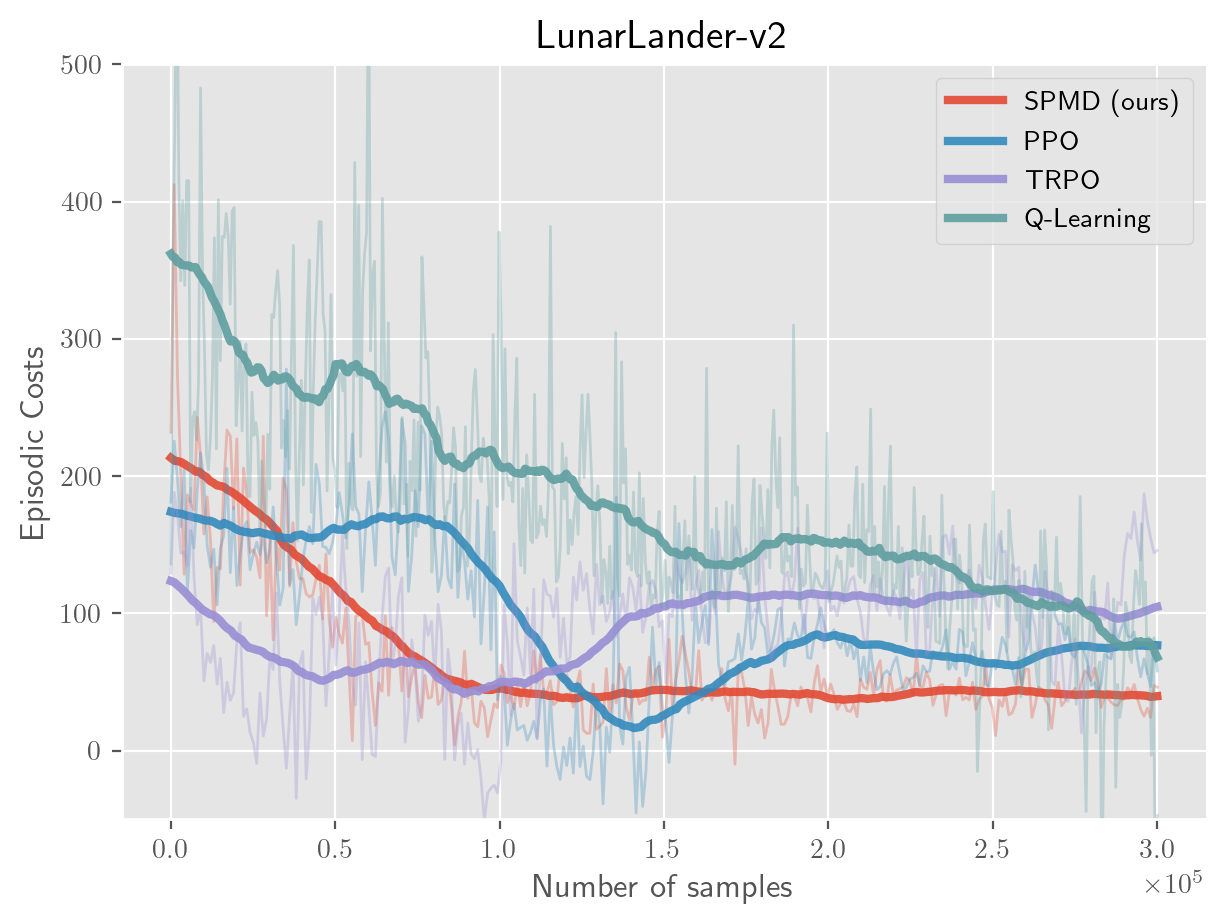}

	\endminipage
	\minipage{0.4\textwidth}%
  	\includegraphics[width=5cm]{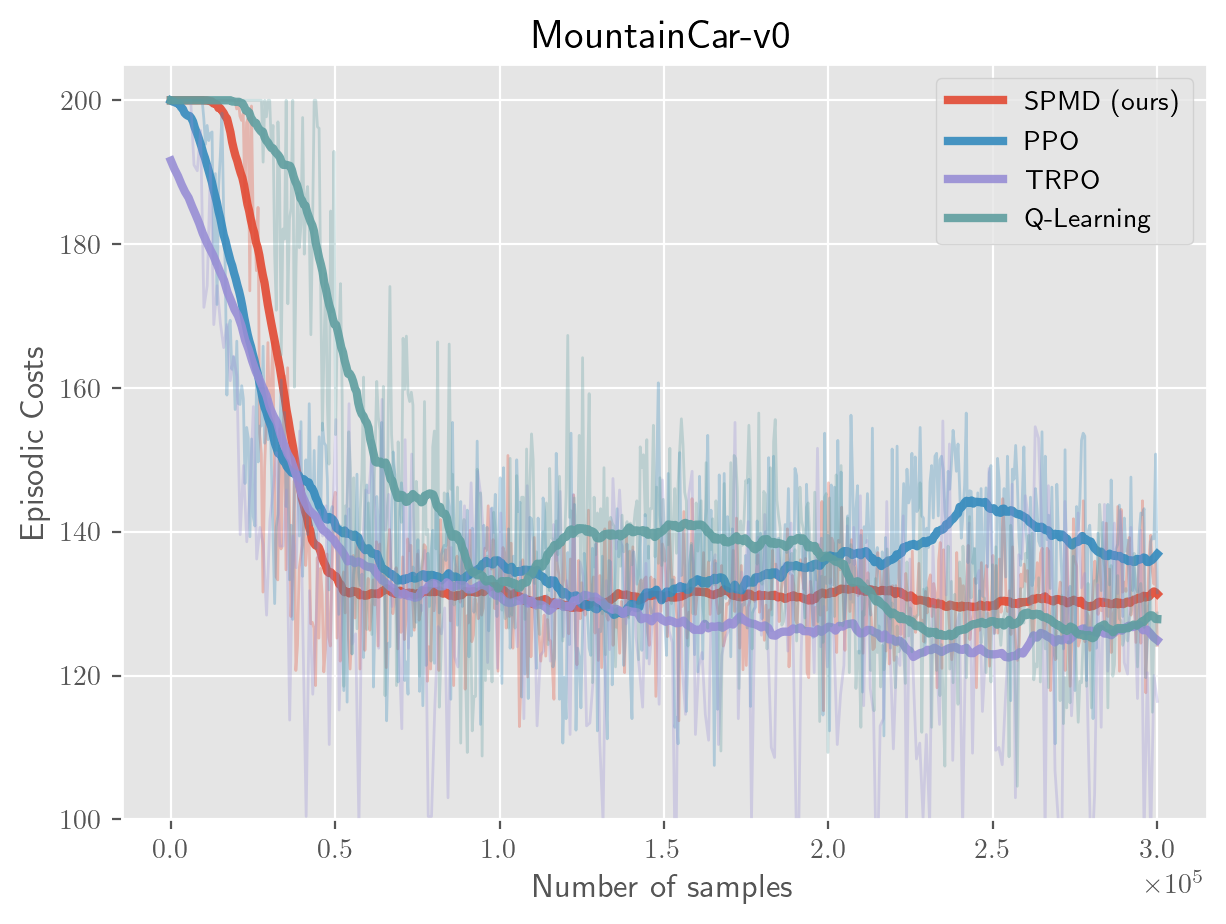}

	\endminipage
	\caption{Compare SPMD method with various methods. Policies are evaluated using 10 independent trajectories in each evaluation point. We highlight running average over 50 evaluation points. 
	}\label{fig-numerical-study}
	\end{figure}

    From Figure \ref{fig-numerical-study}, it is clear that our proposed SPMD method exhibits efficient and robust performance in both experiments. On the other hand, although TRPO and PPO perform well in MountainCar, they suffer from unstable performance in LunarLander. 
    The Q-learning approach achieves comparable policy improvement with SPMD in MoutainCar, but significantly underperforms in LunarLander.

\section{Conclusions}
\revision{In this paper, we focus on the design and analysis of policy evaluation and optimization algorithms for AMDPs. For policy evaluation, we develop the multiple trajectory method and the VRTD method for the generative model and Markovian noise model, respectively. Moreover, we develop the EVRTD method to handle insufficiently random policies in the on-policy evaluation setting. We further extend the design idea of EVRTD to DMDPs. For policy optimization, we develop an average-reward stochastic policy mirror descent (SPMD) method and establish $\widetilde{\order}(\epsilon^{-2})$ overall sample complexity under both generative model (with unichain assumption) and Markovian noise model (with ergodic assumption). Moreover, we improve the sampling complexity to $\widetilde{\order}(\epsilon^{-1})$ for solving regularized AMDPs.}
{In this paper, we focus on the design and analysis of policy evaluation and optimization algorithms for AMDPs. For policy optimization, we develop an average-reward stochastic policy mirror descent (SPMD) method along with convergence analysis for both unregularized and strongly convex regularized AMDPs. For policy evaluation, we develop the multiple trajectory method and the VRTD method for the generative model and Markovian noise model, respectively. Moreover, we develop a novel EVRTD method that resolves the issue of lack of exploration in the action space. Finally, we establish $\widetilde{\order}(\epsilon^{-2})$ overall sample complexity for unregularized AMDPs and $\widetilde{\order}(\epsilon^{-1})$ sample complexity for solving strongly convex regularized AMDPs.}

\subsection*{Acknowledgments}
	TL and GL were supported in part by the National Science Foundation grant CCF-1909298 and Office of Navel Research grant N00014-20-1-2089.
	
	\bibliographystyle{abbrvnat}
	\bibliography{arxiv_update}

\appendix
	\newpage
\section{Proofs of Section~\ref{sec:policy_evaluation}}\label{sec:proofs}
We now turn to proofs of the results in Section~\ref{sec:policy_evaluation}. 
\subsection{Mixing of Markov Chain}\label{proof_lemma_mixing}

To begin with, we establish the following supporting lemma which characterizes the mixing property of a Markov chain.
\begin{lemma}\label{lemma_mixing}
	Given a policy $\pi$, for any integer $k\geq \tmix$, we have
	\begin{align*}
		\|(P_\pi)^k - \mathbf{1} (\nu^\pi)^\top\|_\infty \leq (\tfrac{1}{2})^{\lfloor k/\tmix \rfloor}.
	\end{align*}
\end{lemma}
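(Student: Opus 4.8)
The plan is to translate the operator-norm bound into a statement about $\ell_1$-convergence of $k$-step distributions to $\nu^\pi$, and then obtain geometric decay through a submultiplicative contraction coefficient. First I would rewrite the left-hand side. Since $\|\cdot\|_\infty$ is the maximum absolute row sum and the $s$-th row of $(P_\pi)^k - \ones(\nu^\pi)^\top$ is $((P_\pi)^k)_{s,\cdot} - (\nu^\pi)^\top$, we have
\[
\|(P_\pi)^k - \ones(\nu^\pi)^\top\|_\infty = \max_{s\in\calS}\sum_{s'\in\calS}\big|((P_\pi)^k)_{s,s'} - \nu^\pi(s')\big| = \max_{s\in\calS}\big\|((P_\pi)^k)^\top e_s - \nu^\pi\big\|_1,
\]
where the $s$-th row of $(P_\pi)^k$ equals $((P_\pi)^k)^\top e_s$, the distribution after $k$ steps started from $s$. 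Writing $M := (P_\pi)^\top$ and $d(k) := \max_{q\in\Delta_{|\calS|}}\|M^k q - \nu^\pi\|_1$ (the maximum being attained at a vertex $e_s$ since $q\mapsto\|M^kq-\nu^\pi\|_1$ is convex), the claim reduces to $d(k)\le(\tfrac12)^{\lfloor k/\tmix\rfloor}$.

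Next I would introduce the contraction coefficient on zero-sum vectors, $\kappa(t) := \max\{\|M^t v\|_1/\|v\|_1 : v\neq 0,\ \ones^\top v = 0\}$, and record three facts. First, $\kappa$ is submultiplicative, $\kappa(s+t)\le\kappa(s)\kappa(t)$, because $M$ preserves total mass ($\ones^\top M = \ones^\top$, as $M$ is column-stochastic) and hence maps zero-sum vectors to zero-sum vectors. Second, $\kappa(t)\le 1$ for every $t$, since a nonnegative column-stochastic matrix is $\ell_1$-nonexpansive. Third, $\kappa(\tmix)\le\tfrac12$: from the definition of $\tmix$ together with the monotonicity of $d$ (which follows from $M\nu^\pi=\nu^\pi$ and $\ell_1$-nonexpansiveness, letting one pass from the per-policy mixing time up to $\tmix$) one has $d(\tmix)\le\tfrac12$; then for zero-sum $v$, decomposing $v=v_+-v_-$ with equal masses $m=\|v\|_1/2$ and applying the triangle inequality through $\nu^\pi$ gives $\|M^{\tmix}v\|_1\le m\big(\|M^{\tmix}(v_+/m)-\nu^\pi\|_1+\|M^{\tmix}(v_-/m)-\nu^\pi\|_1\big)\le m=\|v\|_1/2$.

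Finally I would combine these through the one-sided relation $d(s+t)\le\kappa(s)\,d(t)$, which holds because $M^t q-\nu^\pi$ is zero-sum, so $\|M^{s+t}q-\nu^\pi\|_1=\|M^s(M^tq-\nu^\pi)\|_1\le\kappa(s)\|M^tq-\nu^\pi\|_1$. Peeling off a single mixing block and using submultiplicativity with $\kappa\le1$,
\[
d(k)\;\le\;\kappa(k-\tmix)\,d(\tmix)\;\le\;(\tfrac12)^{\lfloor(k-\tmix)/\tmix\rfloor}\cdot\tfrac12\;=\;(\tfrac12)^{\lfloor k/\tmix\rfloor},
\]
since $\lfloor(k-\tmix)/\tmix\rfloor=\lfloor k/\tmix\rfloor-1$, which yields the lemma.

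The main obstacle is obtaining the exact constant rather than one weaker by a factor of two: the direct estimate $d(k)\le\kappa(k)\max_s\|e_s-\nu^\pi\|_1$ wastes a factor of two because $\|e_s-\nu^\pi\|_1$ can be as large as $2$. The fix, and the only delicate point, is to absorb one factor exactly via the base case $d(\tmix)\le\tfrac12$ and apply the contraction coefficient only to the remaining $k-\tmix$ steps; the rest is routine submultiplicativity bookkeeping.
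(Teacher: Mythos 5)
Your proof is correct, but it takes a genuinely different route from the paper's. The paper's argument is purely matrix-algebraic: using $(\nu^\pi)^\top P_\pi = (\nu^\pi)^\top$ and $P_\pi \ones = \ones$, it establishes the rank-one centering identity $\left((P_\pi)^{z} - \ones (\nu^\pi)^\top\right)\left(P_\pi - \ones (\nu^\pi)^\top\right) = (P_\pi)^{z+1} - \ones (\nu^\pi)^\top$, so that $(P_\pi)^{m\tmix} - \ones(\nu^\pi)^\top = \left((P_\pi)^{\tmix} - \ones(\nu^\pi)^\top\right)^m$; submultiplicativity of the $\ell_\infty$ operator norm then gives $\|(P_\pi)^{m\tmix} - \ones(\nu^\pi)^\top\|_\infty \le \|(P_\pi)^{\tmix} - \ones(\nu^\pi)^\top\|_\infty^m \le (\tfrac12)^m$ directly, and monotonicity in $z$ (from $\|P_\pi\|_\infty = 1$) handles the remainder $k - \tmix\lfloor k/\tmix\rfloor$. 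In other words, the quantity $d(t)$ you work with is itself submultiplicative, so no auxiliary coefficient is needed. You instead follow the classical Markov-chain route: introduce the Dobrushin-type contraction coefficient $\kappa(t)$ on zero-sum vectors, prove its submultiplicativity, transfer the mixing bound $d(\tmix)\le\tfrac12$ to $\kappa(\tmix)\le\tfrac12$ via the splitting $v = v_+ - v_-$, and combine through $d(s+t)\le\kappa(s)\,d(t)$. Both arguments hinge on submultiplicativity and land on the exact constant, but the mechanisms differ: the paper's identity collapses the whole factor-of-two issue automatically and is shorter, while your version makes two points explicit that the paper glosses over --- the peeling of one $\tmix$-block to avoid losing a factor of $2$ from $\max_s\|e_s - \nu^\pi\|_1 \le 2$, and the monotonicity needed to pass from the per-policy mixing time (the $\arg\min$ in Assumption~\ref{assump_mixing}) up to the uniform $\tmix$. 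Your route is also the more portable one, since the contraction coefficient argument is standard machinery that does not rely on the special structure of the rank-one correction $\ones(\nu^\pi)^\top$.
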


\begin{proof}
Fix a feasible policy $\pi$. By the fact of $(\nu^\pi)^\top P_\pi = (\nu^\pi)^\top$, we have that for any positive integer $z$,
\begin{align}\label{step_1_lemma_1}
	\left((P_\pi)^{z} - \ones (\nu^\pi)^\top\right)  \left(P_\pi - \ones (\nu^\pi)^\top\right) = (P_\pi)^{z+1} - \ones (\nu^\pi)^\top.
\end{align}
On the other hand, for any positive integer $z$,
\begin{align}\label{step_2_lemma_1}
	\|(P_\pi)^{z+1} - \ones (\nu^\pi)^\top\|_\infty =\left\|\left((P_\pi)^{z} - \ones (\nu^\pi)^\top\right)P_\pi \right\|_\infty\leq \|(P_\pi)^{z} - \ones (\nu^\pi)^\top\|_\infty.
\end{align}
Noting that $k\geq \tmix$ and utilizing the two relationships above, we obtain that
\begin{align*}
	\|(P_\pi)^k - \mathbf{1} (\nu^\pi)^\top\|_\infty&\overset{(i)}\leq\|(P_\pi)^{\tmix \cdot \lfloor k/ \tmix \rfloor} - \mathbf{1} (\nu^\pi)^\top\|_\infty\overset{(ii)}\leq \big\|\left((P_\pi)^{\tmix} - \ones (\nu^\pi)^\top\right)^{\lfloor k/ \tmix \rfloor} \big\|_\infty\\
	&\leq \big(\|(P_\pi)^{\tmix} - \ones (\nu^\pi)^\top\|_\infty\big)^{\lfloor k/ \tmix \rfloor},
\end{align*}
where step (i) follows from Ineq.~\eqref{step_2_lemma_1}, and step (ii) follows from Ineq.~\eqref{step_1_lemma_1}.
Next, we bound the term $\|(P_\pi)^{\tmix} - \ones (\nu^\pi)^\top\|_\infty$. We have 
\begin{align*}
	\|(P_\pi)^{\tmix} - \ones (\nu^\pi)^\top\|_\infty &= \max_{s\in\calS} \tsum_{s'\in\calS}|(P_\pi)^{\tmix}(s,s')-\nu^\pi(s')|\leq \max_{q\in \Delta_{|\calS|}}\|((P_\pi)^{\tmix})^\top q - \nu^\pi\|_1\overset{(i)}\leq \tfrac{1}{2},
\end{align*}
where (i) follows from Assumption~\ref{assump_mixing}. Combining two relationships above yields the desired result. 

\end{proof}

\subsection{Proof of Proposition~\ref{lemma_bias_bound_generative}}\label{proof_lemma_bias_bound_generative}


	For simplicity of notation, we denote $\Delta \rho := \widehat \rho^\pi - \rho^\pi$, $\Delta \rho^i := \rho^i - \rho^\pi$, thus we have $$\Delta \rho = \tfrac{1}{M}\tsum_{i=1}^M \Delta \rho^i.$$ 
	We begin by establishing a bias bound for the estimator of the average reward/cost $|\bbe[\Delta \rho^i]|$,
	\begin{align*}
		|\bbe[\Delta \rho^i]| = | (P_\pi)^{T}_{s'} \cdot c^\pi - \rho^\pi)|
		\leq \|(P_\pi)^{T} c^\pi - \mathbf{1} (\nu^\pi)^\top c^\pi \|_\infty
		\leq \|(P_\pi)^{T} - \mathbf{1} (\nu^\pi)^\top\|_\infty \|c^\pi \|_\infty
		\leq \bar  c\cdot (\tfrac{1}{2})^{\lfloor T/\tmix \rfloor}.
	\end{align*}
	Consequently, by triangle inequality,  we have 
	\begin{align}\label{bound_bias_rho}
		|\bbe[\Delta \rho]|\leq \bar c\cdot  (\tfrac{1}{2})^{\lfloor T/\tmix \rfloor}.
	\end{align}
	Now we are ready to bound the bias of the estimator of the basic differential Q-function. For simplicity of notation, we denote $\delta(s,a):=\tfrac{1}{N'}\tsum_{j=1}^{N'} \delta^j(s,a)$, where $$\delta^j(s,a):= \tsum_{t=0}^{T'} \left(c(s_t^j,a_t^j)-\bbe_\pi[c(s_t,a_t)|s_0=s,a_0=a] \right),\quad \forall s\in\calS, a\in \calA.$$ Let the error induced by truncation be
	\begin{align*}
		\trunc(s,a) := \tsum_{t=T'+1}^\infty \left(\rho^\pi - \bbe_\pi[c(s_t,a_t)|s_0=s,a_0=a] \right),\quad \forall s\in\calS, a\in \calA.
	\end{align*}
	Then by the definition of $\widehat Q$ and $\bar Q$, we have the following decomposition
	\begin{align}\label{Q_error_decomp}
		\widehat Q^\pi(s,a) - \bar Q^\pi(s,a) &= \tli{ \tfrac{1}{N'} \tsum_{j=1}^{N'}\tsum_{t=0}^{T'} \big(c(s_t^j,a_t^j)- \widehat \rho^{\pi}\big) - \bbe_\pi\left[\tsum_{t=0}^{\infty}\left(c\left(s_{t}, a_{t}\right)-\avgrwd\right) \big| s_{0}=s, a_{0}=a\right]}\nn\\
  &= \delta(s,a)  + (T'+1) \Delta\rho+ \trunc(s,a),\quad \forall s\in\calS, a\in \calA.
	\end{align}
	Noticing that $\bbe[\delta^j(s,a)] = 0$ because the trajectories are sampled following policy $\pi$, we have $\bbe[\delta(s,a)]=0$.
	It remains to upper bound $|\bbe[\trunc(s,a)]|$. For any $s\in\calS, a\in\calA$, we write
		\begin{align}\label{bound_trunc}
			|\trunc(s,a)|
			& = \left|\tsum_{s' \in \mathcal{S}} \mathsf{P}(s'|s,a) \tsum_{t=T'+1}^{\infty}[ (\nu^\pi)^\top c^\pi-(P_\pi)^{t-1}_{s'} c^\pi  ] \right|\nn\\
			&\leq  \left\|\tsum_{t=T'}^{\infty} [  \mathbf{1}(\nu^\pi)^\top-(P_\pi)^t ] c^\pi  \right\|_\infty \leq \tsum_{t=T'}^{\infty} \| \mathbf{1}(\nu^\pi)^\top-(P_\pi)^t \|_\infty\|c^\pi\|_\infty\nn\\
			&\overset{(i)}\leq \bar  c\cdot\tmix \cdot \tsum_{i=\lfloor T'/\tmix \rfloor}^\infty (\tfrac{1}{2})^{i}\leq 2\bar  c\cdot\tmix \cdot (\tfrac{1}{2})^{\lfloor T'/\tmix \rfloor},
	\end{align}
    where step (i) follows from Lemma~\ref{lemma_mixing}. Combining Ineqs. \eqref{bound_bias_rho}, \eqref{Q_error_decomp}, \eqref{bound_trunc}, and utilizing triangle inequality, we obtain
	\begin{align*}
		|\bbe[\widehat Q^\pi(s,a) - \bar Q^\pi(s,a)]| \leq \bar  c(T'+1) \cdot (\tfrac{1}{2})^{\lfloor T/\tmix \rfloor} + 2\bar  c\cdot \tmix \cdot (\tfrac{1}{2})^{\lfloor T'/\tmix \rfloor}.
	\end{align*}
	Noticing that the RHS of the above inequality does not depend on $(s,a)$, so this is a uniform bound for all the entries of $\bbe[\widehat Q^\pi - \bar Q^\pi]$ and we complete the proof.
\endproof

\subsection{Proof of Proposition~\ref{lemma_variance_bound_generative}}\label{proof_lemma_variance_bound_generative}
\proof{Proof.}
	First, focus our attention on providing an upper bound on $\bbe[(\Delta \rho)^2]$. We write
	\begin{align}\label{reward_variance_bound}
		\bbe[(\Delta \rho)^2] = \text{Var}(\Delta \rho) + (\bbe[\Delta \rho])^2\overset{(i)}\leq \tfrac{4\bar c^2}{M} + \bar c^2\cdot (\tfrac{1}{2})^{2\lfloor T/\tmix \rfloor},
	\end{align}
	where step (i) follows from that each $\rho^i$ is independent of each other and Ineq. \eqref{bias_bound_generative_0} of Proposition~\ref{lemma_bias_bound_generative}.
	Recalling the decomposition in Eq. \eqref{Q_error_decomp} and utilizing Young's inequality, we have
	\begin{align}\label{decomp_2}
		\bbe[\|\widehat Q^\pi - \bar Q^\pi\|_\infty^2] \leq 3\bbe[\|\delta\|_\infty^2] + 3(T'+1)^2\bbe[(\Delta\rho)^2] + 3\|\trunc\|_\infty^2,
	\end{align}
	where the last term on the RHS does not require an expectation since it does not depend on the samples generated when running Algorithm~\ref{alg:Mult}. Invoking an uniform upper bound on $\trunc$ in Ineq. \eqref{bound_trunc}, it remains to bound the term $\bbe[\|\delta\|_\infty^2]$. First, we focus on an $(s,a)$ pair, for $j=1,...,M'$,
	\begin{align*}
		\bbe[\delta^j(s,a)] = 0,\quad\text{and}\quad |\delta^j(s,a)|\leq 2(T'+1)\bar c. 
	\end{align*}
	Since each $\delta^j(s,a)$ is bounded and independent with each other, we have that $\delta(s,a)=\tfrac{1}{M'}\tsum_{j=1}^{M'}\delta^j(s,a)$ is a sub-gaussian variable, and there exists an absolute constant $u>0$ such that $\|\delta(s,a)\|_{\psi_2} \leq \tfrac{u\bar c (T'+1)}{\sqrt{M'}}$. As a result, $\delta(s,a)^2$ is sub-exponential and 
	\begin{align*}
		\|\delta(s,a)^2\|_{\psi_1}=\|\delta(s,a)\|^2_{\psi_2} \leq \tfrac{u^2\bar c^2 (T'+1)^2}{M'}.
	\end{align*}
	On the other hand, given that $\delta^j(s,a)$ are i.i.d. and zero-mean, we have
	\begin{align*}
		\bbe[\delta(s,a)^2] \leq \tfrac{4\bar c^2 (T'+1)^2}{M'}. 
	\end{align*}
	Then utilizing Lemma 25 of \cite{lan2021policy}, we have that there exists a universal constant $\upsilon>0$, such that 
	\begin{align}\label{delta_bound}
		\bbe[\|\delta\|_\infty^2] = \bbe[\max_{s\in\calS,a\in\calA}\delta(s,a)^2] \leq \tfrac{\upsilon\bar c^2 (T'+1)^2}{M'}(\log(|\calS||\calA|)+1) + \tfrac{4\bar c^2 (T'+1)^2}{M'}.
	\end{align}
	Substituting Ineqs. \eqref{delta_bound}, \eqref{reward_variance_bound} and \eqref{bound_trunc} into Ineq. \eqref{decomp_2} and invoking $M\geq M'$, we obtain
	\begin{align*}
		\bbe[\|\widehat Q^\pi - \bar Q^\pi\|_\infty^2] \leq \tfrac{3\upsilon \bar c^2 (T'+1)^2}{M'}(\log(|\calS||\calA|)+1) + \tfrac{24\bar c^2 (T'+1)^2}{M'} + \big(3(T'+1)^2+12\tmix^2)\bar c^2\cdot (\tfrac{1}{2})^{2\lfloor T'/\tmix \rfloor},
	\end{align*}
	which completes the proof.
\endproof

\subsection{Supporting lemmas for Theorem~\ref{thm_VRTD} and \ref{thm_VRTD_bias}}
We begin by establishing a few useful lemmas which will be  used in the proof of the main theorems in Section~\ref{sec:policy_evaluation}.

\revision{}{
\tli{Recall the definition of the deterministic operator in \eqref{deterministic_opt} and the definition of the stochastic operator in \eqref{stochastic_opt}.} To quantify how the sample skipping technique works in reducing the bias of the stochastic operator, we establish the following lemmas. 
\begin{lemma}\label{lemma_operator_bias_1}
	Let $\xi_t:=\{(s_t,a_t),(s_{t+1},a_{t+1}), c(s_t,a_t)\}$. For every $t,\tau\in \mathbb{Z}_{+}$, with probability 1,
	\begin{align}\label{bias_2}
		\|\bbe[\widetilde g( \thetasol, \rhosol, \xi_{t+\tau+1})|\mathcal{F}_t]-g( \thetasol,\rhosol)\|_2 \leq \mixcon\cdot \mixrate^\tau \|\Psi \theta^* -\Qsol\|_D.
	\end{align}
	where $\mixcon := \tfrac{\skipcon}{\sqrt{\min_{s\in\calS, a \in \calA} \nu(s)\pi(a|s)}}\|\Psi\|_2 \|I-P\|_2$ and $\mathcal{F}_t:= [\xi_1,...,\xi_t]$.
\end{lemma}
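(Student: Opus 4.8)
The plan is to evaluate the conditional expectation $\bbe[\widetilde g(\thetasol,\rhosol,\xi_{t+\tau+1})\mid\mathcal{F}_t]$ in closed form and compare it with $g(\thetasol,\rhosol)$ from \eqref{deterministic_opt}. First I would use the Markov property: since $\mathcal{F}_t$ reveals the pair $(s_{t+1},a_{t+1})$, the state-action pair $(s_{t+\tau+1},a_{t+\tau+1})$ that drives $\widetilde g$ is distributed according to the $\tau$-step state-action law $q_\tau$ started at $(s_{t+1},a_{t+1})$, while the inner next-state pair $(s_{t+\tau+2},a_{t+\tau+2})$ can be integrated out exactly by the tower rule, replacing $\psi(s',a')$ by its conditional mean $(P\Psi\thetasol)(s,a)$. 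Writing $f:=\Psi\thetasol-P\Psi\thetasol-c+\rhosol\bfone$, this gives $\bbe[\widetilde g(\thetasol,\rhosol,\xi_{t+\tau+1})\mid\mathcal{F}_t]=\Psi^\top\diag(q_\tau)f$, whereas $g(\thetasol,\rhosol)=\Psi^\top D f$, so the object to bound is $\Psi^\top(\diag(q_\tau)-D)f$.

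The one genuinely non-mechanical step is to rewrite $f=(I-P)(\Psi\thetasol-\Qsol)$. This holds because $\Qsol$ is a differential $Q$-function and therefore solves \eqref{bellman_eq}, i.e.\ $(I-P)\Qsol=c-\rhosol\bfone$; subtracting this identity from $f=(I-P)\Psi\thetasol-(c-\rhosol\bfone)$ gives the factorization. This is exactly what manufactures the quantity $\|\Psi\thetasol-\Qsol\|_D$ appearing on the right-hand side — without it one would only obtain a bound in terms of $\thetasol$ and $c$ separately.

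It then remains to chain elementary norm inequalities. Letting $\bar\nu(s,a):=\nu(s)\pi(a|s)$ denote the stationary state-action distribution (so $D=\diag(\bar\nu)$), I would bound $\|\Psi^\top(\diag(q_\tau)-D)f\|_2\le\|\Psi\|_2\,\|\diag(q_\tau-\bar\nu)\|_2\,\|f\|_2$, use that the spectral norm of a diagonal matrix is $\|q_\tau-\bar\nu\|_\infty\le\|q_\tau-\bar\nu\|_1$, factor $\|f\|_2\le\|I-P\|_2\|\Psi\thetasol-\Qsol\|_2$, and finally pass to the $D$-norm via $\|x\|_2\le\|x\|_D/\sqrt{\min_{s,a}\nu(s)\pi(a|s)}$, which is valid because the diagonal entries of $D$ are the $\bar\nu(s,a)$. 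Multiplying the four factors reproduces $\mixcon\,\mixrate^\tau\|\Psi\thetasol-\Qsol\|_D$ once the mixing bound below is inserted.

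The mixing input is $\|q_\tau-\bar\nu\|_1\le\skipcon\,\mixrate^\tau$, which I would obtain from Lemma~\ref{lemma_assump_rho}: the state-action law factorizes as $q_\tau(s,a)=p_\tau(s)\pi(a|s)$ with $p_\tau$ the corresponding state marginal, so $\|q_\tau-\bar\nu\|_1=\|p_\tau-\nu\|_1$ and the state-level geometric mixing applies directly, with the bound holding uniformly over the starting pair and hence yielding the almost-sure statement. I expect the main obstacle to be the bookkeeping in the conditional-expectation computation: correctly identifying the conditioning $\sigma$-field, distinguishing the transition step that is integrated exactly from the ones controlled by mixing (and tracking the exact number of $P_\pi$-steps, up to an off-by-one absorbed into the constant), together with spotting the Bellman rewrite of $f$.
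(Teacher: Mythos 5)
Your proposal is correct and follows essentially the same route as the paper's proof: both compute the conditional expectation as $\Psi^\top$ times the diagonal matrix of the $\tau$-step conditional state-action law applied to $f=\Psi\thetasol-P\Psi\thetasol-c+\rhosol\ones$, both use the Bellman identity $(I-P)\Qsol=c-\rhosol\ones$ to factor $f=(I-P)(\Psi\thetasol-\Qsol)$, and both then chain $\|\Psi\|_2$, the $\ell_1$ mixing bound on the diagonal perturbation, $\|I-P\|_2$, and the $\ell_2$-to-$\ell_D$ conversion via $\min_{s,a}\nu(s)\pi(a|s)$. The only (immaterial) difference is that you condition on $(s_{t+1},a_{t+1})$, which $\mathcal{F}_t$ reveals, giving exactly $\tau$ mixing steps, while the paper conditions on $(s_t,a_t)$ with $\tau+1$ steps — either yields the stated $\mixrate^\tau$ factor.
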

\proof{Proof.}
	Let $D_t^{\tau+1}:=\diag\left([\mathsf{P}(s_{t+\tau+1}=s, a_{t+\tau+1}=a|s_t, a_t)]\right)$ where $s\in\calS, a\in\calA$. 
	We have
	\begin{align*}
		\bbe[\widetilde g(\thetasol,\rhosol,\xi_{t+\tau+1})|\mathcal{F}_t] = \Psi D_t^{\tau+1}(\Psi \thetasol- P \Psi\thetasol -c + \rhosol\ones).
	\end{align*} 
	Combining the above equality with the definition of $g(\cdot)$ in~\eqref{deterministic_opt} yields
	\begin{align*}
		\|g(\thetasol,\rhosol)-\bbe[\widetilde g(\thetasol,\rhosol,\xi_{t+\tau+1})|\mathcal{F}_t]\|_2
		&=\|\Psi^\top  \big(D-D_t^{\tau+1}\big) (\Psi \thetasol- P \Psi\thetasol -c + \rhosol\ones)\|_2\\
		&= \|\Psi^\top  \big(D-D_t^{\tau+1}\big) (I-P)(\Psi\thetasol-\Qsol)\|_2\\
		&\leq \skipcon\mixrate^\tau \|\Psi\|_2\|I-P\|_2\|\Psi\thetasol-\Qsol\|_2 \leq  \mixcon \mixrate^\tau \|\Psi\thetasol-\Qsol\|_D,
	\end{align*}
	as claimed.
\endproof
In view of Lemma~\ref{lemma_operator_bias_1}, the bias of the stochastic operator at the optimal solution depends on the approximation error caused by linear function approximation, and the bound decays geometrically. 

\begin{lemma}\label{lemma_operator_bias_2}
	For every $t,\tau\in \mathbb{Z}_{+}$, $\theta, \theta'\in \bbr^d$ and $\rho, \rho' \in \bbr$, with probability 1,
	\begin{align}\label{bias_1}
		\|\bbe[\widetilde g(\theta, \rho,\xi_{t+\tau+1})|\mathcal{F}_t] -\bbe[\widetilde g(\theta', \rho',\xi_{t+\tau+1})|\mathcal{F}_t] &- [g(\theta, \rho)-g(\theta', \rho')] \|_2 \nn\\
		&\leq \mixcon\cdot \mixrate^\tau \|\Psi \theta - \Psi \theta'\|_D + \mixcons\cdot\mixrate^\tau |\rho - \rho'|,
	\end{align}
	where $\mixcons := {\skipcon}\|\Psi\|_2$.
\end{lemma}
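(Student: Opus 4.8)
The plan is to reduce Lemma~\ref{lemma_operator_bias_2} to the same computation that underlies Lemma~\ref{lemma_operator_bias_1}, now carried out for the \emph{difference} of two operator evaluations rather than at a single point. First I would invoke the conditional-expectation identity established in the proof of Lemma~\ref{lemma_operator_bias_1}, namely that for every $(\theta,\rho)$,
\begin{align*}
\bbe[\widetilde g(\theta, \rho, \xi_{t+\tau+1})\mid \mathcal{F}_t] = \Psi^\top D_t^{\tau+1}(\Psi\theta - P\Psi\theta - c + \rho\ones),
\end{align*}
which follows from the Markov property and the definition of $\widetilde g$. Subtracting the corresponding identity for $(\theta',\rho')$, and using that $g(\theta,\rho) - g(\theta',\rho') = \Psi^\top D\big[(I-P)\Psi(\theta-\theta') + (\rho-\rho')\ones\big]$, the quantity inside the norm on the left-hand side of~\eqref{bias_1} collapses to
\begin{align*}
\Psi^\top (D_t^{\tau+1} - D)\big[(I-P)\Psi(\theta-\theta') + (\rho-\rho')\ones\big].
\end{align*}
The key observation is that the data-dependent feature terms and the cost vector $c$ cancel in the difference, leaving only the discrepancy $D_t^{\tau+1}-D$ between the conditional law of the state-action pair and its stationary law, applied to a vector that splits cleanly into a $\theta$-part and a $\rho$-part.

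Next I would split by the triangle inequality and bound each part as in Lemma~\ref{lemma_operator_bias_1}. For the $\theta$-part, $\|\Psi^\top(D_t^{\tau+1}-D)(I-P)\Psi(\theta-\theta')\|_2$, I would use that $D_t^{\tau+1}-D$ is diagonal, so its spectral norm equals its largest absolute entry, which is controlled via $\mathsf{P}(s_{t+\tau+1}=s,a_{t+\tau+1}=a\mid\cdot)-\nu(s)\pi(a|s) = \pi(a|s)(\mathsf{P}(s_{t+\tau+1}=s\mid\cdot)-\nu(s))$ and the mixing estimate $\max_{s}\|(P_\pi)^\tau_{(s,\cdot)} - \nu\|_1 \leq \skipcon\mixrate^\tau$ of Lemma~\ref{lemma_assump_rho}. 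Pulling out $\|\Psi\|_2$ and $\|I-P\|_2$ and converting $\|\cdot\|_2$ to $\|\cdot\|_D$ through $\|x\|_2 \leq (\min_{s,a}\nu(s)\pi(a|s))^{-1/2}\|x\|_D$ reproduces the constant $\mixcon$ and yields the term $\mixcon\,\mixrate^\tau\|\Psi\theta - \Psi\theta'\|_D$.

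For the $\rho$-part, $|\rho-\rho'|\cdot\|\Psi^\top(D_t^{\tau+1}-D)\ones\|_2$, I would bound $\|\Psi^\top(D_t^{\tau+1}-D)\ones\|_2 \leq \|\Psi\|_2\|(D_t^{\tau+1}-D)\ones\|_1$, and then note that
\begin{align*}
\|(D_t^{\tau+1}-D)\ones\|_1 = \sum_{s,a}\big|\mathsf{P}(s_{t+\tau+1}=s,a_{t+\tau+1}=a\mid\cdot) - \nu(s)\pi(a|s)\big| = \sum_{s}\big|\mathsf{P}(s_{t+\tau+1}=s\mid\cdot) - \nu(s)\big|,
\end{align*}
where the last equality uses $\sum_a \pi(a|s)=1$. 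This state-level total variation is again at most $\skipcon\mixrate^\tau$ by Lemma~\ref{lemma_assump_rho}, giving the term $\mixcons\,\mixrate^\tau|\rho-\rho'|$ with $\mixcons=\skipcon\|\Psi\|_2$. Adding the two bounds yields~\eqref{bias_1}.

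The step I expect to be the main obstacle is the $\rho$-part: unlike the $\theta$-part, it cannot be routed through the Bellman cancellation $(I-P)\Qsol = c-\rhosol\ones$ exploited in Lemma~\ref{lemma_operator_bias_1}, so one must argue directly that the \emph{state-action} distribution discrepancy against $\ones$ reduces, via $\sum_a\pi(a|s)=1$, to the \emph{state} discrepancy for which the mixing estimate is available. Some care is also needed to match the step count encoded in $D_t^{\tau+1}$ to the exponent $\tau$ in $\mixrate^\tau$ (using that $s_{t+\tau+1}$ is reached from the last pair in $\mathcal{F}_t$ through $\tau$ applications of $P_\pi$), exactly as in the proof of Lemma~\ref{lemma_operator_bias_1}.
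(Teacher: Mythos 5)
Your proposal is correct and follows essentially the same route as the paper: the conditional-expectation identity from Lemma~\ref{lemma_operator_bias_1} reduces the left-hand side to $\Psi^\top(D-D_t^{\tau+1})\big[(I-P)\Psi(\theta-\theta') + (\rho-\rho')\ones\big]$ (with the cost vector $c$ cancelling), after which a triangle-inequality split, the mixing bound of Lemma~\ref{lemma_assump_rho}, and the $\ell_2$-to-$\ell_D$ conversion give exactly the constants $\mixcon$ and $\mixcons$. The paper states these bounds in three lines; your treatment of the $\rho$-part (collapsing the state-action discrepancy against $\ones$ to the state-level total variation via $\sum_a \pi(a|s)=1$) just makes explicit what the paper leaves implicit.
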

\proof{Proof.}
	We write
	\begin{align*}
		&\quad\|g(\theta, \rho)-\bbe[\widetilde g(\theta,\rho,\xi_{t+\tau+1})|\mathcal{F}_t]-g(\theta', \rho')+\bbe[\widetilde g(\theta',\rho',\xi_{t+\tau+1})|\mathcal{F}_t]\|_2\\
		&=\|\Psi^\top \big(D-D_t^{\tau+1}\big)\big((I-P)\Psi (\theta-\theta') + (\rho-\rho')\big)\|_2\\
		&\leq \skipcon\mixrate^\tau \|\Psi\|_2   \|I-P\|_2  \|\Psi(\theta-\theta')\|_2 + \skipcon\mixrate^\tau \|\Psi\|_2 |\rho-\rho'| \leq  \mixcon\cdot \mixrate^\tau \|\Psi \theta - \Psi \theta'\|_D + \mixcons\cdot\mixrate^\tau |\rho - \rho'|,
	\end{align*}
	as claimed.
\endproof
In contrast to Lemma~\ref{lemma_operator_bias_1}, Lemma~\ref{lemma_operator_bias_2} captures the bias reduction of the difference between two stochastic operators calculated using the same samples, which also decays in a geometric rate and does not depend on the approximation error. 
For brevity of notation, we set $\Cmax:=\max\{\skipcon, \mixcon, \mixcons\}$. 
}

\revision{}{
Next, we establish a supporting lemma that is a consequence of Lemma~\ref{monotone_constant}.} \tli{Recall the definition of $M$ in \eqref{def_mu_M}.}

\begin{lemma}\label{lemma_inverse_operator}
	For any $\theta\in\bbr^d$, we have that the matrix $I_d - M$ is invertible and
	\begin{align}\label{inverse_operator}
		\|\big(I_d -M\big)^{-1} \theta\|_2 \leq \tfrac{1}{\monotonecon}\|\theta\|_2.
	\end{align}
\end{lemma}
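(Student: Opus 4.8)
The plan is to prove the sharper quantitative fact that $\|(I_d - M)\theta\|_2 \ge \monotonecon\,\|\theta\|_2$ for every $\theta \in \bbr^d$, from which both the invertibility of $I_d - M$ and the bound \eqref{inverse_operator} follow at once. First I would pass to the feature space by setting $x := \Phi^\top \theta = \sum_{i=1}^d \theta_i \phi_i$, which lies in $\mathbb{S}$ by construction and depends bijectively on $\theta$ since the $\phi_i$ are linearly independent. Using the $D$-orthonormality $\langle \phi_i,\phi_j\rangle_D = \mathbb{I}(i=j)$, equivalently $\Phi D \Phi^\top = I_d$, one obtains $\|x\|_D^2 = \theta^\top \Phi D \Phi^\top \theta = \|\theta\|_2^2$.

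Next I would rewrite the quadratic form $\theta^\top(I_d - M)\theta$ in terms of $x$. Since $M = \Phi D P \Phi^\top$, we have $\theta^\top M \theta = (\Phi^\top\theta)^\top D P (\Phi^\top\theta) = x^\top D P x$, and combining with $\|\theta\|_2^2 = x^\top D x$ gives $\theta^\top(I_d - M)\theta = x^\top D(I-P)x$. Applying the strong-monotonicity constant from Lemma~\ref{monotone_constant} to $x \in \mathbb{S}$ then yields $x^\top D(I-P)x \ge \monotonecon\|x\|_D^2 = \monotonecon\|\theta\|_2^2$.

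Finally, because $M$ is generally asymmetric (it inherits $P$), a direct eigenvalue argument is unavailable, so I would convert the quadratic-form bound into an operator bound via Cauchy--Schwarz: $\monotonecon\|\theta\|_2^2 \le \theta^\top(I_d-M)\theta \le \|\theta\|_2\,\|(I_d-M)\theta\|_2$, giving $\|(I_d-M)\theta\|_2 \ge \monotonecon\|\theta\|_2$. Since $\monotonecon = 1-\beta > 0$ by Lemma~\ref{monotone_constant}, this lower bound shows that $(I_d-M)\theta = 0$ forces $\theta = 0$, so the square matrix $I_d - M$ is injective and hence invertible; substituting $\theta = (I_d-M)^{-1}\theta'$ for arbitrary $\theta'$ then gives $\|\theta'\|_2 \ge \monotonecon\|(I_d-M)^{-1}\theta'\|_2$, which is exactly \eqref{inverse_operator}.

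I expect the only genuine subtlety to be the change of variables between $\bbr^d$ and $\mathbb{S}$ together with the careful use of $D$-orthonormality to identify $\|\theta\|_2$ with $\|x\|_D$ and $\theta^\top M \theta$ with $x^\top D P x$; the Cauchy--Schwarz step is what lets us sidestep the non-symmetry of $M$ and is precisely the point where an attempt to argue directly through the eigenvalues of $M$ would break down.
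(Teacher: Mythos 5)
Your proof is correct and follows essentially the same route as the paper's: both reduce $\theta^\top(I_d-M)\theta$ to the quadratic form $x^\top D(I-P)x$ with $x\in\mathbb{S}$, invoke Lemma~\ref{monotone_constant} together with the $D$-orthonormality of the $\phi_i$'s to get $\theta^\top(I_d-M)\theta\geq(\monotonecon)\|\theta\|_2^2$, and then apply Cauchy--Schwarz to convert this into the operator bound. The only cosmetic difference is the invertibility step, where the paper cites positive definiteness of the symmetric part $I_d-\tfrac{1}{2}(M+M^\top)$ while you deduce injectivity directly from the norm lower bound; both rest on the same positivity of the quadratic form.
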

\proof{Proof.}
	By Cauchy-Schwarz inequality, we have
	\begin{align*}
		\|(I_d-M)\theta\|_2\|\theta\|_2\geq \theta^\top \Phi^\top D(I-P)\Phi\theta
		\overset{(i)}\geq (\monotonecon)\|\Phi \theta\|_D^2\overset{(ii)}= (\monotonecon)\|\theta\|_2^2,
	\end{align*}
	where step (i) follows from Lemma~\ref{monotone_constant}, step (ii) follows from $\Phi^\top D \Phi = I_d$. As a result, we obtain that $\|\theta\|_2\leq \frac{1}{\monotonecon}\|(I_d-M)\theta\|_2$. 
    \tli{Meanwhile, by Lemma~\ref{monotone_constant}, it is clear that $\theta^\top (I_d - M) \theta = (\theta \Phi)^\top D(I-P)\Phi \theta > 0$ for any $\theta \neq 0$. Therefore, $I_d-M$ is invertible.}
\endproof

\vgap

The following lemma characterizes the Lipschitz continuity of the operator $g$ \tli{defined in \eqref{deterministic_opt}}.
\begin{lemma}\label{lemma_lipschitz}
	For any $\theta, \theta' \in \bbr^d$, $\rho\in\bbr$, we have
	\begin{align*}
		\|g(\theta,\rho) - g(\theta', \rho)\|_2 \leq 2 \|\Psi \theta - \Psi \theta'\|_D \leq 2 \|\theta -\theta'\|_2.
	\end{align*}
\end{lemma}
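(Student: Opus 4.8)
The plan is to exploit that the only $\theta$-dependence of $g(\theta,\rho)$ is affine, so that subtracting $g(\theta',\rho)$ with the \emph{same} $\rho$ cancels the constant terms $-c$ and $\rho\bfone$ entirely. Indeed, from the definition \eqref{deterministic_opt},
\begin{align*}
g(\theta,\rho) - g(\theta',\rho) = \Psi^\top D (I-P)\Psi(\theta - \theta').
\end{align*}
Writing $u := \Psi(\theta-\theta')$, it then remains to show $\|\Psi^\top D(I-P) u\|_2 \le 2\|u\|_D$. I would bound the Euclidean norm by its variational form, $\|\Psi^\top D(I-P)u\|_2 = \sup_{\|w\|_2=1} \langle \Psi w, (I-P)u\rangle_D$, and apply Cauchy--Schwarz in the $D$-inner product to obtain $\langle \Psi w, (I-P)u\rangle_D \le \|\Psi w\|_D \, \|(I-P)u\|_D$.

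Two ingredients then close the argument. First, feature normalization controls $\|\Psi w\|_D$: since $(\Psi w)(s,a) = \langle \psi(s,a), w\rangle$ and $\|\psi(s,a)\|_2 \le 1$, Cauchy--Schwarz gives $(\Psi w)(s,a)^2 \le \|w\|_2^2$, and because the diagonal of $D$ is the state-action stationary distribution and hence sums to $1$, we get $\|\Psi w\|_D^2 = \sum_{s,a} D_{(s,a)}(\Psi w)(s,a)^2 \le \|w\|_2^2$. Equivalently $B = \Psi^\top D \Psi \preceq I_d$, which also immediately yields the second claimed inequality $\|\Psi\theta - \Psi\theta'\|_D \le \|\theta-\theta'\|_2$. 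Second, and this is the crux, I claim $P$ is non-expansive in $\|\cdot\|_D$, i.e. $\|Pu\|_D \le \|u\|_D$. This follows from Jensen's inequality applied coordinatewise, $(Pu)(s,a)^2 = \big(\sum_{s',a'} P((s,a),(s',a')) u(s',a')\big)^2 \le \sum_{s',a'} P((s,a),(s',a')) u(s',a')^2$, combined with the stationarity identity $\sum_{s,a} D_{(s,a)} P((s,a),(s',a')) = D_{(s',a')}$, which holds because $\nu^\pi(s)\pi(a|s)$ is left-invariant under $P^\pi$.

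Combining these with the triangle inequality $\|(I-P)u\|_D \le \|u\|_D + \|Pu\|_D \le 2\|u\|_D$ gives $\langle \Psi w, (I-P)u\rangle_D \le \|w\|_2 \cdot 2\|u\|_D$; taking the supremum over $\|w\|_2 = 1$ yields $\|g(\theta,\rho)-g(\theta',\rho)\|_2 \le 2\|u\|_D = 2\|\Psi\theta-\Psi\theta'\|_D$, and chaining with $\|\Psi\theta-\Psi\theta'\|_D \le \|\theta-\theta'\|_2$ completes the proof. The main obstacle is verifying the non-expansiveness $\|Pu\|_D \le \|u\|_D$; everything else is a cancellation and two applications of Cauchy--Schwarz. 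The key point there is the stationarity of the state-action distribution (the diagonal of $D$) under $P^\pi$, which is precisely what turns the Jensen bound into the clean $D$-norm contraction.
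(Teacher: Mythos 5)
Your proof is correct, and it takes a mildly different route from the paper's, so a comparison is worth recording. The paper proves the bound by a direct probabilistic computation: writing $\Upsilon = \langle \psi(s,a),\theta-\theta'\rangle$ and $\Upsilon' = \langle \psi(s',a'),\theta-\theta'\rangle$ with $(s,a)$ drawn from the stationary state--action distribution, it bounds $\|\bbe[\psi(s,a)(\Upsilon-\Upsilon')]\|_2 \le \bbe|\Upsilon-\Upsilon'| \le \sqrt{\bbe[\Upsilon^2]}+\sqrt{\bbe[(\Upsilon')^2]} = 2\|\Psi\theta-\Psi\theta'\|_D$, where the last equality uses that $\Upsilon$ and $\Upsilon'$ are identically distributed under stationarity. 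You instead pass to the dual characterization of the Euclidean norm, apply Cauchy--Schwarz in the $D$-inner product, and split $\|(I-P)u\|_D \le \|u\|_D + \|Pu\|_D$, closing with the operator fact $\|Pu\|_D \le \|u\|_D$. The two arguments rest on the same three ingredients (cancellation of the affine terms in $\rho$ and $c$, the feature normalization $\|\psi(s,a)\|_2\le 1$, and stationarity of the state--action distribution under $P^\pi$), and your non-expansiveness claim is exactly the operator-level form of the paper's "same distribution" observation. The difference is that $Pu$ is a conditional expectation, so you need an extra application of Jensen's inequality that the paper avoids by working with the joint distribution of $(s,a,s',a')$ directly; in exchange, your route isolates a reusable fact --- a Markov transition operator is non-expansive in the weighted $\ell_2$-norm of its stationary measure, i.e.\ $B = \Psi^\top D \Psi \preceq I_d$ and $\|P\|_{D\to D}\le 1$ --- which is the cleaner statement to cite if one later needs Lipschitz-type bounds for other operators built from $P$ and $D$. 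Both yield the same constant $2$, and your chaining of the second inequality $\|\Psi\theta-\Psi\theta'\|_D \le \|\theta-\theta'\|_2$ via $B \preceq I_d$ matches the paper's Cauchy--Schwarz step.
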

\proof{Proof.}
	Assume $s\sim \nu, a \sim \pi(\cdot|s), s' \sim \mathsf{P}(\cdot|s,a), a' \sim \pi(\cdot|s')$. Let $\Upsilon:= \langle \psi(s,a), \theta - \theta' \rangle$ and $\Upsilon':= \langle \psi(s',a'), \theta - \theta' \rangle$. We have
	\begin{align*}
		\|g(\theta,\rho) - g(\theta', \rho)\|_2 &= \|\Psi^\top D(I-P)\Psi (\theta-\theta')\|_2 = \|\bbe[\psi(s,a)(\Upsilon-\Upsilon')]\|_2 \\&\overset{(i)}\leq \bbe[|\Upsilon-\Upsilon'|] \leq \sqrt{\bbe[\Upsilon^2]}+\sqrt{\bbe[(\Upsilon')^2]}\overset{(ii)}= 2 \|\Psi \theta - \Psi \theta'\|_D\overset{(iii)}\leq 2 \|\theta - \theta'\|_2,
	\end{align*}
	where step (i) follows from Young's inequality and $\|\psi(s,a)\|_2\leq 1$, (ii) follows from the fact that $\Upsilon$ and $\Upsilon'$ have the same distribution and $\bbe[\Upsilon^2] = \|\Psi \theta - \Psi \theta'\|_D^2$, and step (iii) follows from Cauchy-Schwarz inequality and $\|\psi(s,a)\|_2\leq 1$.
 \endproof

Next we establish a few supporting lemmas that characterize the properties of the stochastic operator defined \tli{in \eqref{stochastic_opt}.} 
Lemma~\ref{assump_variance} and~\ref{assump_variance_2} upper bound the variance of the stochastic operator \tli{and will be used in proving Theorem~\ref{thm_VRTD}}. 
\begin{lemma} \label{assump_variance}
	For every $\theta, \theta' \in \bbr^d$, $\rho, \rho' \in \bbr$, and $\xi:=\{(s,a),(s',a'),c^\pi(s,a))$ where $s\sim \nu^\pi, a\sim \pi(\cdot|s),  s'\sim \mathsf{P}(\cdot|s,a)$, we have 
	\begin{align}\label{variance_3}
		\bbe\|\widetilde g(\theta,\rho, \xi) - \widetilde g(\theta', \rho', \xi)-\big(g(\theta, \rho)  - g(\theta', \rho')\big)\|^2_2 \leq 8\|\Psi \theta - \Psi\theta'\|_D^2 + 2|\rho-\rho'|^2.
	\end{align}
\end{lemma}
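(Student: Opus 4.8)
The plan is to reduce the variance on the left-hand side to a plain second moment, and then bound that second moment using the feature normalization $\|\psi(s,a)\|_2 \le 1$ together with stationarity of $\nu$. First I would observe that the stochastic operator is affine in its first two arguments, so the difference telescopes cleanly:
\[
\widetilde g(\theta,\rho,\xi) - \widetilde g(\theta',\rho',\xi) = \big(\langle \psi(s,a)-\psi(s',a'),\, \theta-\theta'\rangle + (\rho-\rho')\big)\,\psi(s,a),
\]
which depends on the arguments only through $\theta-\theta'$ and $\rho-\rho'$. Writing $X$ for this random vector, linearity of expectation gives $\bbe[X] = g(\theta,\rho)-g(\theta',\rho')$, since $g$ is exactly the expectation of $\widetilde g$ under the stationary sampling. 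Hence the quantity to be bounded is $\bbe\|X-\bbe[X]\|_2^2 \le \bbe\|X\|_2^2$, using that the variance is dominated by the second moment.

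The next step is to bound $\bbe\|X\|_2^2$ directly. Since $\|\psi(s,a)\|_2 \le 1$,
\[
\|X\|_2^2 \le \big(\langle\psi(s,a),\theta-\theta'\rangle - \langle\psi(s',a'),\theta-\theta'\rangle + (\rho-\rho')\big)^2,
\]
and applying $(u+v)^2 \le 2u^2 + 2v^2$ twice yields the pointwise bound $4\langle\psi(s,a),\theta-\theta'\rangle^2 + 4\langle\psi(s',a'),\theta-\theta'\rangle^2 + 2(\rho-\rho')^2$.

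Finally I would evaluate the two expected squared inner products. By the definition of $D$ and the sampling $s\sim\nu,\ a\sim\pi(\cdot|s)$, we have $\bbe[\langle\psi(s,a),\theta-\theta'\rangle^2] = \|\Psi\theta-\Psi\theta'\|_D^2$. The key observation, which is the same one underlying step (i) in the proof of Lemma~\ref{monotone_constant}, is that because $\nu$ is stationary for $P_\pi$, the pair $(s',a')$ has the identical marginal distribution $\nu(s')\pi(a'|s')$ as $(s,a)$; consequently $\bbe[\langle\psi(s',a'),\theta-\theta'\rangle^2] = \|\Psi\theta-\Psi\theta'\|_D^2$ as well. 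Substituting both into the pointwise bound and taking expectations gives $\bbe\|X\|_2^2 \le 8\|\Psi\theta-\Psi\theta'\|_D^2 + 2|\rho-\rho'|^2$, which combined with the variance inequality finishes the proof.

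There is no deep obstacle in this argument; the only point demanding care is the marginal-matching observation for $(s',a')$, which is precisely what allows the second cross-term to be absorbed into the $\|\cdot\|_D$ norm rather than producing a separate, less controllable term. Everything else is Cauchy--Schwarz-type bookkeeping with explicit universal constants.
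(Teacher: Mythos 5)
Your proposal is correct and follows essentially the same route as the paper's proof: reduce the variance to the second moment, apply Young's inequality twice using $\|\psi(s,a)\|_2\le 1$, and use the fact that $(s,a)$ and $(s',a')$ share the same stationary marginal so both squared inner products equal $\|\Psi\theta-\Psi\theta'\|_D^2$. The constants ($4+4=8$ and $2$) match the paper's accounting exactly.
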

\proof{Proof.}
	First, it is easy to see that given $s \sim \nu^\pi$, we have $\bbe[\widetilde g(\theta,\rho, \xi) - \widetilde g(\theta', \rho', \xi)] = g(\theta, \rho)  - g(\theta', \rho')$. 
	For notation brevity, we let $\Upsilon:= \langle \psi(s,a), \theta - \theta' \rangle$ and $\Upsilon':= \langle \psi(s',a'), \theta - \theta' \rangle$. Since $s\in \nu^\pi$, $\Upsilon$ and $\Upsilon'$ have the same distribution, we can write
	\begin{align*}
		\bbe\|&\widetilde g(\theta,\rho, \xi) - \widetilde g(\theta', \rho', \xi)-\big(g(\theta, \rho)  - g(\theta', \rho')\big)\|^2_2\\ &\leq  \bbe[\|\widetilde g(\theta,\rho, \xi) - \widetilde g(\theta', \rho', \xi)\|_2^2] = \bbe[\|(\Upsilon - \Upsilon' + \rho - \rho')\psi(s,a) \|_2^2]\\
		& \overset{(i)}\leq 2\bbe[\|(\Upsilon - \Upsilon')\psi(s,a)\|_2^2] + 2\bbe[\|(\rho-\rho')\psi(s,a) \|_2^2]\\
		&  \overset{(ii)}\leq 4\bbe[\Upsilon^2] + 4\bbe[(\Upsilon')^2]+ 2 (\rho-\rho')^2 \overset{(iii)} \leq 8\|\Psi \theta - \Psi\theta'\|_D^2 + 2 (\rho-\rho')^2,
	\end{align*}
	where step (i) follows from Young's inequality, step (ii) follows from the fact that $\|\psi(s,a)\|_2\leq 1$ and Young's inequality, and step (iii) follows from the fact that $\Upsilon$ and $\Upsilon'$ have the same distribution and $\bbe[\Upsilon^2] = \|\Psi (\theta-\theta')\|_D^2$.
\endproof
\begin{lemma}
[see Lemma 12 of \cite{li2021accelerated}] 
\label{assump_variance_2}
	For $t\in \mathbb{Z}_+$, $\theta, \theta' \in \bbr^d$, $\rho, \rho' \in \bbr$, and $\tau \in \mathbb{Z}_+$ such that
	\begin{align}\label{cond_tau_1}
		\skipcon\cdot \mixrate^\tau \leq \min_{s\in \calS} \nu(s),
	\end{align}
	we have with probability 1,
	\begin{align}\label{variance_4}
		\bbe[\|\widetilde g(\theta,\rho, \xi_{t+\tau}) - \widetilde g(\theta', \rho', \xi_{t+\tau})-\big(g(\theta, \rho)  - g(\theta', \rho')\big)\|^2_2|\calF_t] \leq 16\|\Psi \theta - \Psi\theta'\|_D^2 + 4|\rho-\rho'|^2.
	\end{align}
\end{lemma}

Recall the definition of $\widehat g$, $N_k$, $N'_k$ and $\tau'$ in Algorithm~\ref{alg:VRTD} and $\Cmax:=\max\{\skipcon, \mixcon, \mixcons\}$.
\begin{lemma}\label{var_1}
	For every $\theta, \theta'\in \bbr^d$, $\rho, \rho' \in \bbr$, $\tau'\in \mathbb{Z}_+$ satisfied Ineq.~\eqref{cond_tau_1} and
	$\mixrate^{\tau'} \leq \tfrac{1}{4\Cmax +1},$ then
	\begin{align}
		\bbe\|\widehat g(\theta,\rho) - \widehat g(\theta',\rho')- g(\theta,\rho) + g(\theta',\rho')\|^2_2 \leq \tfrac{20}{N_k}\|\Psi \theta-\Psi \theta'\|_\Pi^2+\tfrac{8}{N_k}|\rho-\rho'|^2.
	\end{align}
\end{lemma}
\proof{Proof.}
We write
\begin{align*}
    \bbe\|\widehat g(\theta,\rho) - \widehat g(\theta',\rho')- g(\theta,\rho) + g(\theta',\rho')\|^2_2 = \tfrac{1}{N_k^2}\tsum_{i=1}^{N_k}\bbe\|\Lambda_i\|_2^2+ \tfrac{2}{N_k^2}\tsum_{i=1}^{N_k-1}\tsum_{j>i}\bbe\langle \Lambda_i,\Lambda_j \rangle,
\end{align*}
where $\Lambda_i = \widetilde g(\theta,\rho,\xi_i^k(\tau')) - \widetilde g(\theta',\rho',\xi_i^k(\tau'))- g(\theta,\rho) + g(\theta',\rho')$. Invoking Lemma~\ref{assump_variance_2} we can bound $\tfrac{1}{N_k^2}\tsum_{i=1}^{N_k}\bbe\|\Lambda_i\|_2^2\leq \tfrac{16}{N_k}\|\Psi \theta-\Psi \theta'\|_\Pi^2+\tfrac{4}{N_k}|\rho-\rho'|^2$. On the other hand, let $\calF_i:=\sigma(\xi_1^k(\tau'), ..., \xi_i^k(\tau'))$, we have
\begin{align*}
    \tfrac{2}{N_k^2}\tsum_{i=1}^{N_k-1}\tsum_{j>i}\bbe\langle \Lambda_i,\Lambda_j \rangle  &= \tfrac{2}{N_k^2}\tsum_{i=1}^{N_k-1}\tsum_{j>i} \bbe[\langle \Lambda_i, \bbe[\Lambda_j|\calF_i] \rangle]\\
    &\overset{(i)}\leq \tfrac{2}{N_k^2}\tsum_{i=1}^{N_k-1}\tsum_{j>i} \bbe[\Cmax \rho^{(j-i)\tau'}\|\Lambda_i\|_2(\|\Psi\theta-\Psi\theta'|_D+|\rho-\rho'|)]\\
    &\overset{(ii)}\leq \tfrac{2}{N_k^2}\tsum_{i=1}^{N_k-1}\tsum_{j>i} \bbe[\Cmax \rho^{(j-i)\tau'}(\tfrac{1}{8}\|\Lambda_i\|^2 + 4\|\Psi\theta-\Psi\theta'|_D^2+4|\rho-\rho'|^2)]\\
    &\overset{(iii)}\leq \tfrac{16}{N_k^2}\tsum_{i=1}^{N_k-1}\tsum_{j>i} \Cmax \rho^{(j-i)\tau'}\big(\|\Psi\theta-\Psi\theta'\|_D^2+|\rho-\rho'|^2\big)\\
    &\leq \tfrac{16\Cmax \rho^{\tau'}}{N_k(1-\rho^{\tau'})}\big(\|\Psi\theta-\Psi\theta'\|_D^2+|\rho-\rho'|^2\big) \overset{(iv)}\leq \tfrac{4}{N_k}\big(\|\Psi\theta-\Psi\theta'\|_D^2+|\rho-\rho'|^2\big),
\end{align*}
where (i) follows from Cauchy-Schwarz inequality and Lemma~\ref{lemma_operator_bias_2}, (ii) follows from Young's inequality, (iii) follows from Lemma~\ref{assump_variance_2}, (iv) follows from 	$\mixrate^{\tau'} \leq \tfrac{1}{4\Cmax +1}$. Combining the upper bounds yields the result.
\endproof

\begin{lemma}\label{var_2}
	For $\widetilde \rho$ defined in Algorithm~\ref{alg:VRTD}, if $\tau'$ satisfies $\mixrate^{\tau'} \leq \frac{1}{4 \Cmax+1}$, 
	then with probability~1,
	\begin{align}
		\bbe[(\widetilde \rho - \rhosol)^2] \leq \tfrac{5\cbound^2}{N_k'}. 
	\end{align}
\end{lemma}
\proof{Proof.}
	Recall the definition of $(\widetilde s_i,\widetilde a_i)$ in Algorithm~\ref{alg:VRTD}. 
	Define the vectors $\mathcal D, \mathcal D_j^{i} \in \bbr^{|\calS|\times |\calA|}$ as $\mathcal{D}:=\left[\nu^\pi(s)\cdot \pi(a|s)\right]$ and  $\mathcal D_j^{i}:=[\mathsf{P}(\widetilde s_j=s, \widetilde a_j=a|\widetilde s_i, \widetilde a_i)]$ where $s\in\calS, a\in\calA$. We have
	\begin{align*}
		\bbe[(\widetilde\rho - \rhosol)^2] 
  &= \tfrac{1}{N_k'^2}\tsum_{i=1}^{N_k'}\bbe|c(\widetilde s_i,\widetilde a_i) - \rho^*|^2 + \tfrac{2}{N_k'^2}\tsum_{i=1}^{N_k'-1}\tsum_{j=i+1}^{N_k'}\bbe\left[\big(c(\widetilde s_i, \widetilde a_i)-\rhosol\big)\bbe[c(\widetilde s_j,\widetilde a_j)-\rhosol|\calF_i]\right]\\
  &\overset{(i)}= \tfrac{1}{N_k'^2}\tsum_{i=1}^{N_k'}\bbe|c(\widetilde s_i,\widetilde a_i) - \rho^*|^2 + \tfrac{2}{N_k'^2}\tsum_{i=1}^{N_k'-1}\tsum_{j=i+1}^{N_k'}\bbe\left[\big(c(\widetilde s_i, \widetilde a_i)-\rhosol\big)\cdot (\textbf{c}^\top (\mathcal{D}_j^i- \mathcal{D}))\right]\\
		&\overset{(ii)}\leq\tfrac{4\cbound^2}{N_k'}+ \tfrac{2}{N_k'^2}\tsum_{i=1}^{N_k'-1}\tsum_{j=i+1}^{N_k'}\cdot\bbe\left[2 \bar c^2\|\mathcal{D}_j^i- \mathcal{D} \|_1\right]\\
		&\overset{(iii)}\leq \tfrac{4\cbound^2}{N_k'} + \tfrac{2}{N_k'^2}\tsum_{i=1}^{N_k'-1}\tsum_{j=i+1}^{N_k'} 2\skipcon\cdot \mixrate^{\tau'(j-i)}\cbound^2 \leq \tfrac{4\cbound^2}{N_k'}+ \tfrac{4\skipcon \cbound^2 \mixrate^{\tau'}}{N_k'(1-\mixrate^{\tau'})} \leq\tfrac{5\cbound^2}{N_k'},
	\end{align*}
	where step (i) follows from the definition of $\mathcal D, \mathcal D_j^{i}$, step (ii) follows from Cauchy-Schwarz inequality and $\|\mathbf{c}\|_\infty \leq \bar c$, and step (iii) follows from Lemma~\ref{lemma_assump_rho}.
\endproof

\subsection{Proof of Theorem~\ref{thm_VRTD}}\label{proof_thm_VRTD}
We first focus on the progress of a single epoch $k\in[K]$. 
Let $\underline\theta$ satisfy $g(\underline \theta, \widetilde \rho) - g(\widetilde \theta, \widetilde \rho) + \widehat g(\widetilde \theta, \widetilde \rho) = 0$ where $\widetilde \theta$ and $\widetilde \rho$ are defined in Algorithm~\ref{alg:VRTD}. The vector $\underline \theta$ can be interpreted as the fixed point that the VRTD algorithm is trying to solve inside the given epoch. The following lemma establish an upper bound on $\|\Psi \underline \theta - \Psi \thetasol\|_D^2$.
\begin{lemma}\label{bound_tilde_v}
	Consider a single epoch with index $k \in [K]$. We have 
	\begin{align}\label{VRTD_step_00}
		\bbe[\|\Psi\underline \theta - \Psi \theta^* \|^2_D]\leq& \tfrac{9}{N_k} \trace\left((I_d-M)^{-1}\bar  \Sigma(I_d-M)^{-\top}\right) + \tfrac{3}{2N_k(\monotonecon)^2\mu}\bbe\|\Psi\theta^* - \bar  Q^\pi\|_D^2 \nn\\
		&\quad+ \tfrac{60 }{N_k(\monotonecon)^2\mu}\bbe\|\Psi\widetilde \theta - \Psi \theta^*\|_D^2+ \tfrac{15\cbound^2}{N_k'(\monotonecon)^2\mu} + \tfrac{120\cbound^2}{N_k N_k'(\monotonecon)^2\mu},
	\end{align}
 \tli{where $\bar  \Sigma$ is defined in Theorem~\ref{thm_VRTD}, and $\mu$ and $M$ are defined in \eqref{def_mu_M}.}
\end{lemma}
Proof of the lemma is postponed to Section~\ref{proof_bound_tilde_v}. Given the upper bound on $\bbe[\|\Psi\underline \theta - \Psi \theta^* \|^2_D]$, we derive the progress in a single epoch $k\in[K]$, which is characterized in the following proposition.
\begin{proposition}\label{prop_VRTD}
	Consider a single epoch with index $k \in [K]$. Suppose that the parameters $\eta$, $N_k$ and $T$ satisfy 
	\beq\label{stepsize_1_1}
	\eta\leq \tfrac{1-\gamma}{845},~~T\geq \tfrac{64}{\mu(1-\gamma)\eta},~~\text{and}~~N_k\geq\tfrac{1160}{\mu(1-\gamma)^2}.
	\eeq
	Set the output of this epoch to be $\widehat \theta_k:=\frac{\sum_{t=1}^T  \theta_t}{T }$. If $\tau$ and $\tau'$ are chosen to satisfy 
	\begin{align}\label{tau_tau_p}
		\mixrate^{\tau}\leq \tfrac{\sqrt{\mu}\eta}{\mixcon}\quad \text{and} \quad \mixrate^{\tau'} \leq \min\{\tfrac{\min_{s\in \calS} \nu(s)}{\skipcon}, \tfrac{1}{4\Cmax+1}\},
	\end{align}
	then we have
	\begin{align}\label{VRTD_prop}
		\bbe[\|\Psi\widehat \theta_k - \Psi \theta^*\|_D^2] &\leq \tfrac{\bbe[\|\Psi \widetilde \theta - \Psi\theta^*\|_D^2]}{2}+ \tfrac{22\trace\{(I_d-M)^{-1}\bar{\Sigma} (I_d-M)^{-\top}\}}{N_k}\nn\\
		&\qquad + \tfrac{4\bbe[\|( {\Qsol} - \Psi\theta^*) \|_D^2])}{\mu(\monotonecon)^2N_k}+ \tfrac{36\cbound^2}{N_k'(\monotonecon)^2\mu}+ \tfrac{285\cbound^2}{N_kN_k'(\monotonecon)^2\mu}.
	\end{align}
\end{proposition}
\noindent See Section~\ref{proof_prop_VRTD} for the proof of this proposition. Now we take Proposition~\ref{prop_VRTD} as given and finish the proof of Theorem~\ref{thm_VRTD}.
First we suppose $\bar Q^* \neq \Psi \theta^*$. Recall the definition $W_1:= 22\trace\{(I_d-M)^{-1}\bar  \Sigma(I_d-M)^{-\top}\} +  \tfrac{4}{\mu(\monotonecon)^2}\bbe[\|(\Qsol - \Psi\theta^*) \|_D^2]$ in Theorem~\ref{thm_VRTD}. Recursively using Ineq.~\eqref{VRTD_prop} yields
\begin{align}\label{step_5_mark}
	\bbe\|\Psi \widehat \theta_K - \Psi \theta^*\|_D^2 &\leq \tfrac{1}{2^K}\|\Psi \theta^0 - \Psi \theta^*\|^2_D  + \tsum_{k=1}^K \tfrac{1}{2^{K-k}}\big(\tfrac{W_1}{N_k}+ \tfrac{36\cbound^2}{N_k'(\monotonecon)^2\mu}+ \tfrac{285\cbound^2}{N_kN_k'(\monotonecon)^2\mu} \big)\nn\\
	&\overset{(i)}\leq \tfrac{1}{2^K}\|\Psi \theta^0 - \Psi \theta^*\|^2_D  + \tsum_{k=1}^K(\tfrac{2}{3})^{K-k}\big(\tfrac{W_1}{N}+\tfrac{36\cbound^2}{N'(\monotonecon)^2\mu}+\tfrac{\cbound^2}{4N'}\big)\nn\\
	&\leq \tfrac{1}{2^K}\|\Psi \theta^0 - \Psi \theta^*\|^2_D  + \tfrac{3W_1}{N}+\tfrac{108\cbound^2}{N'(\monotonecon)^2\mu}+\tfrac{3\cbound^2}{4N'},
\end{align}
as desired, where step~(i) follows from $N_k \geq (\tfrac{3}{4})^{K-k}N$, $N_k' \geq (\tfrac{3}{4})^{K-k}N'$ and $N_k \geq \tfrac{1160}{\mu(1-\gamma)^2}$.

\subsubsection{Proof of Lemma~\ref{bound_tilde_v}}\label{proof_bound_tilde_v}For notational simplicity, we define $\gdiff(\theta, \rho) := \widehat g(\theta,\rho) - g(\theta,\rho)$ for $\theta\in \bbr^d, \rho\in \bbr$.
\tli{Recall the definition of $g$ in \eqref{deterministic_opt} and $g(\theta^*, \rho^*) = 0$, we have
\begin{align}\label{fixed_point_01}
\Psi^\top D \Psi \theta^* = \psi^\top D (c - \rho^* \ones) + \Psi^\top D P \Psi \theta^*.
\end{align}
Recall that 
$\underline\theta$ satisfies $g(\underline \theta, \widetilde \rho) - g(\widetilde \theta, \widetilde \rho) + \widehat g(\widetilde \theta, \widetilde \rho) = 0$, we have
\begin{align}\label{fixed_point_02}
\Psi^\top D \Psi\underline \theta &=\Psi^\top D(c - \widetilde \rho \ones)  + \Psi^\top D P \Psi \underline\theta + g(\widetilde \theta, \widetilde \rho) - \widehat g(\widetilde \theta,\widetilde \rho)\nn\\
&=\Psi^\top D(c - \widetilde \rho \ones)  + \Psi^\top D P \Psi \underline\theta -\widehat \delta_g(\widetilde \theta, \widetilde \rho).
\end{align}
By subtracting Eq.~\eqref{fixed_point_02} on both sides of Eq.~\eqref{fixed_point_01}, we obtain
\begin{align*}
	\Psi^\top D \Psi(\theta^* - \underline \theta) &= \Psi^\top D (\cost - \rhosol \mathbf{1}) + \Psi^\top D P \Psi \theta^* - \big(\Psi^\top D (\cost-\widetilde \rho \mathbf{1})+ \Psi^\top D P \Psi \underline \theta-\gdiff(\widetilde \theta, \widetilde \rho) \big)\\
	&= \Psi^\top D P \Psi(\theta^*- \underline \theta) + \Psi^\top D \mathbf{1}(\widetilde \rho - \rhosol) +  \gdiff(\widetilde \theta, \widetilde \rho).
\end{align*}
}
\tli{Rearrange the terms in the above relation,
we obtain
\[
\Psi^T D (I - P) \Psi (\theta^* - \underline \theta)= \Psi^\top D \mathbf{1}(\widetilde \rho - \rhosol) +  \gdiff(\widetilde \theta, \widetilde \rho)
\]
Using the fact that $\Phi = \Psi B^{-\frac{1}{2}}$ and the definition $M:=\Phi^T D P \Phi$, we have
\begin{align*}
\Psi^T D (I - P) \Psi 
&= B^{1/2} \Phi^\top D (I-P) \Phi B^{1/2}
= B^{1/2} (I -M) B^{1/2}
\end{align*}
Invoking that $I_d-M$ is invertible (proved in Lemma~\ref{lemma_inverse_operator}), we have that}
\begin{align}\label{bound_0}
	B^{\frac{1}{2}}(\theta^*-\underline \theta) 
	=(I_d-M)^{-1}B^{-\frac{1}{2}}\big(\Psi^\top D \mathbf{1}(\widetilde \rho - \rhosol) + \gdiff(\widetilde \theta, \widetilde \rho)\big).
\end{align}
Therefore, further utilizing the relationship $\Phi^\top D \Phi = I_d$ and $\Phi = \Psi B^{-\frac{1}{2}}$ yields
\begin{align}\label{bound_0_1}
	\bbe\|\Psi \theta^* - \Psi  \underline \theta \|_D^2 
 = \bbe \|B^{\frac{1}{2}}(\theta^*-\underline \theta)\|_2^2
 =\bbe\|(I_d-M)^{-1}B^{-\frac{1}{2}}\big(\Psi^\top D \mathbf{1}(\widetilde \rho - \rhosol) +\gdiff(\widetilde \theta, \widetilde \rho)\big)\|_2^2.
\end{align}
By applying Young's inequality we obtain that
\begin{align}\label{bound_inter_term_0}
	\bbe\|\Psi \theta^* - \Psi  \underline \theta \|_D^2 &\leq 3\bbe|\widetilde \rho - \rhosol|^2\cdot\|(I_d-M)^{-1}B^{-\frac{1}{2}}\Psi^\top D \mathbf{1}\|_2^2 \nn\\
	&\quad + 3\bbe\|(I_d-M)^{-1}B^{-\frac{1}{2}}\big(\gdiff(\widetilde \theta, \widetilde \rho)-\gdiff(\theta^*,\rhosol)\big)\|_2^2 + 3\bbe\|(I_d-M)^{-1}B^{-\frac{1}{2}}\gdiff(\theta^*,\rhosol)\|_2^2\nn\\
	& \overset{(i)}\leq \tfrac{15\cdot\cbound^2}{N_k'(\monotonecon)^2\mu} + 3\bbe\|(I_d-M)^{-1}B^{-\frac{1}{2}}\big(\gdiff(\widetilde \theta, \widetilde \rho)-\gdiff(\theta^*,\rhosol)\big)\|_2^2\nn\\
	&\quad+ 3\bbe\|(I_d-M)^{-1}B^{-\frac{1}{2}}\gdiff(\theta^*,\rhosol)\|_2^2 \nn\\
 & \overset{(ii)}\leq \tfrac{15\cdot\cbound^2}{N_k'(\monotonecon)^2\mu} + \tfrac{3}{(1-\beta)^2\mu}\bbe\|\gdiff(\widetilde \theta, \widetilde \rho)-\gdiff(\theta^*,\rhosol)\|_2^2 + 3\bbe\|(I_d-M)^{-1}B^{-\frac{1}{2}}\gdiff(\theta^*,\rhosol)\|_2^2 \nn\\
	& \overset{(iii)}\leq \tfrac{15 \cbound^2}{N_k'(\monotonecon)^2\mu}+\tfrac{24 \bbe|\widetilde\rho - \rho^*|^2}{N_k(\monotonecon)^2\mu}+ \tfrac{60\bbe\|\Psi \theta^* - \Psi\widetilde\theta\|_D^2}{N_k(\monotonecon)^2\mu}  + 3\bbe\|(I_d-M)^{-1}B^{-\frac{1}{2}}\gdiff(\theta^*,\rhosol)\|_2^2\nn\\
 & \overset{(iv)}\leq \tfrac{15 \cbound^2}{N_k'(\monotonecon)^2\mu}+\tfrac{120 \cbound^2}{N_kN_k'(\monotonecon)^2\mu}+ \tfrac{60\bbe\|\Psi \theta^* - \Psi\widetilde\theta\|_D^2}{N_k(\monotonecon)^2\mu}  + 3\bbe\|(I_d-M)^{-1}B^{-\frac{1}{2}}\gdiff(\theta^*,\rhosol)\|_2^2.
\end{align}
where $(i)$ follows from using Lemma \ref{var_2} to bound $\bbe|\widetilde \rho - \rhosol|^2$, and using Jensen's inequality $\|\Psi^\top D \mathbf{1}\|_2^2\leq \sum_{s\in\calS, a \in \calA} \nu(s)\pi(a|s) \|\psi(s,a)\|_2^2 \leq 1$, Lemma~\ref{lemma_inverse_operator}, and $\|B^{-\frac{1}{2}}\|_2 \leq \mu^{-\frac{1}{2}}$ to bound $\|(I_d-M)^{-1}B^{-\frac{1}{2}}\Psi^\top D \mathbf{1}\|_2^2$; step (ii) follows from Lemma~\ref{lemma_inverse_operator}, and $\|B^{-\frac{1}{2}}\|_2 \leq \mu^{-\frac{1}{2}}$;  step (iii) follows from Lemma \ref{var_1};  step (iv) follows from Lemma \ref{var_2}. 

\textcolor{black}{Next, we work on an upper bound on $\bbe\|(I_d-M)^{-1}B^{-\frac{1}{2}}\gdiff(\theta^*,\rhosol)\|_2^2$. We write
\begin{align}\label{step_first_term}
		&\quad\bbe\|(I_d-M)^{-1} B^{-\frac{1}{2}}\gdiff(\theta^*,\rhosol)\|_2^2\nn\\
		& =\tfrac{1}{N_k^2}\tsum_{i=1}^{N_k}\bbe\|(I_d-M)^{-1} B^{-\frac{1}{2}}\Delta_i^{\tau'}(\theta^*, \rho^*)\|_2^2
		+ R_1\nn\\
		&\overset{(i)} \leq \tfrac{2}{N_k}\trace\big((I_d-M)^{-1}\bar \Sigma(I_d-M)^{-\top}\big) + R_1,
	\end{align}
	where $\Delta_i^{\tau'}(\theta^*, \rho^*) := \widetilde g(\theta^*,\rho^*, \xi_i^k(\tau')) - g(\theta^*, \rho^*)$ with  $\widetilde g(\theta^*,\rho^*, \xi_i^k(\tau'))$ defined in Algorithm~\ref{alg:VRTD}, and
	$$
	R_1 := \tfrac{2}{N_k^2}\tsum_{i=1}^{N_k}\tsum_{j>i} \bbe[\langle (I_d-M)^{-1} B^{-\frac{1}{2}}\Delta_i^{\tau'}(\theta^*, \rho^*),(I_d-M)^{-1} B^{-\frac{1}{2}}\Delta_j^{\tau'}(\theta^*, \rho^*) \rangle].
	$$
	Step (i) of Ineq. \eqref{step_first_term} follows from condition $\skipcon\cdot \mixrate^\tau \leq \min_{s\in \calS} \nu(s)$ and the definition of $\bar \Sigma$ in  Theorem~\ref{thm_VRTD}, i.e.,
	\begin{align}\label{bound_D_0}
		&\quad \bbe\|(I_d-M)^{-1} B^{-\frac{1}{2}}\Delta_i^{\tau'}(\theta^*, \rho^*)\|_2^2\nn\\ 
		&= \tsum_{s\in \calS, a \in \calA} \mathbb{P}(s_i=s, a_i =a)\cdot \bbe[\|(I_d-M)^{-1}B^{-\frac{1}{2}}\Delta_i^{\tau'}(\theta^*, \rho^*)\|_2^2|s_i=s, a_i = a]\nn\\
		&\leq \tsum_{s\in \calS, a \in \calA} \nu(s) \pi(a|s)\cdot  \bbe[\|(I_d-M)^{-1}B^{-\frac{1}{2}}\Delta_i^{\tau'}(\theta^*, \rho^*)\|_2^2|s_i=s, a_i = a]\nn\\
		&\quad + \tsum_{s\in \calS, a \in \calA}\|\mathbb{P}(s_i=\cdot)- \nu \ \|_\infty \pi(a|s)\cdot \bbe\|(I_d-M)^{-1}B^{-\frac{1}{2}}\Delta_i^{\tau'}(\theta^*, \rho^*)\|_2^2\nn\\
		& \leq2 \cdot \trace\big((I_d-M)^{-1}\bar \Sigma(I_d-M)^{-\top}\big).
	\end{align}
	It suffices to bound the term $R_1$. Define $\calF^i := \sigma(\xi_1^k(\tau'), ..., \xi_i^k(\tau'))$. We write
	\begin{align}\label{bound_D}
		R_1
		& = \tfrac{2}{N_k^2}\tsum_{i=1}^{N_k}\tsum_{j>i}\bbe[\langle (I_d-M)^{-1} B^{-\frac{1}{2}}\Delta_i^{\tau'}(\theta^*, \rho^*),(I_d-M)^{-1} B^{-\frac{1}{2}}\bbe[\Delta_j^{\tau'}(\theta^*, \rho^*)|\mathcal{F}^i]\rangle]\nn\\
		& \overset{(i)}\leq \tfrac{2}{N_k^2}\tsum_{i=1}^{N_k}\tsum_{j>i}\bbe[\|(I_d-M)^{-1} B^{-\frac{1}{2}}\Delta_i^{\tau'}(\theta^*, \rho^*)\|_2\|(I_d-M)^{-1} B^{-\frac{1}{2}}\bbe[\Delta_j^{\tau'}(\theta^*, \rho^*)|\mathcal{F}^i]\|_2]\nn\\
		&\overset{(ii)}\leq\tfrac{2}{N_k^2}\tsum_{i=1}^{N_k}\tsum_{j>i} \mixcon\rho^{(j-i)\tau'}\bbe[\|(I_d-M)^{-1} B^{-\frac{1}{2}}\Delta_i^{\tau'}(\theta^*, \rho^*)\|_2\cdot\tfrac{1}{(1-\gamma)\sqrt{\mu}}\|\Psi \theta^* - \bar Q^*\|_D ]\nn\\
		&\overset{(iii)}\leq \tfrac{1}{N_k^2}\tsum_{i=1}^{N_k}\tsum_{j>i}\mixcon\rho^{(j-i)\tau'}
		\big(\bbe[\|(I_d-M)^{-1} B^{-\frac{1}{2}}\Delta_i^{\tau'}(\theta^*, \rho^*)\|_2^2] + \tfrac{1}{\mu (1-\gamma)^2}\|\Psi \theta^* - \bar Q^*\|_D^2 \big)\nn\\
		&\overset{(iv)}\leq \tfrac{1}{N_k^2}\tsum_{i=1}^{N_k}\tsum_{j>i}\mixcon\rho^{(j-i)\tau'}
		\left(2 \trace\big\{(I_d-M)^{-1}\bar \Sigma(I_d-M)^{-\top}\big\}+\tfrac{1}{\mu (1-\gamma)^2} \|\Psi \theta^* - \bar Q^*\|_D^2 \right),
	\end{align}
	where step (i) follows from Cauchy-Schwarz inequlity, step~(ii) follows from Lemma~\ref{lemma_inverse_operator}, $\|B^{\frac{1}{2}}\theta\|_2 \leq \tfrac{1}{\sqrt{\mu}}\|\theta\|_2$ and Lemma~\ref{lemma_operator_bias_1}, step (iii) follows from Young's inequality, and step~(iv) follows from Ineq.~\eqref{bound_D_0}. Substituting Ineq.~\eqref{bound_D} into Ineq.~\eqref{step_first_term} yields
	\begin{align*}
	&\bbe\|(I_d-M)^{-1}B^{-\tfrac{1}{2}}\gdiff(\theta^*,\rhosol)\|_2^2 \nn\\
	&\leq \big(\tfrac{2}{N_k}+\tfrac{2\mixcon \mixrate^{\tau'}}{(1-\mixrate^{\tau'})N_k}\big)\trace\big\{(I_d-M)^{-1}\bar  \Sigma(I_d-M)^{-\top}\big\} +  \tfrac{\mixcon \mixrate^{\tau'}}{(1-\mixrate^{\tau'})\mu(\monotonecon)^2N_k}\|\Psi \thetasol - \Qsol\|_D^2.
\end{align*}
}
Invoking the condition that $\mixrate^{\tau'} \leq \frac{1}{2\mixcon +1}$, we obtain
\begin{align}\label{bound_last_term}
	\bbe\|(I_d-M)^{-1}B^{-\tfrac{1}{2}}\gdiff(\theta^*,\rhosol)\|_2^2\leq \tfrac{3}{N_k}\trace\big\{(I_d-M)^{-1}\bar  \Sigma(I_d-M)^{-\top}\big\} +  \tfrac{1}{2\mu(\monotonecon)^2N_k}\|\Psi \thetasol - \Qsol\|_D^2.
\end{align}
Finally, putting together Ineq.~\eqref{bound_inter_term_0} and Ineq.~\eqref{bound_last_term} yields the desired result.

\subsubsection{Proof of Proposition~\ref{prop_VRTD}}\label{proof_prop_VRTD}
We first provide an upper bound on the term  $\bbe\|\Psi\widehat \theta_k - \Psi\underline \theta\|_{D}^2$. For simplicity, we define $\gdifft(\theta, \rho,\xi_t) := \widetilde g(\theta,\rho, \xi_t) - g(\theta,\rho)$ for $\theta\in\bbr^d$ and $\rho \in\bbr$. By the updates of the VRTD algorithm, we have
\begin{align*}
	\theta_{t+1}-\underline \theta&=\theta_{t} - \eta \big(\widetilde g(\theta_t,\rhoest,\xi_t)-\widetilde g(\widetilde \theta,\rhoest, \xi_t) + \widehat g(\widetilde \theta, \rhoest)\big)-\underline \theta\\
 &= \theta_{t} - \underline \theta
 -\eta\big(g(\theta_t, \widetilde \rho) 
 - g(\widetilde \theta, \widetilde \rho)
 + \widehat g(\widetilde \theta, \rhoest)\big)
 -\eta \big(\gdifft(\widetilde\theta,\widetilde \rho,\xi_t)-\gdifft(\theta_t,\widetilde \rho,\xi_t)\big).
\end{align*}
Invoking the fact that $  \widehat g(\widetilde \theta, \widetilde \rho) = g(\widetilde \theta, \widetilde \rho)-g(\underline \theta, \widetilde \rho) $, we obtain
\begin{align}\label{VRTD_step_0}
	\theta_{t+1}-\underline \theta = \theta_t - \underline \theta  - \eta\big(g(\theta_t, \rhoest) - g(\underline \theta, \rhoest)\big) + \eta \big(\gdifft(\widetilde\theta,\widetilde \rho,\xi_t)-\gdifft(\theta_t,\widetilde \rho,\xi_t)\big).
\end{align}
By taking $\|\cdot\|_2^2$ and expectation on both sides of the above equation, we have
\begin{align}\label{VRTD_1}
	&\bbe \|\theta_{t+1}-\underline \theta\|_2^2 \nn\\
	&\leq \bbe\|\theta_t - \underline \theta\|_2^2-2\eta\langle g(\theta_t, \rhoest) - g(\underline \theta, \rhoest), \theta_t-\underline \theta\rangle + 2\eta\bbe \langle \gdifft(\widetilde \theta,\widetilde \rho,\xi_t)-\gdifft( \theta_t,\widetilde \rho,\xi_t), \theta_t - \underline \theta  \rangle  \nn\\
	&\quad+2\eta^2\bbe\|g(\theta_t, \rhoest) - g(\underline \theta, \rhoest)\|_2^2 +2\eta^2\bbe\|\gdifft(\widetilde \theta,\widetilde \rho,\xi_t)-\gdifft( \theta_t,\widetilde \rho,\xi_t)\|_2^2\nn\\
	& \overset{(ii)}\leq \bbe\|\theta_t - \underline \theta\|_2^2 + \big( 8\eta^2-2\eta(\monotonecon) \big)\bbe\|\Psi \theta_t- \Psi\underline \theta\|_D^2 + 32\eta^2\bbe\|\Psi \widetilde \theta- \Psi \theta_t\|_D^2\nn\\
	&\quad + 2\eta \bbe\langle \gdifft(\widetilde \theta,\widetilde \rho,\xi_t)-\gdifft( \theta_t,\widetilde \rho,\xi_t), \theta_t - \underline \theta  \rangle\nn \\
	& \overset{(iii)}\leq \bbe\|\theta_t - \underline \theta\|_2^2 + \big( 8\eta^2-2\eta(\monotonecon) + \eta \mixcon \mixrate^{m_0}/\sqrt{\mu}\big)\bbe\|\Psi \theta_t- \Psi\underline \theta\|_D^2\nn\\
	&\quad+ (32\eta^2+ \eta \mixcon \mixrate^{m_0}/\sqrt{\mu})\bbe\|\Psi \widetilde \theta- \Psi \theta_t\|_D^2,
\end{align}
where 
step (ii) follows from using Lemma~\ref{monotone_constant} to bound the second term, Lemma~\ref{lemma_lipschitz} to bound the forth term and Lemma~\ref{assump_variance_2} to bound the last term. 
Step (iii) follows from 
	\begin{align*}
		&\langle \bbe[\gdifft(\widetilde \theta,\widetilde \rho,\xi_t)-\gdifft( \theta_t,\widetilde \rho,\xi_t)|\calF_t], \theta_t - \underline \theta  \rangle\\
		&\qquad \leq \|\bbe[\gdifft(\widetilde \theta,\widetilde \rho,\xi_t)-\gdifft( \theta_t,\widetilde \rho,\xi_t)|\calF_t]\|_2 \|\theta_t - \underline \theta\|_2\leq \mixcon\mixrate^{m_0} \|\Psi \widetilde \theta- \Psi \theta_t\|_D \cdot \tfrac{1}{\sqrt{\mu}} \|\Psi \theta_t - \Psi \underline \theta\|_D\\
		&\qquad\leq \tfrac{\mixcon\mixrate^{m_0} }{2\sqrt{\mu}}\big(\|\Psi \widetilde \theta -\Psi \theta_t\|_D^2 + \|\Psi \theta_t - \Psi \underline \theta\|_D^2\big),
	\end{align*}
	where the first inequality follows from Cauchy-Schwarz inequality, the second inequality follows from Lemma~\ref{lemma_operator_bias_2} and $\|\theta\|_2\leq (1/\sqrt{\mu})\|\Psi \theta\|_D$, and the third inequality follows from Young's inequality.
Further using Young's inequality $\|\Psi \widetilde \theta- \Psi \theta_t\|_D^2\leq 2\|\Psi \theta_t- \Psi\underline \theta\|_D^2+ 2\|\Psi\widetilde \theta- \Psi\underline \theta\|_D^2$ to Ineq.~\eqref{VRTD_1} yields
\begin{align}
	\bbe \|\theta_{t+1}-\underline \theta\|_2^2 
	&\leq \bbe\|\theta_t - \underline \theta\|_2^2+\big( 8\eta^2 + 64\eta^2-2\eta(\monotonecon)+ 3\eta \mixcon \mixrate^{m_0}/\sqrt{\mu}\big)\bbe\|\Psi \theta_t- \Psi\underline \theta\|_D^2 \nn\\
	&\quad+(64\eta^2+ 2\eta \mixcon \mixrate^{m_0}/\sqrt{\mu})\bbe\|\Psi \underline \theta - \Psi\widetilde \theta\|_D^2\nn\\
	&\overset{(i)}\leq \bbe\|\theta_t - \underline \theta\|_2^2 -\eta(\monotonecon)\bbe\|\Psi \theta_t- \Psi\underline \theta\|_D^2 + 66\eta^2\bbe\|\Psi \underline \theta - \Psi\widetilde \theta\|_D^2,  
\end{align}
where step (i) follows from the assumption $\eta \leq \tfrac{\monotonecon}{75}$ and $\mixrate^{m_0}\leq \frac{\sqrt{\mu}\eta}{\mixcon}$. By taking a telescopic sum of Ineq.~\eqref{VRTD_1} from $t=1$ to $T$, we obtain
\begin{align*}
	\tsum_{t=1}^T \eta(\monotonecon)\bbe[\|\Psi \theta_t - \Psi\underline \theta\|_D^2] + \bbe[\|\theta_{T+1}-\underline \theta\|_2^2]  \leq \bbe[\|\theta_1-\underline \theta\|_2^2]+ 66T\eta^2\bbe[\|\Psi \underline \theta - \Psi \widetilde \theta\|_D^2].
\end{align*}
Utilizing the inequality 
$\|\theta\|_2^2 \leq \tfrac{1}{\mu}\|B^{\frac{1}{2}}\theta\|_2^2 = \tfrac{1}{\mu}\|\Psi \theta\|_D^2$, we then have
\begin{align*}
	\tsum_{t=1}^T \eta(\monotonecon)\bbe[\|\Psi \theta_t &- \Psi\underline \theta\|_D^2] 
	\leq \tfrac{1}{\mu}\bbe[\|\Psi \theta_1- \Psi\underline \theta\|_D^2]+ 66T\eta^2\bbe[\|\Psi \underline \theta - \Psi \widetilde \theta\|_D^2].
\end{align*}
Noting in addition that $\theta_1=\widetilde \theta$ then yields
\begin{align}\label{VRTD_step_2}
	\tsum_{t=1}^T \bbe[\|\Psi \theta_t - \Psi\underline \theta\|_D^2]/T &\leq \tfrac{1}{T \eta (\monotonecon) }( \tfrac{1}{\mu}+ 66T\eta^2)\bbe[\|\Psi \underline \theta - \Psi \widetilde \theta\|_D^2]\nn\\
	&\leq \tfrac{2/\mu+132T\eta^2}{T \eta (\monotonecon) }(\bbe[\|\Psi \widetilde \theta - \Psi\theta^*\|_D^2] + \bbe[\|\Psi \underline \theta - \Psi\theta^*\|_D^2]).
\end{align}
Combining the bound on $\|\Psi\underline \theta - \Psi \theta^*\|_D^2$ shown in Lemma~\ref{bound_tilde_v} and the bound on $\tfrac{\bbe[\|\Psi \theta_t - \Psi\underline \theta\|_D^2]}{T }$ from Ineq.~\eqref{VRTD_step_2}, we obtain 
\begin{align}\label{VRTD_step_3}
	\tsum_{t=1}^{T} \tfrac{\bbe[\|\Psi \theta_t - \Psi 	\thetasol\|_D^2]}{T}&\leq  \tsum_{t=1}^T 2\bbe[\|\Psi \theta_t - \Psi\underline \theta\|_D^2]/T + 2 \bbe[\|\Psi \theta^* - \Psi \underline \theta\|_D^2]\nn\\
	& \leq \tfrac{4/\mu+264T\eta^2}{T \eta (\monotonecon) }\bbe[\|\Psi \widetilde \theta - \Psi\theta^*\|_D^2]+ \big(\tfrac{4/\mu+264T\eta^2}{T \eta (\monotonecon) }+2\big)\bbe[\|\Psi \underline \theta - \Psi\theta^*\|_D^2]\nn\\
	& \leq \tfrac{4/\mu+264T\eta^2}{T \eta (\monotonecon) }\bbe[\|\Psi \widetilde \theta - \Psi\theta^*\|_D^2] + \big(\tfrac{4/\mu+264T\eta^2}{T \eta (\monotonecon) }+2\big)\big(\tfrac{120\bbe\|(\Psi \widetilde \theta - \Psi\theta^*) \|_D^2+3\bbe\|(\Qsol - \Psi\theta^*) \|_D^2}{2\mu(\monotonecon)^2N_k} \nn\\
	&\quad\quad\qquad+ \tfrac{15\cbound^2}{N_k'(\monotonecon)^2\mu}+ \tfrac{120\cbound^2}{N_kN_k'(\monotonecon)^2\mu}
	+\tfrac{9}{N_k}\trace\{(I_d-M)^{-1}\bar  \Sigma(I_d-M)^{-\top}\}\big).
\end{align}
Then the final term takes the desired form; we are left with bounding the first two terms in the display above.
First, recall by assumption that $\eta \leq \tfrac{\monotonecon}{845}$, and $T\geq\tfrac{64}{\mu(\monotonecon)\eta}$,  so that
\begin{align}\label{cond_1}
	\tfrac{4/\mu+264T\eta^2}{T\eta (\monotonecon) }\leq \tfrac{4/\mu + 64\cdot 2 64\eta/\mu(\monotonecon)}{64/\mu}\leq \tfrac{3}{8}.
\end{align}
Second, utilizing the condition $N_k\geq \tfrac{1160}{\mu(\monotonecon)^2}$, we have 
$
	\big(\tfrac{4/\mu+264T\eta^2}{T\eta (\monotonecon) }+2\big)\tfrac{60}{\mu(\monotonecon)^2N_k} \leq \tfrac{1}{8}.
$
Substituting the bounds above into inequality \eqnok{VRTD_step_3}, we obtain
\begin{align}\label{cond_3}
	\tfrac{\sum_{t=1}^T\bbe\|\Psi \theta_t - \Psi \thetasol\|_D^2}{T} \leq \tfrac{\bbe[\|\Psi \widetilde \theta - \Psi\theta^*\|_D^2]}{2} + \tfrac{22\trace\{(I_d-M)^{-1}\bar  \Sigma(I_d-M)^{-\top}\}}{N_k} + \tfrac{4\bbe[\|(\bar  Q^\pi - \Psi\theta^*) \|_D^2])}{\mu(\monotonecon)^2N_k}+ \tfrac{36\cbound^2}{N_k'(\monotonecon)^2\mu}+ \tfrac{\cbound^2}{4N_k'}.
\end{align}
By the definition of $\widehat \theta_k:=\frac{\sum_{t=1}^T\theta_t}{T}$ and Jensen's inequality, we have 
$
	\bbe[\|\Psi\widehat \theta_k - \Psi\theta^*\|_D^2] \leq \tfrac{\sum_{t=1}^T\bbe\|\Psi \theta_t - \Psi \thetasol\|_D^2}{T  },
$
which completes the proof of Proposition~\ref{prop_VRTD}.

\subsection{Proof of Theorem~\ref{thm_VRTD_bias}}\label{proof_thm_VRTD_bias}
Similar to the proof of Theorem~\ref{thm_VRTD}, we focus on a single epoch. For simplicity, we let $\bar  \theta_t$ denote $\bbe[\theta_t]$, let $\bar {\underline \theta}$ denote $\bbe[\underline \theta]$. 
We first establish a lemma to bound the term $\|\Psi\theta^* - \Psi \bar {\underline \theta}\|_D^2$.
\begin{lemma}\label{bias_lemma_0}
	We have
	\begin{align*}
		\|\Psi \theta^* - \Psi \bar {\underline \theta}\|_D^2 \leq \tfrac{3 \mixrate^{2\tau'}}{\mu(\monotonecon)^2 (1-\mixrate^{\tau'})^2} \left( \tfrac{\skipcon^2\cdot \cbound^2}{N'^2} + \tfrac{2\mixcon^2\bbe[\|\Psi \widetilde\theta - \Psi \theta^*\|_D^2]+\mixcon^2\|\Psi \theta^* -\Qsol\|_D^2 + 8\mixcons^2\cdot\cbound^2}{N^2}\right).
	\end{align*}
\end{lemma}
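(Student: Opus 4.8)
The plan is to start from the closed-form expression for the epoch fixed point $\underline\theta$ that was already derived in the proof of Lemma~\ref{bound_tilde_v}. Recall \eqref{bound_0}, which reads
$$B^{\frac12}(\theta^* - \underline\theta) = (I_d - M)^{-1} B^{-\frac12}\underbrace{\big(\Psi^\top D \mathbf{1}(\widetilde\rho - \rhosol) + \widehat g(\widetilde\theta,\widetilde\rho) - g(\widetilde\theta,\widetilde\rho)\big)}_{=:V}.$$
Since $(I_d-M)^{-1}B^{-\frac12}$ is deterministic, taking expectations and using linearity gives $B^{\frac12}(\theta^* - \bar{\underline\theta}) = (I_d-M)^{-1}B^{-\frac12}\,\bbe[V]$. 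Because $\Phi = \Psi B^{-\frac12}$ is $D$-orthonormal, $\|\Psi\theta^* - \Psi\bar{\underline\theta}\|_D = \|B^{\frac12}(\theta^* - \bar{\underline\theta})\|_2$; combining Lemma~\ref{lemma_inverse_operator} (which gives $\|(I_d-M)^{-1}\,\cdot\,\|_2\le\tfrac1{\monotonecon}\|\cdot\|_2$) with $\|B^{-\frac12}\|_2 = \mu^{-\frac12}$ yields $\|\Psi\theta^* - \Psi\bar{\underline\theta}\|_D^2 \le \tfrac{1}{\mu(\monotonecon)^2}\|\bbe[V]\|_2^2$. This already produces the prefactor $\tfrac1{\mu(\monotonecon)^2}$, so the entire task reduces to bounding the bias $\|\bbe[V]\|_2$.

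I would split $V$ into an average-reward part and an operator part,
$$V = \underbrace{\Psi^\top D\mathbf{1}(\widetilde\rho - \rhosol)}_{(A)} + \underbrace{\widehat g(\widetilde\theta,\widetilde\rho) - g(\widetilde\theta,\widetilde\rho)}_{(B)},$$
so that $\|\bbe[V]\|_2 \le \|\bbe[(A)]\|_2 + \|\bbe[(B)]\|_2$. The observation that makes this split clean is that in $(B)$ the \emph{same} scalar $\widetilde\rho$ enters both $\widehat g$ and $g$, so $(B)$ is a genuine stochastic-operator bias evaluated at the fixed parameter pair $(\widetilde\theta,\widetilde\rho)$; conditioning on $(\widetilde\theta,\widetilde\rho)$ lets me invoke Lemmas~\ref{lemma_operator_bias_1} and \ref{lemma_operator_bias_2} without having to argue about the correlation between $\widetilde\rho$ and the step-$5$ samples.

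For both terms I would measure the bias relative to the reference $\sigma$-field $\mathcal{F}_0$ generated by everything up to and including step~$4$ of the epoch, so that $\widetilde\theta$ and $\widetilde\rho$ are $\mathcal{F}_0$-measurable. Term $(A)$: since the $j$-th sample used for $\widetilde\rho$ sits $j(\tau'+1)$ steps after $\mathcal{F}_0$, Lemma~\ref{lemma_assump_rho} bounds the deviation of its marginal from $\nu$ by $\skipcon\mixrate^{j(\tau'+1)}$; averaging and summing the geometric series gives $|\bbe[\widetilde\rho] - \rhosol| \le \tfrac{\cbound\skipcon\mixrate^{\tau'}}{N'(1-\mixrate^{\tau'})}$, and with $\|\Psi^\top D\mathbf{1}\|_2 \le 1$ this controls $\|\bbe[(A)]\|_2$ and yields the $\tfrac{\skipcon^2\cbound^2}{N'^2}$ term. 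Term $(B)$: writing $\widehat g = \tfrac1N\sum_{i=1}^N\widetilde g(\widetilde\theta,\widetilde\rho,\xi_i)$ and applying Lemma~\ref{lemma_operator_bias_1} together with Lemma~\ref{lemma_operator_bias_2} to the $i$-th sample, now with effective skip $\approx i(\tau'+1)$ relative to $\mathcal{F}_0$, gives a per-sample conditional bias $\le \mixrate^{i\tau'}\big[\mixcon(\|\Psi\widetilde\theta - \Psi\thetasol\|_D + \|\Psi\thetasol - \Qsol\|_D) + \mixcons|\widetilde\rho - \rhosol|\big]$; summing the geometric series in $i$ produces the crucial $\tfrac1N$ and $\tfrac1{1-\mixrate^{\tau'}}$ factors, and the crude bound $|\widetilde\rho - \rhosol|\le 2\cbound$ turns the $\mixcons$ contribution into the $\mixcons^2\cbound^2$ term.

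Finally I would take full expectations (using the tower rule and Jensen's inequality $(\bbe\|X\|)^2 \le \bbe\|X\|^2$ to convert the first-moment bound on $\|\Psi\widetilde\theta - \Psi\thetasol\|_D$ into the second moment in the statement), collect the three pieces via $(x+y+z)^2 \le 3(x^2+y^2+z^2)$, and multiply by $\tfrac1{\mu(\monotonecon)^2}$ to recover the claimed bound. The main obstacle is the $1/N$ factor in the operator term: the per-step conditional bias supplied directly by Lemmas~\ref{lemma_operator_bias_1}--\ref{lemma_operator_bias_2} is a flat $O(\mixrate^{\tau'})$ that does not decay with the sample index, so naive summation would lose the $1/N$. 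The fix is to anchor the bias at the fixed reference point $\mathcal{F}_0$ and exploit geometric marginal mixing of the trajectory, so that the $i$-th sample contributes $O(\mixrate^{i\tau'})$ and the average of $N$ such terms is $O\!\big(\tfrac1N\tfrac{\mixrate^{\tau'}}{1-\mixrate^{\tau'}}\big)$; carrying the loose universal constants through this geometric bookkeeping is the remaining bit of care.
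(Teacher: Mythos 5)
Your proposal is correct and follows essentially the same route as the paper's proof: it starts from the fixed-point identity \eqref{bound_0}, takes expectations, extracts the $\tfrac{1}{\mu(\monotonecon)^2}$ prefactor via Lemma~\ref{lemma_inverse_operator}, splits the bias into the average-reward part and the operator part (the latter centered at $(\thetasol,\rhosol)$ so that Lemmas~\ref{lemma_operator_bias_1} and \ref{lemma_operator_bias_2} apply), and obtains the crucial $\tfrac{\mixrate^{\tau'}}{N(1-\mixrate^{\tau'})}$ factors by anchoring the conditional bias of the $i$-th batch sample at the epoch-start $\sigma$-field and summing the resulting geometric series. The only cosmetic difference is that you apply the triangle inequality before squaring while the paper applies Young's inequality to the squared norm; these yield the same three-term bound.
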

\noindent The proof of this lemma is postponed to Section~\ref{proof_bias_lemma_0}.

Now we switch our attention to bound the iterates. Recall the definition $\gdifft(\theta, \rho,\xi_t) := \widetilde g(\theta,\rho, \xi_t) - g(\theta,\rho)$.
By taking expectation on both sides of Eq. \eqref{VRTD_step_0} and using the linearity of $g$, we obtain
\begin{align*}
	\bar  \theta_{t+1} - \bar {\underline \theta} = \bar  \theta_t - \bar {\underline \theta} - \eta\big(g(\bar  \theta_t, \bbe[\rhoest]) - g(\bar {\underline \theta}, \bbe[\rhoest]) \big) + \eta \bbe\big[\gdifft(\widetilde \theta,\widetilde \rho, \xi_t)- \gdiff(\theta_t,\widetilde \rho,\xi_t)\big].
\end{align*}
Taking squared $\ell_2$-norm on both sides of the above equality yields
\begin{align}\label{bias_step_0}
	\|\bar  \theta_{t+1} - \bar {\underline \theta}\|_2^2 & = \|\bar  \theta_{t} - \bar {\underline \theta}\|_2^2 - 2\eta\langle g(\bar  \theta_t, \bbe[\rhoest]) - g(\bar {\underline \theta}, \bbe[\rhoest]) ,  \bar  \theta_{t} - \bar {\underline \theta} \rangle + 2\eta \big\langle \bbe\big[\gdifft(\widetilde \theta,\widetilde \rho, \xi_t)- \gdiff(\theta_t,\widetilde \rho,\xi_t)\big] ,  \bar  \theta_{t} - \bar {\underline \theta}\big\rangle\nn\\
	&\quad + \eta^2 \big\|\big(g(\bar  \theta_t, \bbe[\rhoest]) - g(\bar {\underline \theta}, \bbe[\rhoest]) \big) - \bbe\big[\gdifft(\widetilde \theta,\widetilde \rho, \xi_t)- \gdiff(\theta_t,\widetilde \rho,\xi_t)\big]\big\|_2^2\nn\\
	&\leq \|\bar  \theta_{t} - \bar {\underline \theta}\|_2^2 - 2\eta\langle g(\bar  \theta_t, \bbe[\rhoest]) - g(\bar {\underline \theta}, \bbe[\rhoest]) ,  \bar  \theta_{t} - \bar {\underline \theta} \rangle + 2\eta \big\langle \bbe\big[\gdifft(\widetilde \theta,\widetilde \rho, \xi_t)- \gdiff(\theta_t,\widetilde \rho,\xi_t)\big] ,  \bar  \theta_{t} - \bar {\underline \theta}\big\rangle\nn\\
	&\quad + 2\eta^2 \left\|g(\bar  \theta_t, \bbe[\rhoest]) - g(\bar {\underline \theta}, \bbe[\rhoest]) \right\|_2^2 +2\eta^2\big\| \bbe\big[\gdifft(\widetilde \theta,\widetilde \rho, \xi_t)- \gdiff(\theta_t,\widetilde \rho,\xi_t)\big]\big\|_2^2,
\end{align}
where the last inequality follows from Young's inequality. By Jensen's inequality, we have that 
\begin{align}\label{bias_step_1}
	\big\| \bbe\big[\gdifft(\widetilde \theta,\widetilde \rho, \xi_t)- \gdiff(\theta_t,\widetilde \rho,\xi_t)\big]\big\|_2^2&\leq \bbe\big\|\bbe[\gdifft(\widetilde \theta,\widetilde \rho, \xi_t)- \gdiff(\theta_t,\widetilde \rho,\xi_t)|\mathcal{F}_{t-1}]\big\|_2^2\leq \mixcon^2 \mixrate^{2\tau} \bbe[\|\Psi \theta_t - \Psi \widetilde \theta\|_D^2].
\end{align}
Using similar idea, we obtain another upper bound
\begin{align}\label{bias_step_2}
	&\big\langle \bbe\big[\gdifft(\widetilde \theta,\widetilde \rho, \xi_t)- \gdiff(\theta_t,\widetilde \rho,\xi_t)\big] ,  \bar  \theta_{t} - \bar {\underline \theta}\big\rangle\nn\\
	&\qquad\overset{(i)}\leq \bbe\big\|\bbe[\gdifft(\widetilde \theta,\widetilde \rho, \xi_t)- \gdiff(\theta_t,\widetilde \rho,\xi_t)|\calF_{t-1}]\big\|_2\|\bar  \theta_t - \underline{\bar  \theta}\|_2\nn\\
	& \qquad\overset{(ii)}\leq \tfrac{\mixcon \mixrate^{\tau}}{\sqrt{\mu}} \bbe[\|\Psi \theta_t - \Psi \widetilde \theta\|_D]\|\Psi \bar  \theta_t - \Psi\underline{\bar  \theta}\|_D \overset{(iii)}\leq \tfrac{\mixcon \mixrate^{\tau}}{2\sqrt{\mu}}\big( \bbe[\|\Psi \theta_t - \Psi \widetilde \theta\|_D^2] + \|\Psi \bar  \theta_t - \Psi\underline{\bar  \theta}\|_D^2\big),
\end{align}
where step (i) follows from Cauchy-Schwarz inequality and Jensen's inequality, step (ii) follows from $\|\theta\|_2\leq \frac{1}{\sqrt{\mu}}\|\Psi \theta\|_D$ and Lemma~\ref{lemma_operator_bias_1}, step (iii) follows from Jensen's inequality and Young's inequality. 
Combining Ineqs. \eqref{bias_step_0}, \eqref{bias_step_1}, \eqref{bias_step_2} we obtain
\begin{align*}
	&\|\bar  \theta_{t+1} - \bar {\underline \theta}\|_2^2\\ & \leq \|\bar  \theta_{t} - \bar {\underline \theta}\|_2^2 - 2\eta\langle g(\bar  \theta_t, \bbe[\rhoest]) - g(\bar {\underline \theta}, \bbe[\rhoest]) ,  \bar  \theta_{t} - \bar {\underline \theta} \rangle+ \tfrac{\eta\mixcon \mixrate^\tau}{\sqrt{\mu}} \big(\bbe[\|\Psi \theta_t - \Psi \widetilde \theta\|_D^2] + \| \Psi\bar  \theta_t - \Psi\bar {\underline \theta}\|_2^2)\\
	&\quad + 2\eta^2 \left\|g(\bar  \theta_t, \bbe[\rhoest]) - g(\bar {\underline \theta}, \bbe[\rhoest]) \right\|_2^2 +2\eta^2\mixcon^2\mixrate^{2\tau} \bbe[\|\Psi \theta_t - \Psi \widetilde \theta\|_D^2]\\
	&\overset{(i)}\leq \|\bar  \theta_{t} - \bar {\underline \theta}\|_2^2  + \big(8\eta^2 - 2\eta(\monotonecon) + \tfrac{\eta \mixcon \mixrate^\tau}{\sqrt{\mu}} \big)\|\Psi \bar  \theta_t - \Psi \bar {\underline \theta}\|_D^2 + (\tfrac{\eta \mixcon \mixrate^\tau}{\sqrt{\mu}} + 2\eta^2\mixcon^2\mixrate^{2\tau} )\bbe[\|\Psi \theta_t - \Psi \widetilde \theta\|_D^2]\\
	&\overset{(ii)}\leq \|\bar  \theta_{t} - \bar {\underline \theta}\|_2^2   - \eta(\monotonecon) \|\Psi \bar  \theta_t - \Psi \bar {\underline \theta}\|_D^2 + (\tfrac{2\eta \mixcon \mixrate^\tau}{\sqrt{\mu}} + 4\eta^2\mixcon^2\mixrate^{2\tau} )\big(\bbe[\|\Psi \theta_t - \Psi  \theta^*\|_D^2]+\bbe[\|\Psi \theta^* - \Psi \widetilde \theta\|_D^2]\big).
\end{align*}
where step (i) follows from Lemma~\ref{monotone_constant} and  \ref{lemma_lipschitz}, step (ii) follows from Young's inequality, $\eta \leq \tfrac{\monotonecon}{9}$ and $\mixrate^{\tau}\leq \frac{\sqrt{\mu}\eta}{\mixcon}$. 
By taking a telescope sum of the above inequality from $t=1$ to $T$, we have 
\begin{align*}
	\tsum_{t=1}^T &\eta(\monotonecon)\|\Psi \bar  \theta_t - \Psi \bar {\underline \theta}\|_D^2 \\ &\leq \|\bar  \theta_1 - \bar {\underline \theta}\|_2^2
	+ (\tfrac{2\eta \mixcon \mixrate^\tau}{\sqrt{\mu}} + 4\eta^2\mixcon^2\mixrate^{2\tau} )\big(T\bbe[\|\Psi \theta^* - \Psi \widetilde \theta\|_D^2] + \tsum_{t=1}^T\bbe[\|\Psi \theta_t - \Psi  \theta^*\|_D^2]\big).
\end{align*}
By definition, we have $\widehat \theta_k = \frac{\sum_{t=1}^T\theta_t }{T }$; noting in addition that $\theta_1=\widetilde \theta$, we have
\begin{align}
	&T \eta(\monotonecon) \|\Psi\bbe[\widehat \theta_k]-\Psi \bar {\underline \theta}\|_D^2 \nn\\
	&\leq \tfrac{1}{\mu} \|\Psi\bbe[\widetilde \theta] - \Psi\bar {\underline\theta}\|_D^2+ (\tfrac{2\eta \mixcon \mixrate^\tau}{\sqrt{\mu}} + 4\eta^2\mixcon^2\mixrate^{2\tau} )\big(T\bbe[\|\Psi \theta^* - \Psi \widetilde \theta\|_D^2] + \tsum_{t=1}^T\bbe[\|\Psi \theta_t - \Psi  \theta^*\|_D^2]\big)\nn\\
	&\leq \tfrac{2}{\mu} \big(\|\Psi\bbe[\widetilde \theta] - \Psi\theta^*\|_D^2+\|\Psi \theta^* - \Psi\bar {\underline\theta}\|_D^2\big)\nn\\
	&\quad\quad\qquad+ (\tfrac{2\eta \mixcon \mixrate^\tau}{\sqrt{\mu}} + 4\eta^2\mixcon^2\mixrate^{2\tau} )\big(T\bbe[\|\Psi \theta^* - \Psi \widetilde \theta\|_D^2] + \tsum_{t=1}^T\bbe[\|\Psi \theta_t - \Psi  \theta^*\|_D^2]\big).
\end{align}
Recalling the upper bound of $\tsum_{t=1}^T\bbe[\|\Psi \theta_t - \Psi  \theta^*\|_D^2]/T$ in Ineq.~\eqref{cond_3} and Young's inequality $\|\Psi\bbe[\widehat \theta_k]-\Psi \theta^*\|_D^2\leq 2\|\Psi\bbe[\widehat \theta_k]-\Psi \bar {\underline \theta}\|_D^2 + 2\|\Psi\theta^*-\Psi \bar {\underline \theta}\|_D^2$, we have 
\begin{align*}
	\|\Psi\bbe[\widehat \theta_k]-\Psi \theta^*\|_D^2 &\leq \tfrac{4}{\mu T \eta(\monotonecon)}\|\Psi\bbe[\widetilde \theta] - \Psi\theta^*\|_D^2 +(2+\tfrac{4}{\mu T \eta(\monotonecon)}) \|\Psi \theta^* - \Psi\bar {\underline\theta}\|_D^2\nn\\
	&\quad+ \tfrac{4\eta \mixcon \mixrate^\tau/\sqrt{\mu} + 8\eta^2\mixcon^2\mixrate^{2\tau} }{ \eta(\monotonecon)}\big(\tfrac{3}{2}\bbe[\|\Psi \theta^* - \Psi \widetilde \theta\|_D^2]+\tfrac{W_1}{N_k}+ \tfrac{36\cbound^2}{N_k'(\monotonecon)^2\mu}+ \tfrac{\cbound^2}{4N_k'}\big),
\end{align*}
where $W_1:=22\trace\{(I_d-M)^{-1}\bar  \Sigma(I_d-M)^{-\top}\} +  \tfrac{4}{\mu(\monotonecon)^2}\bbe[\|(\Qsol - \Psi\theta^*) \|_D^2]$. Noticing that if the conditions \eqref{stepsize_1} and \eqref{stepsize_thm_2} are satisfied, we have that all assumptions of Theorem~\ref{thm_VRTD} are satisfied. Therefore, by utilizing Theorem~\ref{thm_VRTD} and $\widetilde \theta = \widehat \theta_{k-1}$, $N_k = N$ and  $N_k' = N'$, we obtain
\begin{align*}
	\|\Psi\bbe[\widehat \theta_k]-\Psi \theta^*\|_D^2\leq &\tfrac{4}{\mu T \eta(\monotonecon)}\|\Psi\bbe[\widetilde \theta] - \Psi\theta^*\|_D^2 +(2+\tfrac{4}{\mu T \eta(\monotonecon)}) \|\Psi \theta^* - \Psi\bar{\underline\theta}\|_D^2\nn\\
	&\quad+ \tfrac{4\eta \mixcon \mixrate^\tau/\sqrt{\mu} + 8\eta^2\mixcon^2\mixrate^{2\tau} }{ \eta(\monotonecon)}\big(\tfrac{3}{2^k}\|\Psi \theta^0 - \Psi \thetasol\|_D^2+\tfrac{11W_1}{2N}+ \tfrac{198\cbound^2}{N'(\monotonecon)^2\mu} +\tfrac{11\cbound^2}{8N'}\big),
\end{align*}
Finally, invoking the bound of $\|\Psi \theta^* - \Psi\bar {\underline\theta}\|_D^2$ in Lemma~\ref{bias_lemma_0} and the fact that $\tfrac{4}{\mu T \eta(\monotonecon)}\leq \tfrac{1}{2}$, we obtain
\begin{align*}
	\|\Psi&\bbe[\widehat \theta_k]-\Psi \theta^*\|_D^2\\ &\leq \tfrac{1}{2}\|\Psi\bbe[\widetilde \theta] - \Psi\theta^*\|_D^2 +\tfrac{15 \mixrate^{2\tau'}}{2\mu(\monotonecon)^2 (1-\mixrate^{\tau'})^2} \big( \tfrac{\skipcon^2\cdot \cbound^2}{N'^2} + \tfrac{2\mixcon^2\bbe[\|\Psi \widetilde\theta - \Psi \theta^*\|_D^2]+\mixcon^2\|\Psi \theta^* -\bar  Q^\pi\|_D^2 + 8\mixcons^2\cdot\cbound^2}{N^2}\big)\nn\\
	&\quad+ \tfrac{4\eta \mixcon \mixrate^\tau/\sqrt{\mu} + 8\eta^2\mixcon^2\mixrate^{2\tau} }{ \eta(\monotonecon)}\big(\tfrac{3}{2^k}\|\Psi \theta^0 - \Psi \thetasol\|_D^2+\tfrac{11W_1}{2N}+ \tfrac{198\cbound^2}{N'(\monotonecon)^2\mu} +\tfrac{11\cbound^2}{8N'}\big).
\end{align*}
Invoking that $N\geq \tfrac{1}{\mu(\monotonecon)^2}$ and $N'\geq \tfrac{1}{\mu(\monotonecon)^2}$, we have with constants $b_1$ and $b_2$,
\begin{align*}
	\|\Psi&\bbe[\widehat \theta_k]-\Psi \theta^*\|_D^2 \leq\tfrac{1}{2}\|\Psi\bbe[\widehat\theta_{k-1}] - \Psi\theta^*\|_D^2 + \big( b_1 \Cmax^2 \mixrate^{2\tau'} + \tfrac{b_2\Cmax\mixrate^\tau}{(\monotonecon)\sqrt{\mu}}\big) \left(\|\Psi \theta^0 - \Psi \thetasol\|_D^2 +  \tfrac{W_1}{N} + \cbound^2\right).
\end{align*}
By recursively using this inequality for epochs, we achieve the desired result.

\subsubsection{Proof of Lemma~\ref{bias_lemma_0}}\label{proof_bias_lemma_0}
\tli{First, by using Assumption~\ref{lemma_assump_rho}, we have
\begin{align}\label{bound_rho_bias_lemma21}
\bbe[\widetilde \rho] - \rho^* \leq \tfrac{\bar c}{N'} \tsum_{i=1}^{N'} \skipcon \mixrate^{i\cdot \tau'} \leq \tfrac{\bar c \skipcon \mixrate^{\tau'}}{N'(1-\mixrate^{\tau'})}.
\end{align}
Similarly, by using Lemma \ref{lemma_operator_bias_1} and Lemma~\ref{lemma_operator_bias_2}, we have
\begin{align}
\|\bbe[\widehat g(\theta^*, \rhosol) -  g(\theta^*, \rhosol)]\|_2 \leq \tfrac{1}{N'} \tsum_{i=1}^{N'} \mixcon \mixrate^{i\cdot \tau'}\|\Psi\theta^* - \bar Q^\pi\|_D\leq \tfrac{\mixcon \mixrate^{\tau'}}{N'(1-\mixrate^{\tau'})}\|\Psi\theta^* - \bar Q^\pi\|_D, \label{bias_at_optimal}
\end{align}
and
\begin{align}
\|\bbe[\widehat g(\widetilde\theta, \rhoest) - \widehat g(\theta^*, \rhosol) -  g(\widetilde\theta, \rhoest) + g(\theta^*, \rhosol)]\|_2 &\leq \tfrac{1}{N'} \tsum_{i=1}^{N'}[\mixcon \mixrate^{i\cdot \tau'}\bbe\|\Psi\widetilde\theta - \Psi\theta^*\|_D + \mixcons \mixrate^{i\cdot \tau'} \bbe|\rhoest - \rhosol|]\nn\\
& \leq \tfrac{\mixrate^{\tau'}}{N'(1-\mixrate^{\tau'})}[\mixcon \bbe\|\Psi\widetilde\theta - \Psi\theta^*\|_D + 4 \mixcons \cdot \bar c^2]
\label{bias_of_diff}
\end{align}
}
Next, we start deriving the bound for $\|\Psi \theta^* - \Psi \bar {\underline \theta}\|_D^2$. By taking expectation on both sides of Eq. \eqref{bound_0}, we obtain
\begin{align}\label{bound_0_bias}
	B^{\frac{1}{2}}(\theta^*-\bar {\underline \theta}) =(I_d-M)^{-1}B^{-\frac{1}{2}}\big(\Psi^\top D \mathbf{1}(\bbe[\rhoest] - \rhosol) + \bbe[\widehat g(\widetilde\theta, \rhoest) - g(\widetilde\theta, \rhoest)]\big).
\end{align}
Invoking the fact that $\Phi^\top D \Phi = I_d$ and $\Phi^\top = B^{-\frac{1}{2}}\Psi^\top$ yields
\begin{align*}
	\|\Psi \theta^* - \Psi \bar {\underline \theta}\|_D^2 &= \|(I_d-M)^{-1}B^{-\frac{1}{2}}\big(\Psi^\top D \mathbf{1}(\bbe[\rhoest] - \rhosol) + \bbe[\widehat g(\widetilde\theta, \rhoest) - g(\widetilde\theta, \rhoest)]\big)\|_2^2\\
	&\overset{(i)}\leq  3(\bbe[\rhoest] - \rhosol)^2\cdot\|(I_d-M)^{-1}B^{-\frac{1}{2}}\Psi^\top D \mathbf{1}\|_2^2+ 3\|(I_d-M)^{-1}B^{-\frac{1}{2}}\bbe[\widehat g(\theta^*, \rhosol) -  g(\theta^*, \rhosol)]\big\|_2^2\\
	&\qquad + 3\|(I_d-M)^{-1}B^{-\frac{1}{2}}\bbe[\widehat g(\widetilde\theta, \rhoest) - \widehat g(\theta^*, \rhosol) -  g(\widetilde\theta, \rhoest) + g(\theta^*, \rhosol)]\big\|_2^2\\
 &\overset{(ii)}\leq \tfrac{3}{(\monotonecon)^2\mu}\cdot (\bbe[\rhoest] - \rhosol)^2 + \tfrac{3}{(\monotonecon)^2\mu}\|\bbe[\widehat g(\theta^*, \rhosol) -  g(\theta^*, \rhosol)]\big\|_2^2\\
	&\qquad+ \tfrac{3}{(\monotonecon)^2\mu}\|\bbe[\widehat g(\widetilde\theta, \rhoest) - \widehat g(\theta^*, \rhosol) -  g(\widetilde\theta, \rhoest) + g(\theta^*, \rhosol)]\big\|_2^2\\
	&\overset{(iii)}\leq \tfrac{3}{(\monotonecon)^2\mu}\cdot\big(\tfrac{\cbound\skipcon\mixrate^{\tau'}}{N' (1-\mixrate^{\tau'})}\big)^2  + \tfrac{3}{(\monotonecon)^2\mu} \big(\tfrac{ \mixrate^{\tau'}}{N (1-\mixrate^{\tau'})} \big)^2 \mixcon^2\|\Psi \theta^* -\bar  Q^\pi\|_D^2\\
	&\qquad+ \tfrac{3}{(\monotonecon)^2\mu}\cdot \big(\tfrac{ \mixrate^{\tau'}}{N (1-\mixrate^{\tau'})} \big)^2\cdot(2\mixcon^2\bbe[\|\Psi \widetilde\theta - \Psi \theta^*\|_D^2]+8\mixcons^2\cdot\cbound^2)
\end{align*}
\tli{where step (i) follows from Young's inequality; step (ii) follows from Lemma~\ref{lemma_inverse_operator}, $\|B^{-\frac{1}{2}}\|_2 \leq \mu^{-\frac{1}{2}}$ and using Jensen's inequality $\|\Psi^\top D \mathbf{1}\|_2^2\leq \sum_{s\in\calS, a \in \calA} \nu(s)\pi(a|s) \|\psi(s,a)\|_2^2 \leq 1$; step (iii) follows from Ineqs.~\eqref{bound_rho_bias_lemma21}, \eqref{bias_at_optimal}, and \eqref{bias_of_diff} with Young's inequality. Then the desired result immediately follows from the above inequality.} 

\subsection{Proof of Lemma \ref{monotone_constant_perturb}}\label{proof_of_lemma_perturb}
\proof{Proof.}
	We first prove Ineq. \eqref{lemma_monotone_before_perturb}. It should be noted that $\|\cdot\|_D$ is a semi-norm when there exists $(s,a)\in \calS\times \calA$ such that $\pi(a|s)=0$. However, after removing the state-action pairs where $\pi(a|s)=0$, we got an irreducible Markov chain with the state-action pairs left.
	Recalling the fact that $\mathcal{I}\cap \mathbb{S}=\phi$, then the proof of Ineq. \eqref{lemma_monotone_before_perturb} follows from the same argument as the proof of Lemma~\ref{monotone_constant}. 
	
	Next we prove Ineq. \eqref{lemma_monotone_perturb}. The basic idea of this proof is to utilize Ineq.~\eqref{lemma_monotone_before_perturb} and the construction of the perturbed policy $\widetilde \pi$ to prove the desired lower bound (Ineq.~\eqref{lemma_monotone_perturb}) in the weighted $\ell_2$-norm $\|\cdot\|_{\widetilde D}$.
	Towards this end, we first define a vector $\widehat v \in \bbr^{|\calS|\times |\calA|}$ as $\widehat v(s,a)=\nu(s)\underline \pi$ if $a \notin \calA_s$ and $\widehat v(s,a)=0$ otherwise. Let $\widehat D := \diag(\widehat v)$. Then it is easy to see that $D + \widehat D \succeq \widetilde D$. Recalling the definition of $\alpha(\underline \pi)$, we have for all $s\in \calS$, $a\in \calA$,
	\begin{align}\label{monotone_perturb_step_1}
		\widetilde \pi(a|s) \geq \alpha(\underline \pi)\pi(a|s)+ \mathbb{I}_{\{a\notin \calA_s\}}(\underline \pi - \alpha(\underline \pi) \pi(a|s)).
	\end{align}
	Now we write
	\begin{align}\label{step_0_monotone}
		x^\top \widetilde D (I-P)x &= \bbe\left[x(s,a)^2-x(s,a)x(s',a')\big|s\sim \nu, a\sim\widetilde \pi(\cdot|s), s'\sim\mathsf{P}(\cdot|s,a),a'\sim \pi(\cdot|s')\right]\nn\\
		&=\tfrac{1}{2}\underbrace{\bbe\big[\big(x(s,a)-x(s',a')\big)^2|s\sim \nu, a\sim\widetilde \pi(\cdot|s), s'\sim\mathsf{P}(\cdot|s,a),a'\sim \pi(\cdot|s')\big]}_{R_1}\nn\\
		&\quad+ \tfrac{1}{2}\|x\|^2_{\widetilde D} - \tfrac{1}{2} \underbrace{\bbe\big[x(s',a')^2\big|s\sim \nu, a\sim\widetilde \pi(\cdot|s), s'\sim\mathsf{P}(\cdot|s,a),a'\sim \pi(\cdot|s')\big]}_{R_2}.
	\end{align}
	Next we lower bound $R_1$ and upper bound $R_2$ separately. Invoking Ineq. \eqref{monotone_perturb_step_1}, we can write
	\begin{align}\label{step_1_monotone}
		R_1 &\geq \alpha(\underline \pi)\bbe\big[\big(x(s,a)-x(s',a')\big)^2|s\sim \nu, a\sim\pi(\cdot|s), s'\sim\mathsf{P}(\cdot|s,a),a'\sim \pi(\cdot|s')\big]\nn\\
		&\quad+ \tsum_{s\in\calS}\nu(s) \tsum_{a \notin \calA_s}  (\underline \pi - \alpha(\underline \pi)\pi(a|s)) \tsum_{s'\in \calS}\mathsf{P}(s'|s,a)\tsum_{a'\in \calA}\pi(a'|s') \big(x(s,a)-x(s',a')\big)^2\nn\\
		&\overset{(i)}\geq 2(\monotonecon)\alpha(\underline \pi)\|x\|_D^2 + \tfrac{\underline \pi}{2}\tsum_{s\in\calS}\nu(s) \tsum_{a \notin \calA_s}    \tsum_{s'\in \calS}\mathsf{P}(s'|s,a)\tsum_{a'\in \calA}\pi(a'|s') \big(x(s,a)-x(s',a')\big)^2\nn\\
		&\overset{(ii)}\geq 2(\monotonecon)\alpha(\underline \pi)\|x\|_D^2 + \tfrac{\underline \pi}{2}\tsum_{s\in\calS}\nu(s) \tsum_{a \notin \calA_s}   \tsum_{s'\in \calS}\mathsf{P}(s'|s,a)\tsum_{a'\in \calA}\pi(a'|s') \big(\tfrac{1}{2}x(s,a)^2-x(s',a')^2\big)\nn\\
		&\overset{(iii)}\geq 2(\monotonecon)\alpha(\underline \pi)\|x\|_D^2 + \tfrac{1}{4}\|x\|_{\widehat D}^2- \tfrac{\underline \pi}{2} \big(|\calA|-\min_{s\in \calS}|\calA_s|\big)\tsum_{s'\in \calS}\tsum_{a'\in \calA}\pi(a'|s') x(s',a')^2\nn\\
		& \overset{(iv)}\geq \tfrac{31(\monotonecon)\alpha(\underline \pi)}{16}\|x\|_D^2+ \tfrac{1}{4}\|x\|_{\widehat D}^2\overset{(iv)}\geq \min\big\{\tfrac{31(\monotonecon)\alpha(\underline \pi)}{16}, \tfrac{1}{4}\big\} \|x\|_{\widetilde D}^2,
	\end{align}
	where step (i) follows from Eq. \eqref{proof_monotone_step_1} and \eqref{lemma_monotone_before_perturb}, step (ii) follows from the fact that $(x-y)^2=x^2-2xy+y^2\geq \tfrac{x^2}{2}-y^2$, step (iii) follows from $\mathsf{P}(s'|s,a)\leq 1$ and $\sum_{s\in\calS}\nu(s) = 1$, and step (iii) follows from that the definition of $\underline \pi$ ensures $\underline \pi\leq\tfrac{\alpha(\underline \pi)(\monotonecon)\min_{s\in\calS}\nu(s)}{8(|\calA|- \min_{s\in\calS}|\calA_s|)}$, and step (iv) follows from $D + \widehat D \succeq \widetilde D$. 
	
	Next we upper bound $R_2$. Write
	\begin{align}\label{per_step_1}
		R_2 
		& = \tsum_{s'\in \calS}\tsum_{a'\in\calA}\pi(a'|s')x(s',a')^2 ~\left(\tsum_{s\in\calS}\tsum_{a\in \calA} \nu(s) \pi(a|s)\sfP(s'|s,a)\right)\nn\\
		&\quad + \tsum_{s'\in \calS}\tsum_{a'\in\calA}\pi(a'|s')x(s',a')^2 ~\left(\tsum_{s\in\calS}\tsum_{a\in \calA} \nu(s)\big(\widetilde \pi(a|s)-\pi(a|s)\big)\sfP(s'|s,a)\right)\nn\\
		&\overset{(i)}= \tsum_{s'\in \calS}\tsum_{a'\in\calA}\nu(s')\pi(a'|s')x(s',a')^2\nn\\ &\quad+\underbrace{ \tsum_{s'\in \calS}\tsum_{a'\in\calA}\pi(a'|s')x(s',a')^2 ~\left(\tsum_{s\in\calS}\tsum_{a\in \calA} \nu(s)\big(\widetilde \pi(a|s)-\pi(a|s)\big)\sfP(s'|s,a)\right)}_{R_3},
	\end{align}
	where step (i) follows from the fact that $\nu$ is the stationary distribution under policy $\pi$. Next we bound $R_3$ on the RHS of the above equality. 
	\begin{align*}
		R_3 &\overset{(i)}\leq  \tsum_{s'\in \calS}\tsum_{a'\in\calA}\pi(a'|s')x(s',a')^2 ~\left(\tsum_{s\in\calS}\tsum_{a\notin \calA_s} \nu(s)\big(\widetilde \pi(a|s)-\pi(a|s)\big)\sfP(s'|s,a)\right)\\
		& \overset{(ii)}\leq \tsum_{s'\in \calS}\tsum_{a'\in\calA}\pi(a'|s')x(s',a')^2 ~\left(\tsum_{s\in\calS}\nu(s) (|\calA| - |\calA_s|)\underline \pi\right)\\
		& \overset{(iii)}\leq \tfrac{\alpha(\underline \pi)(\monotonecon)}{8}\tsum_{s'\in \calS}\tsum_{a'\in\calA}\nu(s')\pi(a'|s')x(s',a')^2,
	\end{align*}
	where step (i) follows from that $\widetilde \pi(a|s)-\pi(a|s)>0$ only when $a\notin \calA_s$, step (ii) follows from $\mathsf{P}(s'|s,a)\leq 1$, and step (iii) follows from $\underline \pi \leq \tfrac{\alpha(\underline \pi)(\monotonecon)\min_{s\in\calS}\nu(s)}{8(|\calA|- \min_{s\in\calS}|\calA_s|)}$. Combining the bound of $R_3$ with Eq. \eqref{per_step_1}, we obtain
	\begin{align} \label{per_step_2}
		R_2&\leq\big(1+\tfrac{\alpha(\underline \pi)(\monotonecon)}{8} \big)\tsum_{s\in \calS}\tsum_{a\in\calA}\nu(s)\pi(a|s)x(s,a)^2\nn\\
		&\overset{(i)} \leq\big(1+\tfrac{\alpha(\underline \pi)(\monotonecon)}{8} \big)\tsum_{s\in \calS}\tsum_{a\in\calA}\nu(s)\widetilde \pi(a|s)x(s,a)^2\nn\\
		&\quad+ \big(1+\tfrac{\alpha(\underline \pi)(\monotonecon)}{8} \big)\tsum_{s\in \calS}\tsum_{a\in\calA_s}\nu(s)\big(\pi(a|s)- \widetilde \pi(a|s)\big)x(s,a)^2\nn\\
		&\overset{(ii)} \leq\big(1+\tfrac{\alpha(\underline \pi)(\monotonecon)}{8} \big)\|x\|_{\widetilde D}^2+ \big(1+\tfrac{\alpha(\underline \pi)(\monotonecon)}{8} \big)\big(\tfrac{1}{\alpha(\underline \pi)}-1\big)\|x\|_{\widetilde D}^2\nn\\
		&\overset{(iii)} \leq\big(1+\tfrac{\alpha(\underline \pi)(\monotonecon)}{4} \big)\|x\|_{\widetilde D}^2.
	\end{align}
	where step (i) follows from $\pi(a|s)-\widetilde \pi(a|s) \geq 0$ only when $a\in \calA_s$, step (ii) follows from Ineq. \eqref{monotone_perturb_step_1}, and step (iii) follows from that the definition of $\underline \pi$ and $\alpha(\underline \pi)$ ensures $\big(1+\tfrac{\alpha(\underline \pi)(\monotonecon)}{8} \big)\big(\tfrac{1}{\alpha(\underline \pi)}-1\big)\leq \tfrac{\alpha(\underline \pi)(\monotonecon)}{8}$. 
	Combining the Ineqs. \eqref{step_0_monotone}, \eqref{step_1_monotone} and \eqref{per_step_2} yields 
	\begin{align*}
		x^\top \widetilde D (I-P)x
		\geq \big(\min\{\tfrac{31(\monotonecon)\alpha(\underline \pi)}{32}, \tfrac{1}{8}\} + \tfrac{1}{2}-\big(\tfrac{1}{2}+\tfrac{\alpha(\underline \pi)(\monotonecon)}{8} \big)\big)\|x\|_{\widetilde D}^2= \min\big\{\tfrac{27(\monotonecon)\alpha(\underline \pi)}{32}, \tfrac{1-\alpha(\underline \pi)(\monotonecon)}{8}\big\}\|x\|_{\widetilde D}^2,
	\end{align*}
	which completes the proof.
\endproof

\section{Extension of Lemma~\ref{monotone_constant_perturb} to DMDPs}\label{extension_to_DMDP}
As mentioned in Section~\ref{sec:insufficient_random}, the EVRTD method naturally extends to the DMDP setting. To illustrate the extension, we proof the analog of Lemma~\ref{monotone_constant_perturb} in the DMDP setting.
\begin{lemma}\label{monotone_constant_perturb_DMDP}
	Let $\gamma\in(0,1)$ denote the discount factor for the underlying discounted Markov decision process. 
	Assume $|\calS|\geq 2$. For $s\in\calS$, if we take 
	\begin{align}\label{underline_pi_DMDP}
		\underline \pi \leq  \tfrac{(1-\gamma)\min_{s\in\calS}\nu(s)}{(|\calA|- \min_{s\in\calS}|\calA_s|)(8+(1-\gamma)\min_{s\in\calS}\nu(s))},  
	\end{align}
	and construct the perturbed policy $\widetilde \pi$ as \eqref{construct_tilde_pi}, then
	\begin{align}\label{lemma_monotone_perturb_DMDP}
		\inf_{x \in \mathbb{S}, \|x\|_{\widetilde D}=1} x^\top \widetilde D (I-\gamma P)x \geq \tfrac{3}{4}(1-\gamma).
	\end{align}
\end{lemma}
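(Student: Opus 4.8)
The plan is to exploit the fact that in the discounted setting the required spectral gap is supplied directly by the discount factor $\gamma$, so that, unlike the average-reward proof of Lemma~\ref{monotone_constant_perturb}, there is no need to invoke irreducibility or to lower bound the ``Dirichlet form'' term $R_1=\bbe[(x(s,a)-x(s',a'))^2]$ at all. Working with respect to the sampling distribution of the perturbed scheme (namely $s\sim\nu$, $a\sim\widetilde\pi(\cdot|s)$, $s'\sim\sfP(\cdot|s,a)$, $a'\sim\pi(\cdot|s')$), I would start from the identity
\[
x^\top\widetilde D(I-\gamma P)x = \|x\|_{\widetilde D}^2 - \gamma\,\bbe[x(s,a)x(s',a')],
\]
and first control the cross term by the elementary inequality $-\gamma\,x(s,a)x(s',a')\geq -\tfrac{\gamma}{2}x(s,a)^2-\tfrac{\gamma}{2}x(s',a')^2$. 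This yields
\[
x^\top\widetilde D(I-\gamma P)x \geq \Big(1-\tfrac{\gamma}{2}\Big)\|x\|_{\widetilde D}^2 - \tfrac{\gamma}{2}R_2,\qquad R_2:=\bbe[x(s',a')^2].
\]

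The second step is to reuse, essentially verbatim, the upper bound on $R_2$ established inside the proof of Lemma~\ref{monotone_constant_perturb} and culminating in Ineq.~\eqref{per_step_2}. The crucial point is that the derivation of that bound never uses the value of the monotonicity constant $\monotonecon$ intrinsically, nor does it use the discount factor; it relies only on the stationarity of $\nu$ under $\pi$, the explicit construction~\eqref{construct_tilde_pi} of $\widetilde\pi$, and the smallness condition imposed on $\underline\pi$. Substituting $1-\gamma$ for $\monotonecon$ in that smallness condition, which is exactly the content of \eqref{underline_pi_DMDP}, the identical chain of estimates gives $R_2\leq\big(1+\tfrac{\alpha(\underline\pi)(1-\gamma)}{4}\big)\|x\|_{\widetilde D}^2$, where $\alpha(\underline\pi):=1-(|\calA|-\min_{s\in\calS}|\calA_s|)\underline\pi$.

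Combining the two displays, the coefficient simplifies to
\[
\Big(1-\tfrac{\gamma}{2}\Big)-\tfrac{\gamma}{2}\Big(1+\tfrac{\alpha(\underline\pi)(1-\gamma)}{4}\Big) = (1-\gamma)\Big(1-\tfrac{\gamma\,\alpha(\underline\pi)}{8}\Big)\geq \tfrac{7}{8}(1-\gamma)\geq\tfrac{3}{4}(1-\gamma),
\]
since $\gamma\,\alpha(\underline\pi)\leq 1$, which establishes \eqref{lemma_monotone_perturb_DMDP} (indeed with room to spare). The only genuine verification needed, and the step I would treat most carefully, is to confirm that \eqref{underline_pi_DMDP} implies both $\alpha(\underline\pi)\geq\tfrac{8}{9}$ and the sub-condition $\underline\pi\leq\tfrac{\alpha(\underline\pi)(1-\gamma)\min_{s\in\calS}\nu(s)}{8(|\calA|-\min_{s\in\calS}|\calA_s|)}$ that licenses borrowing the $R_2$ bound. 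Writing $\delta:=(1-\gamma)\min_{s\in\calS}\nu(s)\in(0,1)$, condition~\eqref{underline_pi_DMDP} reads $(|\calA|-\min_{s\in\calS}|\calA_s|)\underline\pi\leq\tfrac{\delta}{8+\delta}\leq\tfrac19$, giving $\alpha(\underline\pi)\geq\tfrac{8}{8+\delta}$; the sub-condition then follows because it is equivalent to $\tfrac{8}{8+\delta}\leq\alpha(\underline\pi)$. Thus the average-reward argument transfers intact under the replacement $\monotonecon\mapsto 1-\gamma$, with the discount factor taking over the role previously played by irreducibility.
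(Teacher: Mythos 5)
Your proposal is correct and follows essentially the same route as the paper's own proof: both arguments exploit the fact that the discount factor already supplies the spectral gap (so the Dirichlet-form term $R_1$ is never needed), and both hinge on reusing the bound $R_2\leq\big(1+\tfrac{\alpha(\underline\pi)(1-\gamma)}{4}\big)\|x\|_{\widetilde D}^2$ from Ineq.~\eqref{per_step_2} with $\monotonecon$ replaced by $1-\gamma$, under the same condition \eqref{underline_pi_DMDP}. The only difference is cosmetic: the paper controls the cross term $\gamma\, x^\top \widetilde D P x$ via Cauchy--Schwarz plus Jensen ($\|Px\|_{\widetilde D}^2\leq R_2$), whereas you apply AM--GM directly to $\bbe[x(s,a)x(s',a')]$, which in fact yields the marginally sharper constant $\tfrac{7}{8}(1-\gamma)$ in place of the paper's $(1-\gamma)\big(1-\tfrac{\gamma\alpha(\underline\pi)}{4}\big)\geq\tfrac{3}{4}(1-\gamma)$.
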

\proof{Proof.}
	First, we utilize the Cauchy-Schwarz inequality to obtain
	\begin{align}\label{cs_bound}
		x^\top \widetilde D (I-\gamma P) x = \|x\|_{\widetilde D}^2 -\gamma x^\top \widetilde D P x \geq  \|x\|_{\widetilde D}^2 -\gamma  \|P x\|_{\widetilde D}  \|x\|_{\widetilde D}.
	\end{align}
	Towards this end, the rest of this proof focus on establishing an upper bound on $ \|P x\|_{\widetilde D}$. We write
	\begin{align*}
		\|P x\|_{\widetilde D}^2 \leq \tsum_{s\in\calS} \tsum_{a\in \calA} \nu(s) \widetilde \pi(a|s) \left(\tsum_{s'\in \calS}\tsum_{a'\in \calA} \mathsf{P}(s'|s,a)\pi(a'|s') x(s',a')^2 \right),
	\end{align*}
	which follows from Jensen's inequality. Notice that the RHS of the above inequality is $R_2$ in the proof of Lemma~\ref{monotone_constant_perturb}. Applying the upper bound of $R_2$ (with $\beta$ replaced by $\gamma$) in \eqref{per_step_2} gives us
	\begin{align*}
		\|P x\|_{\widetilde D}^2 \leq\big(1+\tfrac{\alpha(\underline \pi)(1-\gamma)}{4} \big)\|x\|_{\widetilde D}^2,
	\end{align*}
	where $\alpha(\underline \pi) := 1- (|\calA| - \min_{s\in \calS}|\calA_s|)\underline \pi\leq 1$. Therefore, we have $\|P x\|_{\widetilde D} \leq \big(1+\tfrac{\alpha(\underline \pi)(1-\gamma)}{4} \big)\|x\|_{\widetilde D}$. Substituting this inequality into Ineq.~\eqref{cs_bound} yields
	\begin{align*}
		x^\top \widetilde D (I-\gamma P) x \geq \|x\|_{\widetilde D}^2 -\gamma  \big(1+\tfrac{\alpha(\underline \pi)(1-\gamma)}{4} \big)\|x\|_{\widetilde D}^2 = \big(1-\tfrac{\gamma\alpha(\underline \pi)}{4} \big)(1-\gamma)\|x\|_{\widetilde D}^2\geq \tfrac{3}{4}(1-\gamma)\|x\|_{\widetilde D}^2,
	\end{align*}
	which completes the proof.
\endproof

\end{document}